\let\oldReturn\Return
\renewcommand{\Return}{\State\oldReturn}
\newenvironment{proofs}{%
  \proof}{\endproof}
\def\setstretch#1{\renewcommand{\baselinestretch}{#1}}
\newcommand{\xc}[1]{{\color{black!10!blue} [Xiang: #1]}}
\newcommand{\indep}{\mathrel{\text{\scalebox{1.07}{$\perp\mkern-10mu\perp$}}}}
\newcommand{\tmin}{t_{\textrm{min}}}
\newcommand{\tmax}{t_{\textrm{max}}}
\def\onedot{$\mathsurround0pt\ldotp$}
\def\eg{\emph{e.g}\onedot, }
\def\ie{\emph{i.e}\onedot, }
\newtheorem{theorem}{Theorem}
\newtheorem{definition}{Definition}
\newtheorem{lemma}{Lemma}
\def\Figref#1{Fig.~\ref{#1}}
\def\eqref#1{equation~\ref{#1}}
\def\Eqref#1{Eq.~(\ref{#1})}
\def\1{\bm{1}}
\DeclareMathAlphabet{\mathsfit}{\encodingdefault}{\sfdefault}{m}{sl}
\SetMathAlphabet{\mathsfit}{bold}{\encodingdefault}{\sfdefault}{bx}{n}
\def\gN{{\mathcal{N}}}
\newcommand\numberthis{\addtocounter{equation}{1}\tag{\theequation}}
\newcommand{\approxeps}{\epsilon_{\text{approx}}}
\newcommand*\lin[1]{\left\langle #1 \right\rangle}
\newcommand*\lrb[1]{\left[ #1 \right]}
\newcommand*\lrbb[1]{\left\{ #1 \right\}}
\newcommand*\lrn[1]{\left\| #1 \right\|}
\newcommand*\lrp[1]{\left( #1 \right)}
\newcommand\E[1]{\mathbb{E}\lrb{{#1}}}
\newcommand*\ind[1]{{\mathbbm{1}\lrbb{#1}}}
\newcommand{\bx}{\overline{x}}
\newcommand{\by}{\overline{y}}
\newcommand{\xe}{x^{EDM}}
\newcommand{\Tt}{{T^*}}
\newcommand{\eapprox}{{\epsilon_{approx}}}
\newcommand{\Restart}{\textrm{Restart}}
\newcommand{\bRestart}{{\textrm{Restart}_\theta}}
\newcommand{\ODE}{\textrm{ODE}}
\newcommand{\bODE}{{\textrm{ODE}_\theta}}
\newcommand{\SDE}{\textrm{SDE}}
\newcommand{\bSDE}{{\textrm{SDE}_\theta}}
\newcommand{\st}{s_{\theta}}
\newcommand{\xr}{x^{\leftarrow}}
\newcommand{\bxr}{\overline{x}^{\leftarrow}}
\newcommand{\byr}{\overline{y}^{\leftarrow}}
\newcommand{\bv}{\overline{v}}
\newcommand{\Br}{B^{\leftarrow}}
\newsavebox\tmpbox
\title{Restart Sampling for Improving Generative Processes}
\author{Yilun Xu\thanks{Equal Contribution.}\\
MIT\\
\texttt{ylxu@mit.edu}\\
\And
Mingyang Deng\footnotemark[1]\\
MIT\\
\texttt{dengm@mit.edu}
\And
Xiang Cheng\footnotemark[1]\\
MIT\\
\texttt{chengx@mit.edu}
  \And
  Yonglong Tian \\
  Google Research \\
  \texttt{yonglong@google.com} \\
  \AND
  Ziming Liu \\
  MIT \\
  \texttt{zmliu@mit.edu} \\
  \And
  Tommi Jaakkola \\
  MIT \\
  \texttt{tommi@csail.mit.edu}
}
\begin{document}

\maketitle

\begin{abstract}

Generative processes that involve solving differential equations, such as diffusion models, frequently necessitate balancing speed and quality. ODE-based samplers are fast but plateau in performance while SDE-based samplers deliver higher sample quality at the cost of increased sampling time.  We attribute this difference to sampling errors: ODE-samplers involve smaller discretization errors while stochasticity in SDE contracts accumulated errors. Based on these findings, we propose a novel sampling algorithm called \textit{Restart} in order to better balance discretization errors and contraction. The sampling method alternates between adding substantial noise in additional forward steps and strictly following a backward ODE. Empirically, Restart sampler surpasses previous SDE and ODE samplers in both speed and accuracy. Restart not only outperforms the previous best SDE results, but also accelerates the sampling speed by 10-fold / 2-fold on CIFAR-10 / ImageNet $64{\times} 64$. In addition, it attains significantly better sample quality than ODE samplers within comparable sampling times. Moreover, Restart better balances text-image alignment/visual quality versus diversity than previous samplers in the large-scale text-to-image Stable Diffusion model pre-trained on LAION $512{\times} 512$.  Code is available at \url{https://github.com/Newbeeer/diffusion_restart_sampling}


\end{abstract}

\section{Introduction}
Deep generative models based on differential equations, such as diffusion models and Poission flow generative models, have emerged as powerful tools for modeling high-dimensional data, from image synthesis~\cite{Song2020ScoreBasedGM, Ho2020DenoisingDP, Karras2022ElucidatingTD, Xu2022PoissonFG, Xu2023PFGMUT} to biological data~\cite{antipfgm, Watson2022BroadlyAA}. These models use iterative backward processes that gradually transform a simple distribution~(\eg Gaussian in diffusion models) into a complex data distribution by solving a differential equations. The associated vector fields~(or drifts) driving the evolution of the differential equations are predicted by neural networks. The resulting sample quality can be often improved by enhanced simulation techniques 
but at the cost of longer sampling times.



Prior samplers for simulating these backward processes can be categorized into two groups: ODE-samplers whose evolution beyond the initial randomization is deterministic, and SDE-samplers where the generation trajectories are stochastic. Several works~\cite{Song2020ScoreBasedGM, JolicoeurMartineau2021GottaGF, Karras2022ElucidatingTD} show that these samplers demonstrate their advantages in different regimes, as depicted in \Figref{fig:main-line}. ODE solvers~\cite{Song2020DenoisingDI, lu2022dpm, Karras2022ElucidatingTD} result in smaller discretization errors, allowing for decent sample quality even with larger step sizes (\ie fewer number of function evaluations~(NFE)). However, their generation quality plateaus rapidly. In contrast, SDE achieves better quality in the large NFE regime, albeit at the expense of increased sampling time. To better understand these differences, we theoretically analyze SDE performance: the stochasticity in SDE contracts accumulated error, which consists of both the discretization error along the trajectories as well as the approximation error of the learned neural network relative to the ground truth drift (\eg score function in diffusion model~\cite{Song2020ScoreBasedGM}). The approximation error dominates when NFE is large (small discretization steps), explaining the SDE advantage in this regime. Intuitively, the stochastic nature of SDE helps "forget" accumulated errors from previous time steps.

Inspired by these findings, we propose a novel sampling algorithm called \textit{Restart}, which combines the advantages of ODE and SDE. As illustrated in \Figref{fig:main-demo}, the Restart sampling algorithm involves $K$ repetitions of two subroutines in a pre-defined time interval: a \textit{Restart forward process} that adds a substantial amount of noise, akin to "restarting" the original backward process, and a \textit{Restart backward process} that 
runs the backward ODE. The Restart algorithm separates the stochasticity from the drifts, and the amount of added noise in the Restart forward process is significantly larger than the small single-step noise interleaving with drifts in previous SDEs such as \cite{Song2020ScoreBasedGM, Karras2022ElucidatingTD}, thus amplifying the contraction effect on accumulated errors. By repeating the forward-backward cycle $K$ times, the contraction effect introduced in each Restart iteration is further strengthened. The deterministic backward processes allow Restart to reduce discretization errors, thereby enabling step sizes comparable to ODE. To maximize the contraction effects in practice, we typically position the Restart interval towards the end of the simulation, where the accumulated error is larger. Additionally, we apply multiple Restart intervals to further reduce the initial errors in more challenging tasks.

Experimentally, Restart consistently surpasses previous ODE and SDE solvers in both quality and speed over a range of NFEs, datasets, and pre-trained models. Specifically, Restart accelerates the previous best-performing SDEs by $10\times$ fewer steps for the same FID score on CIFAR-10 using VP~\cite{Song2020ScoreBasedGM} ($2\times$ fewer steps on ImageNet $64\times 64$ with EDM~\cite{Karras2022ElucidatingTD}), and outperforms fast ODE solvers~(\eg DPM-solver~\citep{lu2022dpm}) even in the small NFE regime. When integrated into previous state-of-the-art pre-trained models, Restart further improves performance, achieving FID scores of 1.88 on unconditional CIFAR-10 with PFGM++~\cite{Xu2023PFGMUT}, and 1.36 on class-conditional ImageNet $64\times 64$ with EDM. To the best of our knowledge, these are the best FID scores obtained on commonly used UNet architectures for diffusion models without additional training. We also apply Restart to the practical application of text-to-image Stable Diffusion model~\cite{Rombach2021HighResolutionIS} pre-trained on LAION $512\times 512$. Restart more effectively balances text-image alignment/visual quality~(measured by CLIP/Aesthetic scores) and diversity~(measured by FID score) with a varying classifier-free guidance strength, compared to previous samplers.

Our contributions can be summarized as follows: \textbf{(1)} We investigate ODE and SDE solvers and theoretically demonstrate the contraction effect of stochasticity via an upper bound on the Wasserstein distance between generated and data distributions~(Sec~\ref{sec:dilemma}); \textbf{(2)} We introduce the Restart sampling, which better harnesses the contraction effect of stochasticity while allowing for fast sampling. The sampler results in a smaller Wasserstein upper bound (Sec~\ref{sec:method}); \textbf{(3)} Our experiments are consistent with the theoretical bounds and highlight Restart's superior performance compared to previous samplers on standard benchmarks in terms of both quality and speed. Additionally, Restart improves the trade-off between key metrics on the Stable Diffusion model~(Sec~\ref{sec:exp}).

\begin{figure*}
    \centering
    \subfigure[]{\includegraphics[width=0.55\textwidth]{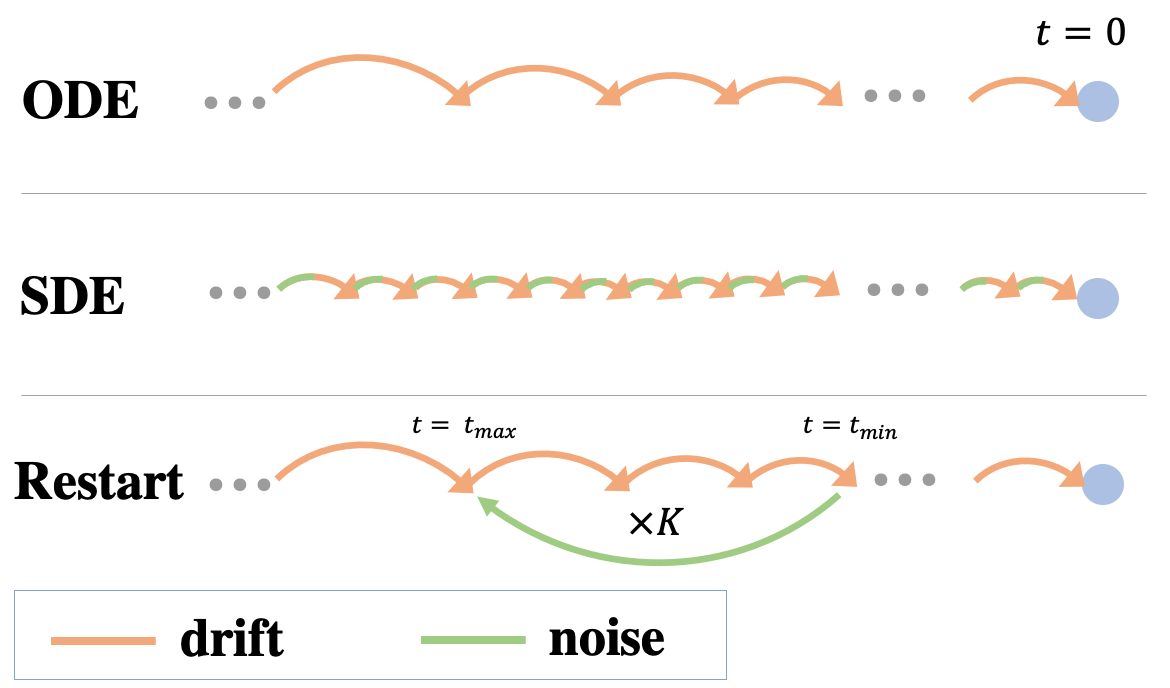}\label{fig:main-demo}}
    \subfigure[]{\includegraphics[width=0.40\textwidth]{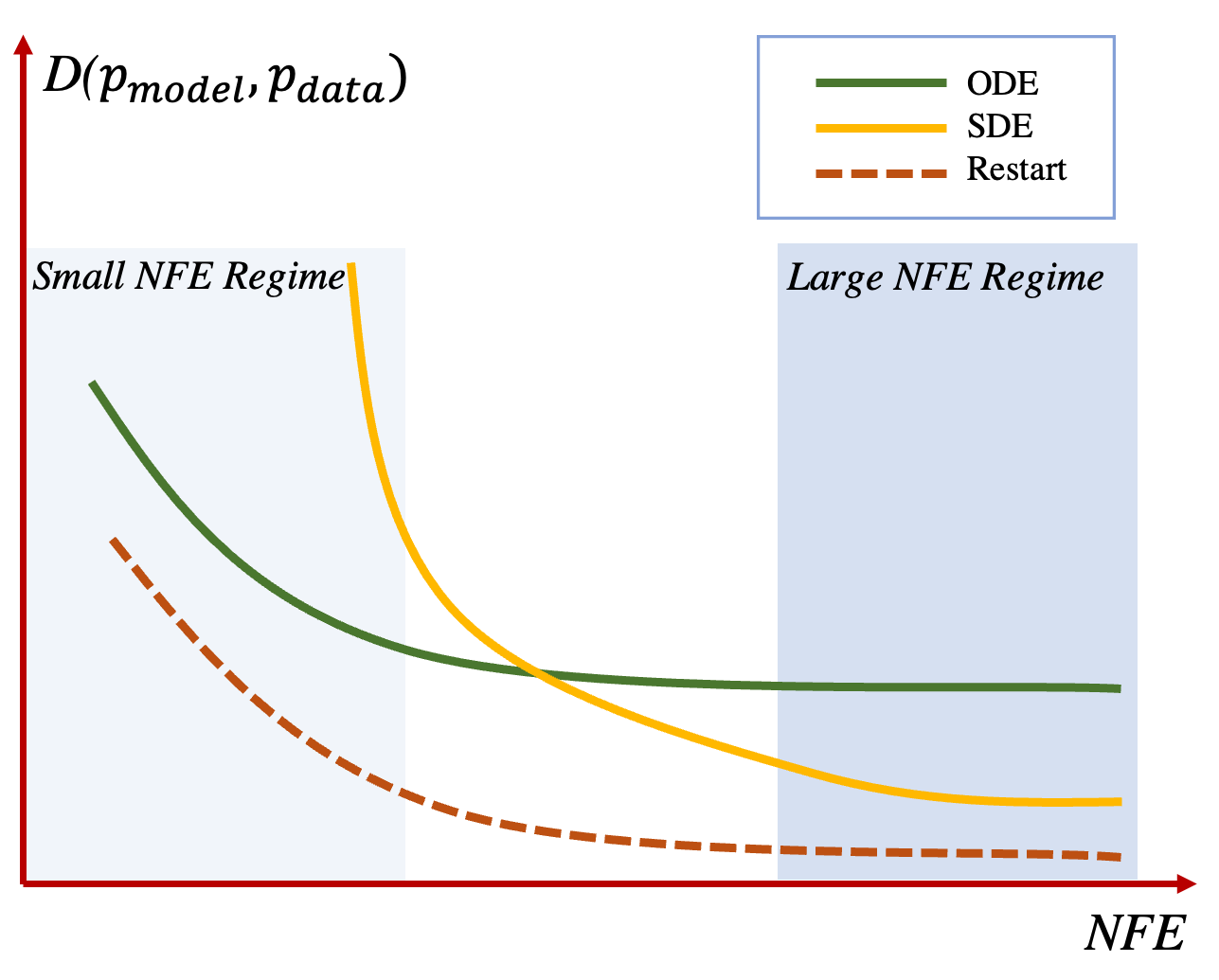}\label{fig:main-line}}
    \vspace{-3pt}
    \caption{\textbf{(a)} Illustration of the implementation of drift and noise terms in ODE, SDE, and Restart. \textbf{(b)}
    Sample quality versus number of function evaluations~(NFE) for different approaches. ODE~(\textcolor{black!70!green}{Green}) provides fast speeds but attains only mediocre quality, even with a large NFE. SDE~(\textcolor{orange!50!yellow}{Yellow}) obtains good sample quality but necessitates substantial sampling time. In contrast to ODE and SDE, which have their own winning regions, Restart~(\textcolor{black!30!red}{Red}) achieves the best quality across all NFEs.}
    \label{fig:main}
    \vspace{-12pt}
\end{figure*}

\vspace{-3pt}
\section{Background on Generative Models with Differential Equations}
\vspace{-3pt}
Many recent successful generative models have their origin in physical processes, including diffusion models~\cite{Ho2020DenoisingDP, Song2020ScoreBasedGM, Karras2022ElucidatingTD} and Poisson flow generative models~\cite{Xu2022PoissonFG, Xu2023PFGMUT}. These models involve a forward process that transforms the data distribution into a chosen smooth distribution, and a backward process that iteratively reverses the forward process. For instance, in diffusion models, the forward process is the diffusion process with no learned parameters:
\begin{equation*}
    \mathrm{d}x = \sqrt{2\dot{\sigma}(t)\sigma(t)}\mathrm{d}W_t,
\end{equation*}
where $\sigma(t)$ is a predefined noise schedule increasing with $t$, and $W_t\in\mathbb{R}^d$ is the standard Wiener process. For simplicity, we omit an additional scaling function for other variants of diffusion models as in EDM~\cite{Karras2022ElucidatingTD}. Under this notation, the marginal distribution at time $t$ is the convolution of data distribution $p_0=p_{\textrm{data}}$ and a Gaussian kernel, \ie $p_t = p_0 * \gN(\bm{0}, \sigma^2(t)\bm{I}_{d\times d})$. The prior distribution is set to $\gN(\bm{0}, \sigma^2(T)\bm{I}_{d\times d}) $ since $p_T$ is approximately Gaussian with a sufficiently large $T$. Sampling of diffusion models is done via a reverse-time SDE~\citep{anderson1982reverse} or a marginally-equivalent ODE~\cite{Song2020ScoreBasedGM}:
\begin{align*}
    &\text{(SDE)}\qquad \mathrm{d}x = -2\dot{\sigma}(t)\sigma(t)\nabla_x\log p_t(x)dt + \sqrt{2\dot{\sigma}(t)\sigma(t)}\mathrm{d}W_t \numberthis\label{eq: diff-sde}\\
       &\text{(ODE)}\qquad \mathrm{d}x = -\dot{\sigma}(t)\sigma(t)\nabla_x\log p_t(x)dt \numberthis\label{eq: diff-ode}
\end{align*}
where $\nabla_x\log p_{t}(x)$ in the drift term is the score of intermediate distribution at time $t$. W.l.o.g we set $\sigma(t)=t$ in the remaining text, as in \cite{Karras2022ElucidatingTD}. Both processes progressively recover $p_0$ from the prior distribution $p_T$ while sharing the same time-dependent distribution $p_t$. In practice, we train a neural network $s_\theta(x, t)$ to estimate the score field $\nabla_x\log p_t(x)$ by minimizing the denoising score-matching loss~\cite{Vincent2011ACB}. We then substitute the score $\nabla_x\log p_{t}(x)$ with $s_\theta(x, t)$ in the drift term of above backward SDE~(\Eqref{eq: diff-sde})/ODE~(\Eqref{eq: diff-ode}) for sampling.

Recent work inspired by electrostatics has not only challenged but also integrated diffusion models, notably PFGM/PFGM++, enhances performance in both image and antibody generation~\cite{Xu2022PoissonFG, Xu2023PFGMUT, antipfgm}. They interpret data as electric charges in an augmented space, and the generative processes involve the simulations of differential equations defined by electric field lines. Similar to diffusion models, PFGMs train a neural network to approximate the electric field in the augmented space.

\section{Explaining SDE and ODE performance regimes}
\label{sec:dilemma}

To sample from the aforementioned generative models, a prevalent approach employs general-purpose numerical solvers to simulate the corresponding differential equations. This includes Euler and Heun's 2nd method~\cite{Ascher1998ComputerMF} for ODEs (e.g., \Eqref{eq: diff-ode}), and Euler-Maruyama for SDEs (e.g., \Eqref{eq: diff-sde}). Sampling algorithms typically balance two critical metrics: (1) the quality and diversity of generated samples, often assessed via the Fréchet Inception Distance (FID) between generated distribution and data distribution~\cite{Heusel2017GANsTB}~(lower is better), and (2) the sampling time, measured by the number of function evaluations (NFE). Generally, as the NFE decreases, the FID score tends to deteriorate across all samplers. This is attributed to the increased discretization error caused by using a larger step size in numerical solvers.

However, as illustrated in \Figref{fig:main-line} and observed in previous works on diffusion models~\cite{Song2020ScoreBasedGM, Song2020DenoisingDI, Karras2022ElucidatingTD}, the typical pattern of the quality vs time curves behaves differently between the two groups of samplers, ODE and SDE. When employing standard numerical solvers, ODE samplers attain a decent quality with limited NFEs, whereas SDE samplers struggle in the same small NFE regime. However, the performance of ODE samplers quickly reaches a plateau and fails to improve with an increase in NFE, whereas SDE samplers can achieve noticeably better sample quality in the high NFE regime. This dilemma raises an intriguing question: \textit{Why do ODE samplers outperform SDE samplers in the small NFE regime, yet fall short in the large NFE regime?}

The first part of the question is relatively straightforward to address: given the same order of numerical solvers, simulation of ODE has significantly smaller discretization error compared to the SDE. For example, the first-order Euler method for ODE results in a local error of $O(\delta^2)$, whereas the first-order Euler-Maruyama method for SDEs yeilds a local error of $O(\delta^{\frac{3}{2}})$ (see \eg Theorem 1 of \cite{dalalyan2019user}), where $\delta$ denotes the step size. As $O(\delta^{\frac{3}{2}})\gg O(\delta^2)$, ODE simulations exhibit lower sampling errors than SDEs, likely causing the better sample quality with larger step sizes in the small NFE regime.

In the large NFE regime the step size $\delta$ shrinks and discretization errors become less significant for both ODEs and SDEs. In this regime it is the \textit{approximation error} — error arising from an inaccurate estimation of the ground-truth vector field by the neural network $s_\theta$ — starts to dominate the sampling error. We denote the discretized ODE and SDE using the learned field $s_\theta$ as $\bODE$ and $\bSDE$, respectively. In the following theorem, we evaluate the total errors from simulating $\bODE$ and $\bSDE$ within the time interval $[\tmin, \tmax] \subset [0,T]$. This is done via an upper bound on the Wasserstein-1 distance between the generated and data distributions at time $\tmin$. We characterize the accumulated initial sampling errors up until $\tmax$ by total variation distances. Below we show that the inherent stochasticity of SDEs aids in contracting these initial errors at the cost of larger additional sampling error in $[\tmin, \tmax]$. Consequently, SDE results in a smaller upper bound as the step size $\delta$ nears $0$ (pertaining to the high NFE regime).

\begin{theorem}[Informal]
\label{thm1_informal}
    Let $\tmax$ be the initial noise level and 
    $p_t$ denote the true distribution at noise level $t$. Let $p^{\bODE}_t, p^{\bSDE}_{t}$ denote the distributions of simulating $\bODE$, $\bSDE$ respectively. Assume that $\forall t \in  [\tmin, \tmax]$, $\lrn{x_t} < B/2$ for any $x_t$ in the support of $p_t$, $p^{\bODE}_t$ or $p^{\bSDE}_{t}$. Then
    {\footnotesize
    \begin{align*}
        & W_1(p^{\bODE}_{\tmin}, p_{\tmin}) \leq B \cdot TV\lrp{p^{\bODE}_{\tmax}, p_{\tmax}} + O(\delta+\eapprox) \cdot \lrp{\tmax - \tmin}\\
        & \underbrace{W_1(p^{\bSDE}_{\tmin}, p_{\tmin})}_{\textrm{total error}} \leq \underbrace{\lrp{1 - \lambda e^{-U}}  B \cdot TV(p^{\bSDE}_{\tmax}, p_{\tmax})}_{\textrm{upper bound on contracted error}} + \underbrace{O(\sqrt{\delta\tmax} + \eapprox) \lrp{\tmax - \tmin}}_{\textrm{upper bound on additional sampling error}}
    \end{align*}}
    In the above, {\footnotesize $U=BL_1/\tmin +L_1^2 \tmax^2 / \tmin^2$}, $\lambda < 1$ is a contraction factor, $L_1$ and $\eapprox$ are uniform bounds on $\lrn{ts_{\theta}(x_t,t)}$ and the approximation error $\lrn{t\nabla_x \log p_t(x) - ts_\theta(x,t)}$ for all $x_t,t$, respectively. $O()$ hides polynomial dependency on various Lipschitz constants and dimension.
\end{theorem}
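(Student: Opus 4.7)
The plan is to establish both inequalities by decomposing the $W_1$ error at time $\tmin$ into two components: (i) the propagation of the distributional discrepancy inherited at $\tmax$, and (ii) the simulation error accumulated over $[\tmin, \tmax]$ from discretization and score approximation. The hypothesis that $\|x_t\| < B/2$ on the relevant supports supplies the basic conversion $W_1(\mu, \nu) \leq B \cdot TV(\mu, \nu)$, which I would use throughout to move between total-variation and Wasserstein controls.

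For the ODE bound, I would start from an optimal TV coupling between $p^{\bODE}_{\tmax}$ and $p_{\tmax}$. On the disagreement event, the diameter bound contributes $B \cdot TV(p^{\bODE}_{\tmax}, p_{\tmax})$; on the agreement event, both paths share an initial condition but evolve under $s_\theta$ (with step $\delta$) versus the true score, so a Gronwall-type estimate using the Lipschitz score and the local Euler error of order $\delta^2$ produces an accumulated deviation of order $(\delta + \eapprox)(\tmax - \tmin)$. Combining the two cases gives the claimed ODE inequality.

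For the SDE bound, the same two-term decomposition applies, but now the first term benefits from the contraction property of the reverse-time SDE. I would couple $\bSDE$ started from $p^{\bSDE}_{\tmax}$ with an ideal reverse SDE driven by the true score and started from $p_{\tmax}$, using a reflection (or synchronous) coupling adapted to the time-inhomogeneous drift. A meeting-time analysis for this coupling, together with the diameter bound $B$ on where the paths live and a drift Lipschitz constant that behaves like $L_1/t$, should yield a contraction factor of the form $(1 - \lambda e^{-U})$: the piece $L_1^2 \tmax^2/\tmin^2$ arises naturally from $\int_{\tmin}^{\tmax} L_1^2/t^2\, dt$, while $BL_1/\tmin$ accounts for the drift magnitude across separations of order $B$ at the tightest time scale. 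The accumulated simulation error is then controlled by standard Euler--Maruyama estimates with local error of order $\delta^{3/2}$ plus the score approximation error, giving the stated $O(\sqrt{\delta\,\tmax} + \eapprox)(\tmax - \tmin)$ contribution.

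The main obstacle is the contraction analysis itself: designing a coupling whose meeting probability can be quantified sharply enough to recover the stated exponent $U$. Reverse SDEs whose drift becomes singular as $t \to 0$ and whose Lipschitz constant is time-inhomogeneous require careful handling to ensure that Brownian fluctuations dominate drift-driven separation across $[\tmin, \tmax]$, and that the hitting-time exponent lands precisely on $U$ rather than a looser bound. Once the contraction factor is in hand, the remaining ingredients---Gronwall, Euler(--Maruyama) local error accumulation, and the $W_1 \leq B \cdot TV$ conversion---combine routinely to give both inequalities in the stated form.
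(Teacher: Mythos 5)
Your ODE argument is essentially the paper's: couple the two laws at $\tmax$ (the paper phrases this by inserting the auxiliary variable $\hat{x}_{\tmin}=\bODE(x_{\tmax},\tmax\to\tmin)$ with $x_{\tmax}\sim p_{\tmax}$ and using the data-processing inequality plus $W_1\le B\cdot TV$), and control the same-start case by a Gronwall estimate combining the $L_2$-Lipschitz drift, the local Euler error, and $\eapprox$. That part is fine.

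The SDE part, however, has a genuine gap in the coupling you specify. You propose to reflection-couple $\bSDE$ started from $p^{\bSDE}_{\tmax}$ \emph{directly} with the ideal reverse SDE (true score, no discretization) started from $p_{\tmax}$. Because these two processes have different drifts, a successful meeting does not deliver the contraction: the moment the paths coalesce, the drift mismatch ($\eapprox$ plus the discretization of $s_\theta$) separates them again, so $\Pr(x_{\tmin}\neq y_{\tmin})$ is not bounded by one minus the meeting probability, and the meeting-time analysis itself is contaminated by the drift difference. The paper avoids this by inserting the intermediate process $\hat{y}_{\tmin}=\bSDE(y_{\tmax},\tmax\to\tmin)$ with $y_{\tmax}\sim p_{\tmax}$: the contraction step (Lemma~\ref{l:sde_hitting_time}) reflection-couples two copies of the \emph{same} discretized, learned-drift SDE, which stay glued after the hitting time precisely because their drifts agree pointwise; the learned-versus-true and discretization errors are isolated in a separate, synchronously coupled Gronwall term (Lemma~\ref{l:discretization_sde}) where both processes share the initial condition. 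Your phrase ``the same two-term decomposition'' suggests you intend this split, but the coupling you actually describe does not implement it. A secondary inaccuracy: your heuristic $\int_{\tmin}^{\tmax}L_1^2/t^2\,dt$ evaluates to $O(L_1^2/\tmin)$, not $L_1^2\tmax^2/\tmin^2$; in the paper the exponent $U$ arises from the first-hitting-time density of a one-dimensional Brownian motion with drift bounded by $L_1/\tmin$ run over the horizon $\tmax^2-\tmin^2$ produced by the reparametrization $s=t^2$ (Lemma~\ref{l:hitting-time-bound}), which yields the factor $e^{-BL_1/\tmin-L_1^2\tmax^2/\tmin^2}$ multiplying the Gaussian overlap $\lambda$.
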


We defer the formal version and proof of Theorem \ref{thm1_informal} to Appendix \ref{a:main_result}. As shown in the theorem, the upper bound on the total error can be decomposed into upper bounds on the \textit{contracted error} and \textit{additional sampling error}. {\footnotesize $TV(p^{\bODE}_{\tmax}, p_{\tmax})$} and {\footnotesize $TV(p^{\bSDE}_{\tmax}, p_{\tmax})$} correspond to the initial errors accumulated from both approximation and discretization errors during the simulation of the backward process, up until time $\tmax$. In the context of SDE, this accumulated error undergoes contraction by a factor of {\footnotesize $1 - \lambda e^{-BL_1/\tmin - L_1^2 \tmax^2 / \tmin^2}$} within $[\tmin, \tmax]$, due to the effect of adding noise. Essentially, the minor additive Gaussian noise in each step can drive the generated distribution and the true distribution towards each other, thereby neutralizing a portion of the initial accumulated error.

The other term related to additional sampling error includes the accumulation of discretization and approximation errors in $[\tmin, \tmax]$. Despite the fact that SDE incurs a higher discretization error than ODE ($O(\sqrt{\delta})$ versus $O(\delta)$), the contraction effect on the initial error is the dominant factor impacting the upper bound in the large NFE regime where $\delta$ is small. Consequently, the upper bound for SDE is significantly lower. This provides insight into why SDE outperforms ODE in the large NFE regime, where the influence of discretization errors diminishes and the contraction effect dominates. In light of the distinct advantages of SDE and ODE, it is natural to ask whether we can combine their strengths. Specifically, can we devise a sampling algorithm that maintains a comparable level of discretization error as ODE, while also benefiting from, or even amplifying, the contraction effects induced by the stochasticity of SDE? In the next section, we introduce a novel algorithm, termed \textit{Restart}, designed to achieve these two goals simultaneously.



\section{Harnessing stochasticity with Restart}

\label{sec:method}
In this section, we present the Restart sampling algorithm, which incorporates stochasticity during sampling while enabling fast generation. We introduce the algorithm in Sec~\ref{sec:restart-method}, followed by a theoretical analysis in Sec~\ref{sec:ana}. Our analysis shows that Restart achieves a better Wasserstein upper bound compared to those of SDE and ODE in Theorem~\ref{thm1_informal} due to greater contraction effects. 

\subsection{Method}
\label{sec:restart-method}

In the Restart algorithm, simulation performs a few repeated back-and-forth steps within a pre-defined time interval $[\tmin,\tmax] \subset [0,T]$, as depicted in Figure~\ref{fig:main-demo}. This interval is embedded into the simulation of the original backward ODE referred to as the \textit{main backward process}, which runs from $T$ to $0$. In addition, we refer to the backward process within the Restart interval $[\tmin,\tmax]$ as the \textit{Restart backward process}, to distinguish it from the main backward process.

Starting with samples at time $\tmin$, which are generated by following the main backward process, the Restart algorithm adds a large noise to transit the samples from $\tmin$ to $\tmax$ with the help of the forward process. The forward process does not require any evaluation of the neural network ${s_\theta(x, t)}$, as it is generally defined by an analytical perturbation kernel capable of transporting distributions from $\tmin$ to $\tmax$. For instance, in the case of diffusion models, the perturbation kernel is $\gN(\bm{0}, (\sigma(\tmax)^2 - \sigma(\tmin)^2)\bm{I}_{d\times d})$. 
The added noise in this step induces a more significant contraction compared to the small, interleaved noise in SDE. 
The step acts as if partially restarting the main backward process by increasing the time. Following this step, Restart simulates the backward ODE from $\tmax$ back to $\tmin$ using the neural network predictions as in regular ODE. We repeat these forward-backward steps within $[\tmin,\tmax]$ interval $K$ times in order to further derive the benefit from contraction. Specifically, the forward and backward processes in the $i^{\textrm{th}}$ iteration ($i \in \{0, \dots, K-1\}$) proceed as follows:
\begin{align*}
(\textrm{Restart forward process}) \qquad x^{i+1}_{\tmax} &= x^i_{\tmin} + \varepsilon_{\tmin \to \tmax} \numberthis \label{eq:forward}\\
(\textrm{Restart backward process}) \qquad x^{i+1}_{\tmin} &= \bODE(x^{i+1}_{\tmax}, \tmax \to \tmin)\numberthis \label{eq:backward}
\end{align*}
where the initial $x^0_{\tmin}$ is obtained by simulating the ODE until $\tmin$: $x^{0}_{\tmin} = \bODE(x_{T}, T \to \tmin)$, and the noise $\varepsilon_{\tmin \to \tmax}$ is sampled from the corresponding perturbation kernel from $\tmin$ to $\tmax$. 
The Restart algorithm not only adds substantial noise in the Restart forward process~(\Eqref{eq:forward}), but also separates the stochasticity from the ODE, leading to a greater contraction effect, which we will demonstrate theoretically in the next subsection. For example, we set $[\tmin, \tmax]=[ 0.05, 0.3]$ for the VP model~\cite{Karras2022ElucidatingTD} on CIFAR-10. Repetitive use of the forward noise effectively mitigates errors accumulated from the preceding simulation up until $\tmax$. Furthermore, the Restart algorithm does not suffer from large discretization errors as it is mainly built from following the ODE in the Restart backward process~(\Eqref{eq:backward}). The effect is that the Restart algorithm is able to reduce the total sampling errors even in the small NFE regime. Detailed pseudocode for the Restart sampling process can be found in Algorithm~\ref{alg:restart}, Appendix~\ref{app:restart-alg}.

\subsection{Analysis}
 \label{sec:ana}

We provide a theoretical analysis of the Restart algorithm under the same setting as Theorem~\ref{thm1_informal}. In particular, we prove the following theorem, which shows that Restart achieves a much smaller contracted error in the Wasserstein upper bound than SDE~(Theorem~\ref{thm1_informal}), thanks to the separation of the noise from the drift, as well as the large added noise in the Restart forward process~(\Eqref{eq:forward}). The repetition of the Restart cycle $K$ times further leads to a enhanced reduction in the initial accumulated error. We denote the intermediate distribution in the $i^{\textrm{th}}$ Restart iteration, following the discretized trajectories and the learned field $s_\theta$, as {\footnotesize $p^{\bRestart(i)}_{t \in [\tmin, \tmax]}$}.


\begin{theorem}[Informal]
\label{thm2_informal}
Under the same setting of Theorem~\ref{thm1_informal}, 
assume $K \leq \frac{C}{L_2 \lrp{\tmax - \tmin}}$ for some universal constant $C$. Then
{\footnotesize
\begin{align*}
   \underbrace{W_1(p^{\bRestart(K)}_{\tmin}, p_{\tmin})}_{\textrm{total error}} \leq& \underbrace{B \cdot \lrp{1 - \lambda}^K TV(p^{\bRestart(0)}_{\tmax}, p_{\tmax})}_{\textrm{upper bound on contracted error}} + \underbrace{(K+1) \cdot O\lrp{\delta + \eapprox} \lrp{\tmax - \tmin}}_{\textrm{upper bound on additional sampling error}}
\end{align*}}
where $\lambda<1$ is the same contraction factor as Theorem~\ref{thm1_informal}. $O()$ hides polynomial dependency on various Lipschitz constants, dimension.
\end{theorem}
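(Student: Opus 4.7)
The plan is to derive a one-cycle recursion for the TV distance to $p_{\tmin}$ that contracts by $(1-\lambda)$ per Restart iteration while incurring only an additive $O((\delta+\eapprox)(\tmax-\tmin))$ error from each backward ODE pass, then unroll the recursion $K$ times, and finally convert TV to $W_1$ using the bounded-support hypothesis. To set up, I would write $\hat p_i \defeq p^{\bRestart(i)}_{\tmin}$ and $\tp_i \defeq p^{\bRestart(i)}_{\tmax}$, so that $\tp_{i+1} = \hat p_i * g$ with $g = \gN(\vzero,(\sigma^2(\tmax)-\sigma^2(\tmin))\mI)$ and $\hat p_{i+1} = \widehat\Phi_\# \tp_{i+1}$, where $\widehat\Phi$ is the approximate backward ODE map on $[\tmin,\tmax]$ induced by $s_\theta$. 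The true marginals satisfy the parallel identities $p_{\tmax} = p_{\tmin} * g$ and $p_{\tmin} = \Phi_\# p_{\tmax}$ for the exact reverse flow $\Phi$, so the whole argument reduces to a head-to-head comparison of these two (stochastic-then-deterministic) pipelines.

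The core technical tool is a Gaussian-convolution contraction lemma in TV: for measures $\mu,\nu$ supported in a ball of diameter $B$, $TV(\mu * g, \nu * g) \leq (1-\lambda) TV(\mu,\nu)$ with the \emph{same} $\lambda \in (0,1)$ that appears in Theorem~\ref{thm1_informal}. This is where Restart beats SDE: because the Restart forward step is pure noise rather than noise interleaved with drift, no Lipschitz prefactor of the form $e^{-U}$ survives, and the bare contraction $(1-\lambda)$ applies. Applying the lemma gives $TV(\tp_{i+1},p_{\tmax}) \leq (1-\lambda) TV(\hat p_i, p_{\tmin})$. For the backward step I would use the triangle inequality
\begin{align*}
TV(\hat p_{i+1}, p_{\tmin}) \leq TV\lrp{\widehat\Phi_\# \tp_{i+1},\, \Phi_\# \tp_{i+1}} + TV\lrp{\Phi_\# \tp_{i+1},\, \Phi_\# p_{\tmax}},
\end{align*}
where the second term equals $TV(\tp_{i+1}, p_{\tmax})$ since pushforward by the continuous bijection $\Phi$ preserves TV, and the first is bounded by $O((\delta+\eapprox)(\tmax-\tmin))$ through a Gronwall-type estimate on $\sup_x\|\widehat\Phi(x)-\Phi(x)\|$ combined with density regularity -- exactly the ODE-side estimate underlying Theorem~\ref{thm1_informal}. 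The hypothesis $K \leq C/(L_2(\tmax-\tmin))$ caps the per-cycle Lipschitz amplification $e^{L_2(\tmax-\tmin)}$ and prevents any compounding blowup across the $K$ iterations.

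Combining the two bounds yields the single-cycle recursion $TV(\hat p_{i+1}, p_{\tmin}) \leq (1-\lambda) TV(\hat p_i, p_{\tmin}) + O((\delta+\eapprox)(\tmax-\tmin))$. Unrolling $K$ times gives $TV(\hat p_K, p_{\tmin}) \leq (1-\lambda)^K TV(\hat p_0, p_{\tmin}) + K\cdot O((\delta+\eapprox)(\tmax-\tmin))$, and relating $TV(\hat p_0, p_{\tmin})$ to the stated $TV(p^{\bRestart(0)}_{\tmax}, p_{\tmax})$ by applying the same ODE-pushforward estimate to the single initial leg $[\tmin,\tmax]$ of the main backward pass contributes the extra $+1$ in the $(K+1)$ prefactor. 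A final application of $W_1 \leq B\cdot TV$ under the uniform support bound produces the stated inequality. I expect the main obstacle to be establishing the sharp Gaussian-convolution contraction constant $\lambda$ and matching it to the one already used in Theorem~\ref{thm1_informal}; a secondary technical point is controlling $TV(\widehat\Phi_\# \mu, \Phi_\# \mu)$ cleanly, since $\widehat\Phi$ is only approximately a continuous bijection and one likely needs either a perturbative change-of-variables argument around the true flow or a short $W_1$-to-TV detour exploiting the bounded support.
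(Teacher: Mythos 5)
Your single-cycle recursion is a genuinely different organization from the paper's, but it stalls at exactly the step you flag as ``secondary'': bounding $TV\lrp{\widehat\Phi_\# \tp_{i+1},\, \Phi_\# \tp_{i+1}}$, where $\widehat\Phi$ and $\Phi$ are the approximate and exact backward flows applied to the \emph{same} measure. The Gronwall-type estimate you invoke controls only $\sup_x\lrn{\widehat\Phi(x)-\Phi(x)}$, i.e.\ a $W_1$-type quantity. Under the paper's assumptions (bounded support, Lipschitz drift, bounded approximation error) there is no route from a uniform pointwise bound between two deterministic maps to a TV bound between their pushforwards: if $\tp_{i+1}$ concentrates on a low-dimensional set, $\widehat\Phi_\#\tp_{i+1}$ and $\Phi_\#\tp_{i+1}$ can be mutually singular even when the maps are uniformly close, giving $TV=1$. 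The ``$W_1$-to-TV detour'' you mention goes the wrong way --- $W_1\le B\cdot TV$ under bounded support, not the reverse --- and ``density regularity'' is not assumed anywhere. You cannot keep the recursion in TV (needed to exploit the Gaussian overlap contraction, since $W_1$ is merely non-expansive under convolution with a fixed Gaussian) while also absorbing the per-cycle discretization error, which is only available in $W_1$. The same issue afflicts the final step where you trade $TV(\hat p_0,p_{\tmin})$ for $TV(p^{\bRestart(0)}_{\tmax},p_{\tmax})$ via one more ODE leg.

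The paper avoids this obstruction by decomposing differently: it introduces an auxiliary process $\hat x^K_{\tmin}$ that starts from the \emph{true} distribution $p_{\tmax}$ but evolves under the \emph{learned} field, and then applies the triangle inequality in $W_1$,
\begin{align*}
W_1\lrp{\bx^K_{\tmin}, x^K_{\tmin}} \le W_1\lrp{\bx^K_{\tmin}, \hat x^K_{\tmin}} + W_1\lrp{\hat x^K_{\tmin}, x^K_{\tmin}}.
\end{align*}
The first term compares two processes with the \emph{same} drift but different initial distributions; deterministic steps do not increase TV (data processing), the Gaussian-injection steps contract TV by $(1-\lambda)$ via the coupling in Lemma~\ref{l:restart_tv_convergence}, and bounded support converts the resulting TV bound to $W_1$ via $W_1\le B\cdot TV$ once at the end. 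The second term compares two processes with \emph{different} drifts from the \emph{same} initial point (and with the same injected Gaussian noise), for which a pointwise Gronwall estimate (Lemmas~\ref{l:discretization_ode} and~\ref{l:discretization_restart}) directly yields the $(K+1)O((\delta+\eapprox)(\tmax-\tmin))$ bound in $W_1$. Thus contraction lives entirely in TV on one leg of the triangle, discretization lives entirely in $W_1$ on the other, and the two never need to be mixed. To repair your argument you would essentially have to reorganize it into this form.
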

\vspace{-6pt}
\begin{proofs}
To bound the total error, we introduce an auxiliary process {\footnotesize $q^{\bRestart(i)}_{t \in [\tmin, \tmax]}$}, which initiates from true distribution $p_{\tmax}$ and performs the Restart iterations. This process differs from {\footnotesize ${p}^{\bRestart(i)}_{t \in [\tmin, \tmax]}$} only in its initial distribution at $\tmax$~($p_{\tmax}$ versus {\footnotesize $p^{\bRestart(0)}_{\tmax}$}). We bound the total error by the following triangular inequality:

{\footnotesize
\begin{align*}
    \underbrace{W_1(p^{\bRestart(K)}_{\tmin}, p_{\tmin})}_{\textrm{total error}} \le \underbrace{W_1(p^{\bRestart(K)}_{\tmin}, q^{\bRestart(K)}_{\tmin})}_{\textrm{contracted error}} + \underbrace{W_1(q^{\bRestart(K)}_{\tmin}, p_{\tmin})}_{\textrm{additional sampling error}}
\end{align*}}

To bound the contracted error, we construct a careful coupling process between two individual trajectories sampled from {\footnotesize $p^{\bRestart(i)}_{\tmin}$} and {\footnotesize $q^{\bRestart(i)}_{\tmin}, i=0, \dots, K-1$}. Before these two trajectories converge, the Gaussian noise added in each Restart iteration is chosen to maximize the probability of the two trajectories mapping to an identical point, thereby maximizing the mixing rate in TV. After converging, the two processes evolve under the same Gaussian noise, and will stay converged as their drifts are the same. Lastly, we convert the TV bound to $W_1$ bound by multiplying $B$. The bound on the additional sampling error echoes the ODE analysis in Theorem \ref{thm1_informal}: since the noise-injection and ODE-simulation stages are separate, we do not incur the higher discretization error of SDE.
    \vspace{-6pt}
\end{proofs}
We defer the formal version and proof of Theorem \ref{thm2_informal} to Appendix \ref{a:main_result}. The first term in RHS bounds the contraction on the initial error at time $\tmax$ and the second term reflects the additional sampling error of ODE accumulated across repeated Restart iterations. Comparing the Wasserstein upper bound of SDE and ODE in Theorem~\ref{thm1_informal}, we make the following three observations: \textit{(1)} Each Restart iteration has a smaller contraction factor $1-\lambda$ compared to the one in SDE, since Restart separates the large additive noise~(\Eqref{eq:forward}) from the ODE~(\Eqref{eq:backward}). \textit{(2)} Restart backward process~(\Eqref{eq:backward}) has the same order of discretization error $O(\delta)$ as the ODE, compared to $O(\sqrt{\delta})$ in SDE. Hence, the Restart allows for small NFE due to ODE-level discretization error. \textit{(3)} The contracted error further diminishes exponentially with the number of repetitions $K$ though the additional error increases linearly with $K$. It suggests that there is a sweet spot of $K$ that strikes a balance between reducing the initial error and increasing additional sampling error. Ideally, one should pick a larger $K$ when the initial error at time $\tmax$ greatly outweigh the incurred error in the repetitive backward process from $\tmax$ to $\tmin$. We provide empirical evidences in Sec~\ref{sec:exp-standard}.

While Theorem~\ref{thm1_informal} and Theorem~\ref{thm2_informal} compare the upper bounds on errors of different methods, we provide empirical validation in Section~\ref{sec:exp-verify} by directly calculating these errors, showing that the Restart algorithm indeed yields a smaller total error due to its superior contraction effects. The main goal of Theorem~\ref{thm1_informal} and Theorem~\ref{thm2_informal} is to study how the already accumulated error changes using different samplers, and to understand their ability to self-correct the error by stochasticity. In essence, these theorems differentiate samplers based on their performance post-error accumulation. For example, by tracking the change of accumulated error, Theorem 1 shed light on the distinct "winning regions" of ODE and SDE: ODE samplers have smaller discretization error and hence excel at the small NFE regime. In contrast, SDE performs better in large NFE regime where the discretization error is negligible and its capacity to contract accumulated errors comes to the fore.

\subsection{Practical considerations}
\label{sec:practical}

The Restart algorithm offers several degrees of freedom, including the time interval $[\tmin, \tmax]$ and the number of restart iterations $K$. Here we provide a general recipe of parameter selection for practitioners, taking into account factors such as the complexity of the generative modeling tasks and the capacity of the network. Additionally, we discuss a stratified, multi-level Restart approach that further aids in reducing simulation errors along the whole trajectories for more challenging tasks.

\textbf{Where to Restart? } Theorem~\ref{thm2_informal} shows that the Restart algorithm effectively reduces the accumulated error at time $\tmax$ by a contraction factor in the Wasserstein upper bound. These theoretical findings inspire us to position the Restart interval $[\tmin, \tmax]$ towards the end of the main backward process, where the accumulated error is more substantial. In addition, our empirical observations suggest that a larger time interval $\tmax {-} \tmin$ is more beneficial for weaker/smaller architectures or more challenging datasets. Even though a larger time interval increases the additional sampling error, the benefits of the contraction significantly outweighs the downside, consistent with our theoretical predictions. We leave the development of principled approaches for optimal time interval selection for future works.

\textbf{Multi-level Restart } 
For challenging tasks that yield significant approximation errors, the backward trajectories may diverge substantially from the ground truth even at early stage. To prevent the ODE simulation from quickly deviating from the true trajectory, we propose implementing multiple Restart intervals in the backward process, alongside the interval placed towards the end. Empirically, we observe that a $1$-level Restart is sufficient for CIFAR-10, while for more challenging datasets such as ImageNet~\cite{Deng2009ImageNetAL}, a multi-level Restart results in enhanced performance~\cite{Deng2009ImageNetAL}.

\section{Experiments}
\label{sec:exp}

In Sec~\ref{sec:exp-verify}, we first empirically verify the theoretical analysis relating to the Wasserstein upper bounds. We then evaluate the performance of different sampling algorithms on standard image generation benchmarks, including CIFAR-10~\cite{Krizhevsky2009LearningML} and ImageNet $64\times 64$~\cite{Deng2009ImageNetAL} in Sec~\ref{sec:exp-standard}. Lastly, we employ Restart on text-to-image generation, using Stable Diffusion model~\cite{Rombach2021HighResolutionIS} pre-trained on LAION-5B~\cite{Schuhmann2022LAION5BAO} with resolution $512\times 512$, in Sec~\ref{exp-t2i}.

\subsection{Additional sampling error versus contracted error}
\label{sec:exp-verify}
Our proposed Restart sampling algorithm demonstrates a higher contraction effect and smaller addition sampling error compared to SDE, according to Theorem~\ref{thm1_informal} and Theorem~\ref{thm2_informal}. Although our theoretical analysis compares the upper bounds of the total, contracted and additional sampling errors, we further verify their relative values through a synthetic experiment.

\textbf{Setup } We construct a $20$-dimensional dataset with 2000 points sampled from a Gaussian mixture, and train a four-layer MLP to approximate the score field $\nabla_x \log p_t$. We implement the ODE, SDE, and Restart methods within a predefined time range of $[\tmin, \tmax]=[1.0, 1.5]$, where the process outside this range is conducted via the first-order ODE. To compute various error types, we define the distributions generated by three methods as outlined in the proof of Theorem~\ref{thm2_informal} and directly gauge the errors at end of simulation $t=0$ instead of $t=\tmin$: (1) the generated distribution as {\footnotesize $p^{\textrm{Sampler}}_{0}$}, where $\textrm{Sampler} \in {\footnotesize \{\bODE, \bSDE, \bRestart(K)\}}$; (2) an auxiliary distribution {\footnotesize $q^{\textrm{Sampler}}_{0}$} initiating from true distribution $p_{\tmax}$ at time $\tmax$. The only difference between {\footnotesize $p^{\textrm{Sampler}}_{0}$} and {\footnotesize $q^{\textrm{Sampler}}_{0}$} is their initial distribution at $\tmax$~({\footnotesize $p^{\bODE}_{\tmax}$} versus $p_{\tmax}$); and (3) the true data distribution $p_{0}$. In line with Theorem~\ref{thm2_informal}, we use Wasserstein-1 distance {\footnotesize $W_1(p^{\textrm{Sampler}}_{0},q^{\textrm{Sampler}}_{0})$ / $W_1(q^{\textrm{Sampler}}_{0}, p_{0})$} to measure the contracted error / additional sampling error, respectively. Ultimately, the total error corresponds to {\footnotesize $W_1(p^{\textrm{Sampler}}_{0}, p_{0})$}. Detailed information about dataset, metric and model can be found in the Appendix~\ref{app:syn}.

\textbf{Results } In our experiment, we adjust the parameters for all three processes and calculate the total, contracted, and additional sampling errors across all parameter settings. Figure \ref{fig:approx_contract} depicts the Pareto frontier of additional sampling error versus contracted error. We can see that Restart consistently achieves lower contracted error for a given level of additional sampling error, compared to both the ODE and SDE methods, as predicted by theory. In Figure \ref{fig:approx_fid}, we observe that the Restart method obtains a smaller total error within the additional sampling error range of $[0.8, 0.85]$. During this range, Restart also displays a strictly reduced contracted error, as illustrated in Figure \ref{fig:approx_contract}. This aligns with our theoretical analysis, suggesting that the Restart method offers a smaller total error due to its enhanced contraction effects. From Figure \ref{fig:nfe_total}, Restart also strikes an better balance between efficiency and quality, as it achieves a lower total error at a given NFE.

\begin{figure*}[t]
    \centering
    \subfigure[]{\includegraphics[width=0.33\textwidth]{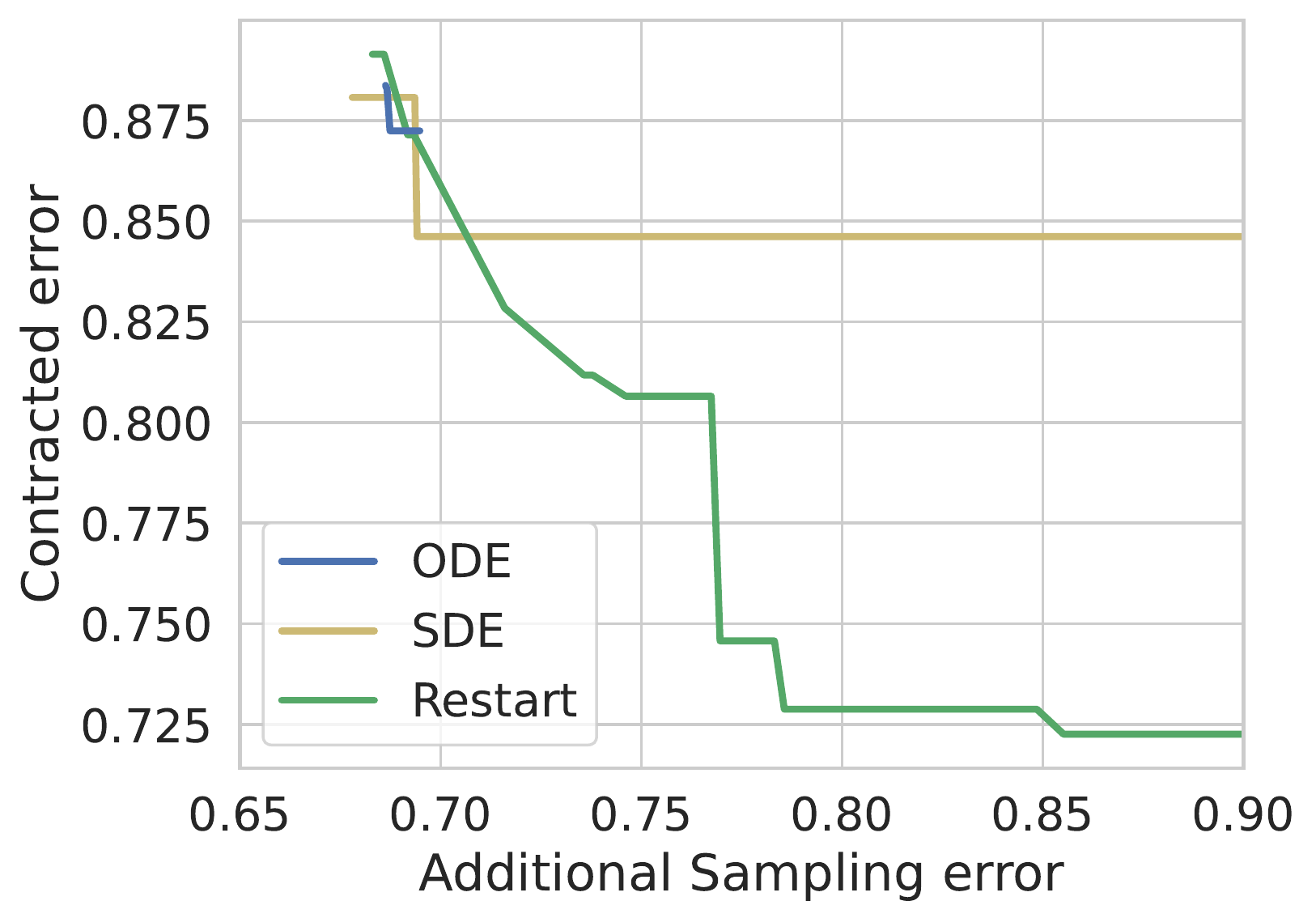}\label{fig:approx_contract}}\hfill
   \subfigure[]{\includegraphics[width=0.325\textwidth]{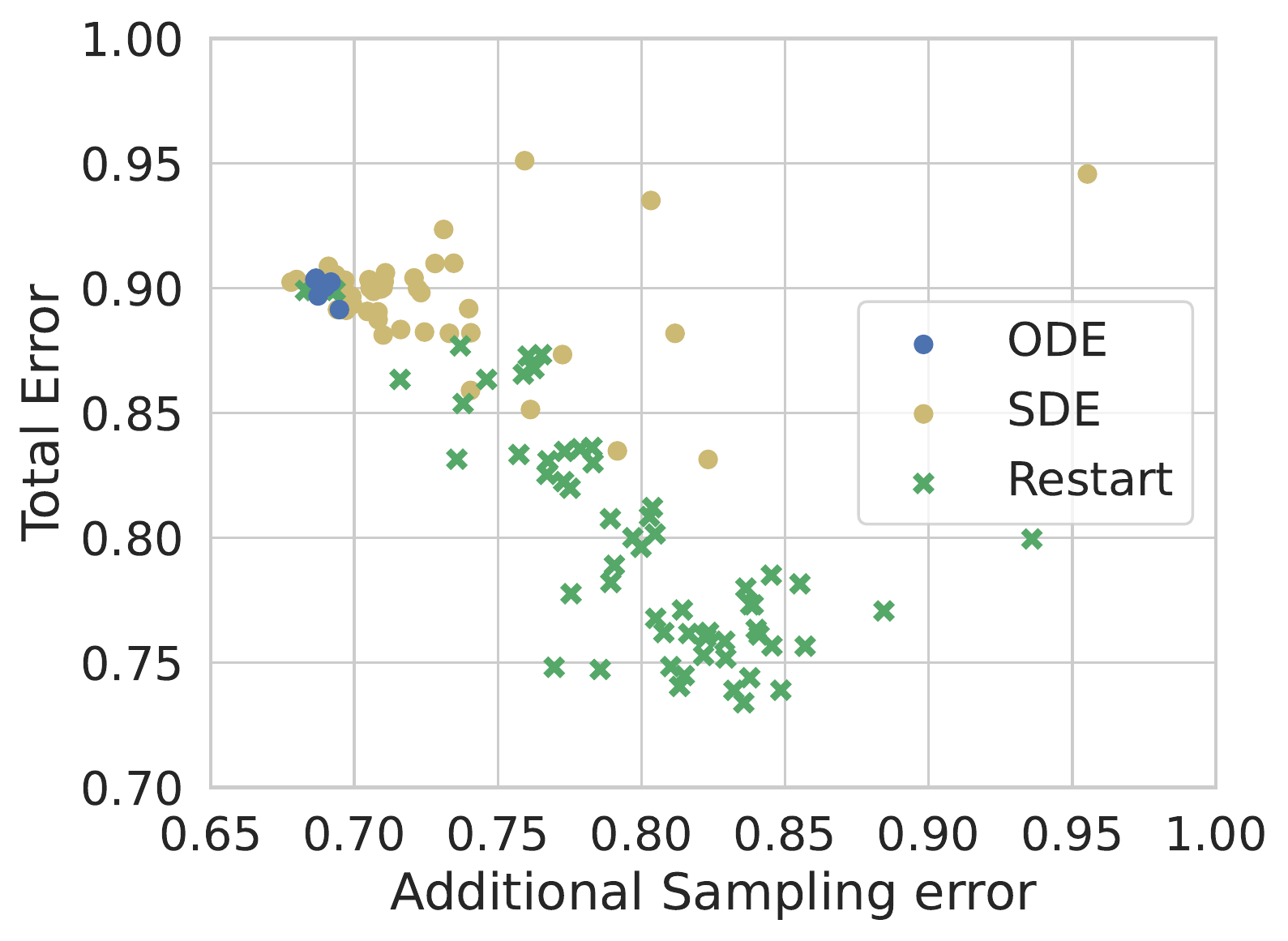}\label{fig:approx_fid}}\hfill
  \subfigure[]{\includegraphics[width=0.32\textwidth]{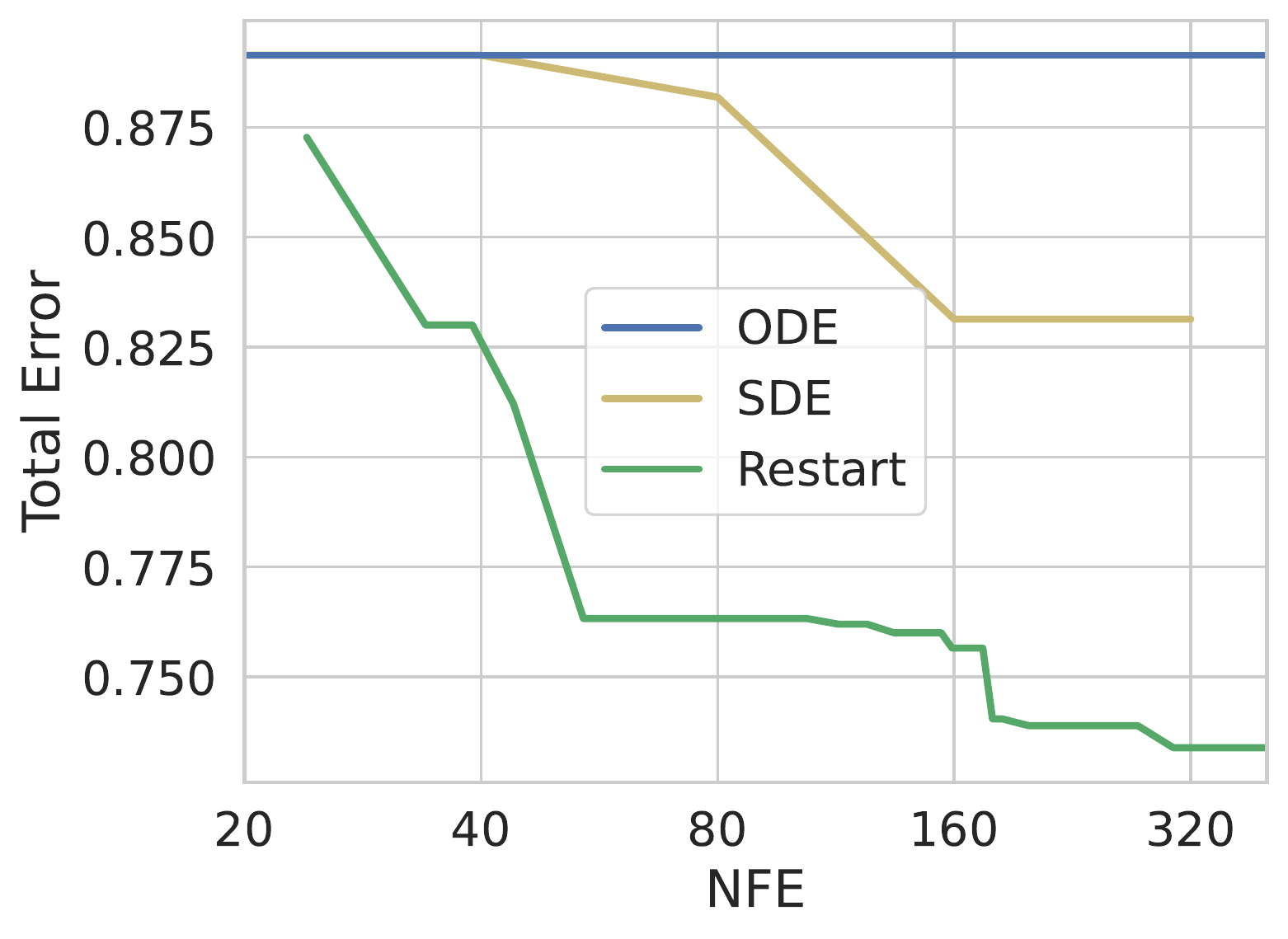}\label{fig:nfe_total}}
  \vspace{-6pt}
    \caption{Additional sampling error versus \textbf{(a)} contracted error, where the Pareto frontier is plotted and \textbf{(b)} total error, where the scatter plot is provided. \textbf{(c)} Pareto frontier of NFE versus total error.}
    \vspace{-12pt}
\end{figure*}

\subsection{Experiments on standard benchmarks}
\label{sec:exp-standard}
\begin{figure*}[h]
\vspace{-6pt}
  \centering
  \subfigure[CIFAR-10, VP]{\includegraphics[width=0.45\textwidth]{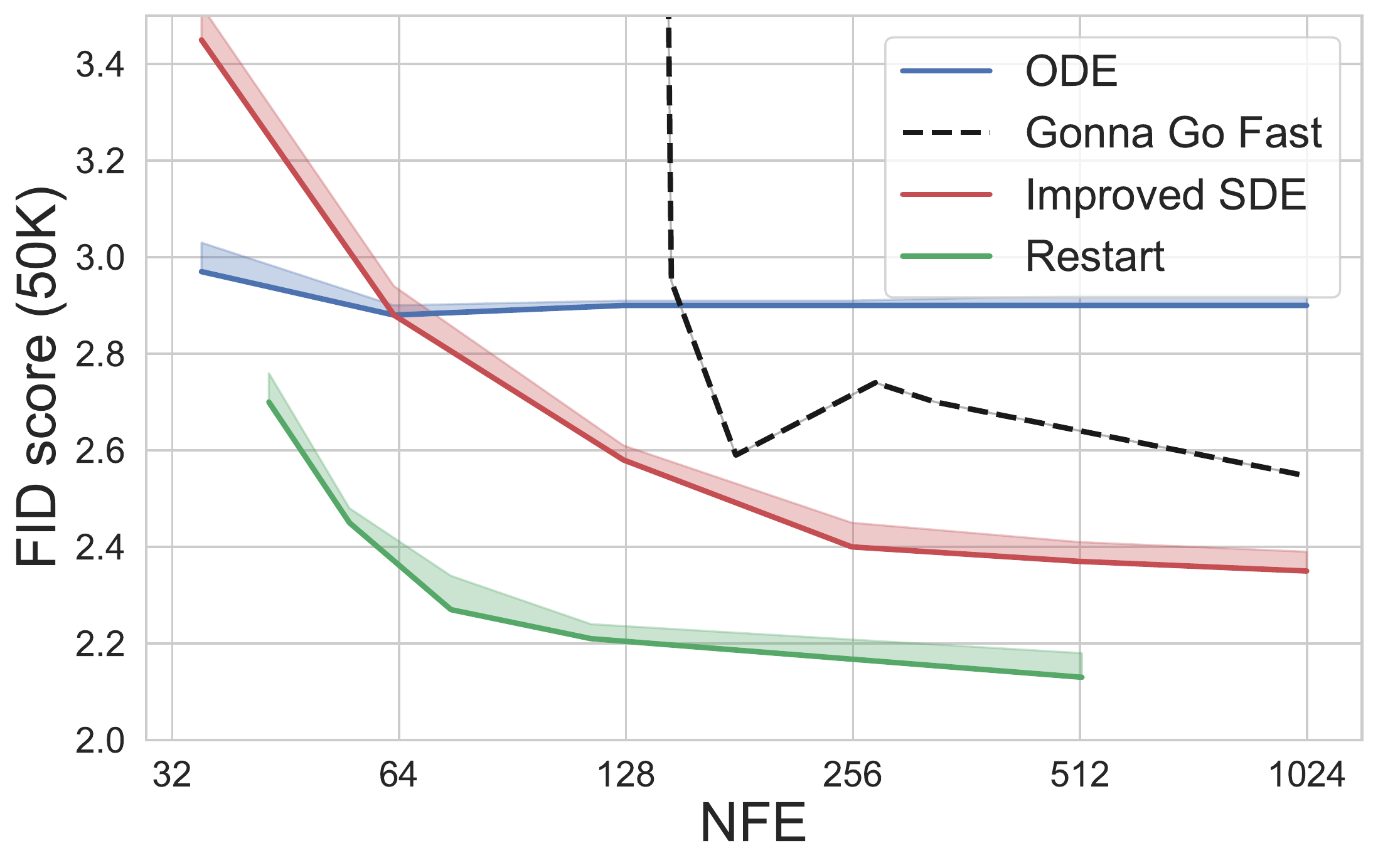}\label{fig:cifar10}}\hfill
  \subfigure[ImageNet $64{\times} 64$, EDM]{\includegraphics[ width=0.505\textwidth, trim = 0.9cm 0cm 0.9cm 0cm]{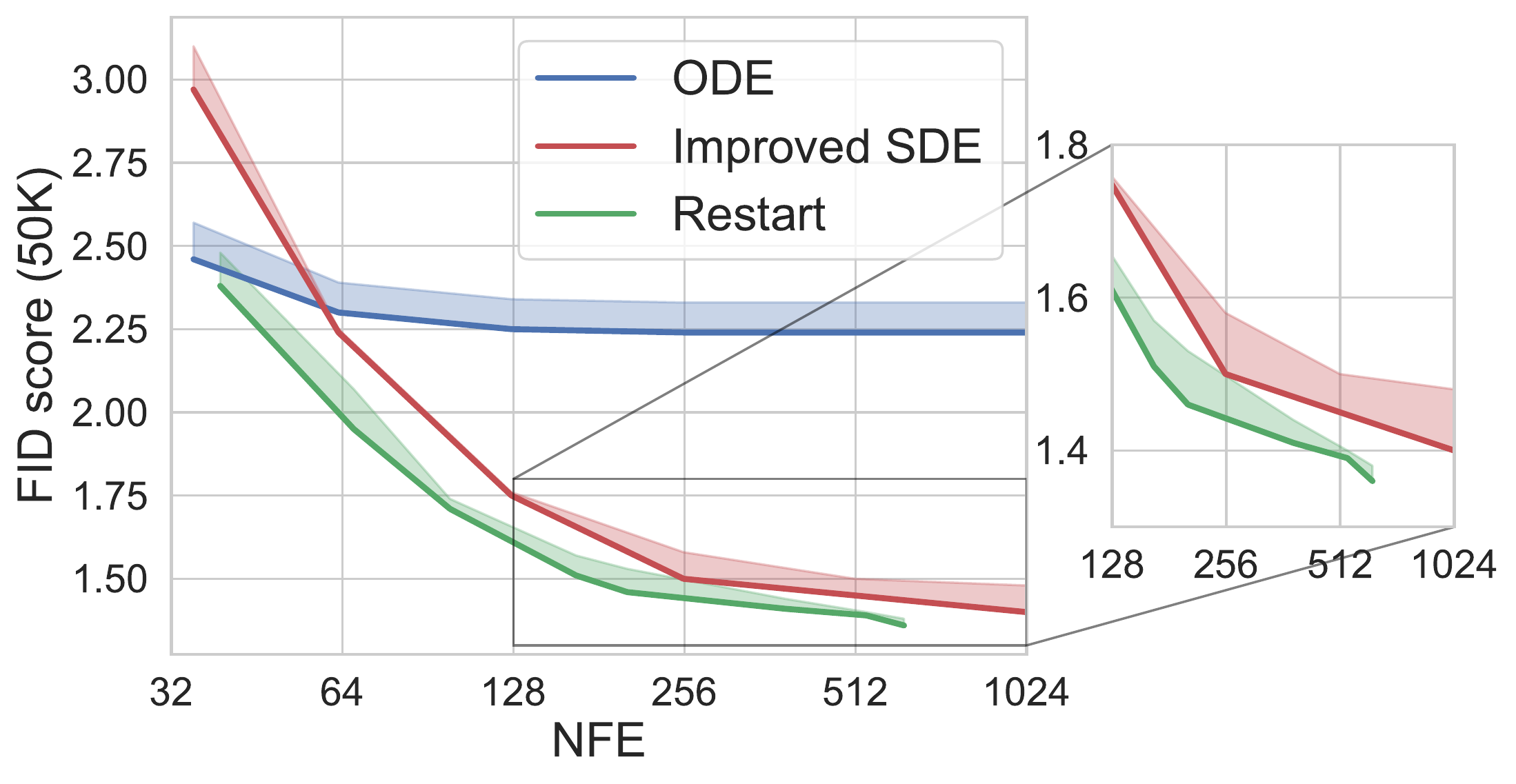}\label{fig:imagenet}}
  \vspace{-3pt}
  \caption{FID versus NFE on \textbf{(a)} unconditional generation on CIFAR-10 with VP; \textbf{(b)} class-conditional generation on ImageNet with EDM.}
  \label{fig:example}
  \vspace{-5pt}
\end{figure*}

\begin{figure*}
\vspace{-3pt}
\centering
    \includegraphics[width=0.5\textwidth]{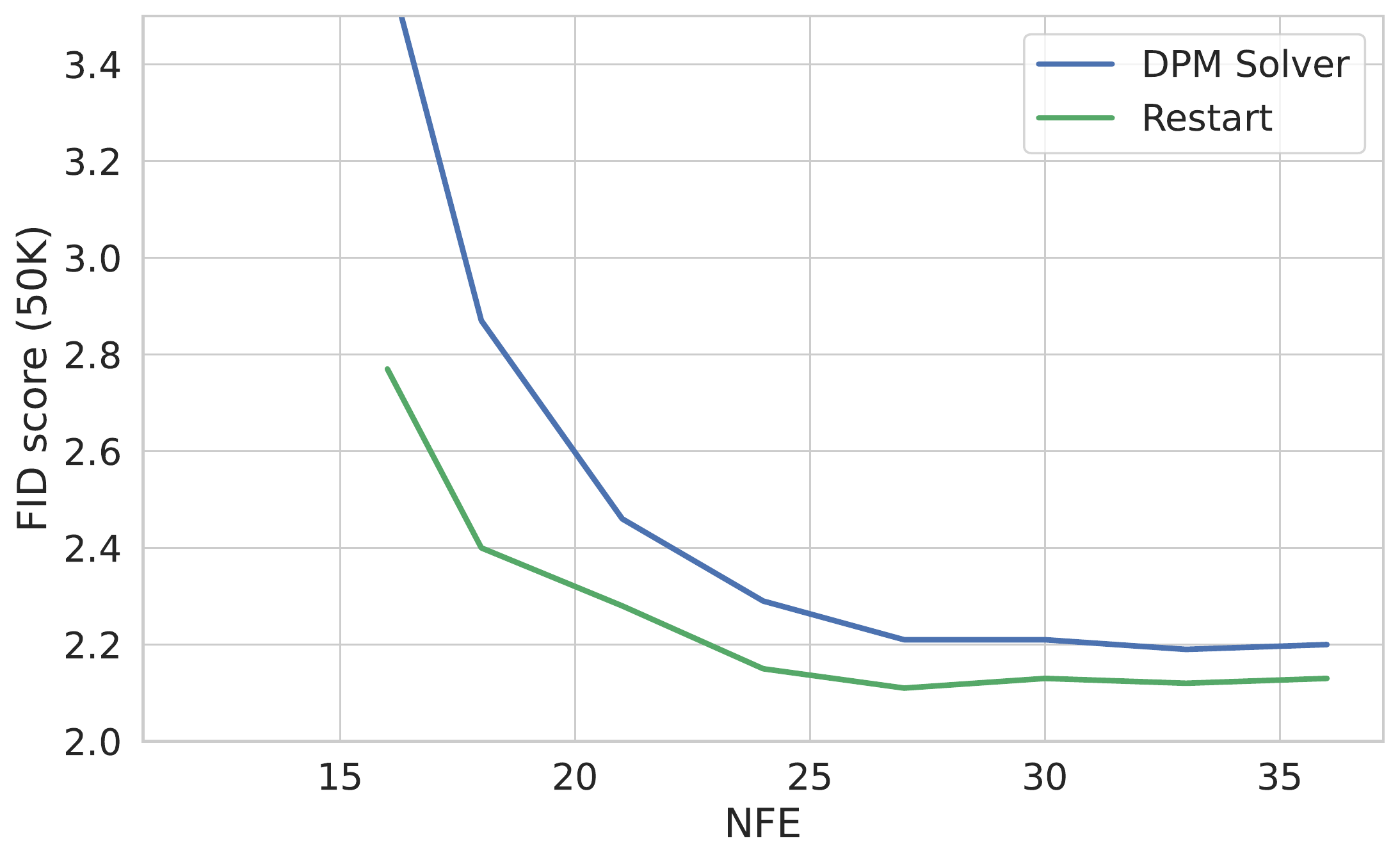}
    \caption{CIFAR-10, VP, in the low NFE regime. Restart consistently outperforms the DPM-solver with an NFE ranging from 16 to 36.}
        \label{fig:cifar-dpm}
    \vspace{-15pt}
\end{figure*}


To evaluate the sample quality and inference speed, we report the FID score~\cite{Heusel2017GANsTB}~(lower is better) on 50K samplers and the number of function evaluations~(NFE). We borrow the pretrained VP/EDM/PFGM++ models on CIFAR-10 or ImageNet $64\times 64$ from \cite{Karras2022ElucidatingTD, Xu2023PFGMUT}. We also use the EDM discretization scheme~\cite{Karras2022ElucidatingTD}~(see Appendix~\ref{app:edm-time} for details) during sampling.

For the proposed Restart sampler, the hyperparameters include the number of steps in the main/Restart backward processes, the number of Restart iteration $K$, as well as the time interval $[\tmin, \tmax]$. 
We pick the $\tmin$ and $\tmax$ from the list of time steps in EDM discretization scheme with a number of steps $18$. For example, for CIFAR-10~(VP) with NFE${=}75$, we choose $\tmin{=}0.06,\tmax{=}0.30,  K{=}10$, where $0.30$/$0.06$ is the $12^{\textrm{th}}$/$14^{\textrm{th}}$ time step in the EDM scheme. We also adopt EDM scheme for the Restart backward process in $[\tmin, \tmax]$. In addition, we apply the multi-level Restart strategy~(Sec~\ref{sec:practical}) to mitigate the error at early time steps for the more challenging ImageNet $64\times 64$. We provide the detailed Restart configurations in Appendix~\ref{app:config}. 


For SDE, we compare with the previously best-performing stochastic samplers proposed by \cite{Karras2022ElucidatingTD} (\textbf{Improved SDE}). We use their optimal hyperparameters for each dataset. We also report the FID scores of the adaptive SDE~\cite{JolicoeurMartineau2021GottaGF}~(\textbf{Gonna Go Fast}) on CIFAR-10~(VP). Since the vanilla reverse-diffusion SDE~\cite{Song2020ScoreBasedGM} has a significantly higher FID score, we omit its results from the main charts and defer them to Appendix~\ref{app:extra-exp}. For ODE samplers, we compare with the Heun's $2^{\textrm{nd}}$ order method~\cite{Ascher1998ComputerMF} (\textbf{Heun}), which arguably provides an excellent trade-off between discretization errors and NFE~\cite{Karras2022ElucidatingTD}. To ensure a fair comparison, we use Heun's method as the sampler in the main/Restart backward processes in Restart.

We report the FID score versus NFE in Figure~\ref{fig:cifar10} and Table~\ref{tab:cifar10-edm} on CIFAR-10, and Figure~\ref{fig:imagenet} on ImageNet $64\times 64$ with EDM. Our main findings are: \textbf{(1)} Restart outperforms other SDE or ODE samplers in balancing quality and speed, across datasets and models. As demonstrated in the figures, Restart achieves a 10-fold / 2-fold acceleration compared to previous best SDE results on CIFAR-10~(VP) / ImageNet $64\times 64$~(EDM) at the same FID score. In comparison to ODE sampler~(Heun), Restart obtains a better FID score, with the gap increasing significantly with NFE. \textbf{(2)} For stronger models such as EDM and PFGM++, Restart further improve over the ODE baseline on CIFAR-10. In contrast, the Improved SDE negatively impacts performance of EDM, as also observed in \cite{Karras2022ElucidatingTD}. It suggests that Restart incorporates stochasticity more effectively. \textbf{(3)} Restart establishes new state-of-the-art FID scores for UNet architectures without additional training. In particular, Restart achieves FID scores of 1.36 on class-cond. ImageNet $64\times 64$ with EDM, and 1.88 on uncond. CIFAR-10 with PFGM++.

\textbf{To further validate that Restart can be applied in low NFE regime}, we show that one can employ faster ODE solvers such as the \textbf{DPM-solver-3}~\cite{lu2022dpm} to further accelerate Restart. \Figref{fig:cifar-dpm} shows that the Restart consistently outperforms the DPM-solver with an NFE ranging from 16 to 36. This demonstrates Restart's capability to excel over ODE samplers, even in the small NFE regime. It also suggests that Restart can consistently improve other ODE samplers, not limited to the DDIM, Heun. Surprisingly, when paired with the DPM-solver, \textbf{Restart achieves an FID score of $2.11$ on VP setting when NFE is 30, which is significantly lower than any previous numbers (even lower than the SDE sampler with an NFE greater than 1000
 in \cite{Song2020ScoreBasedGM}), and make VP model on par with the performance with more advanced models (such as EDM).} We include detailed Restart configuration in Table~\ref{tab:app-cifar-restart} in Appendix~\ref{app:config}.

 \begin{wrapfigure}{r}{0.52\textwidth} 
\vspace{-10pt}
	\begin{minipage}{0.4\linewidth}
		\captionof{table}{\footnotesize Uncond. CIFAR-10 with EDM and PFGM++}
		\label{tab:cifar10-edm}
		\centering
  \scriptsize
    \begin{tabular}{l @{\hspace{0.5\tabcolsep}} c @{\hspace{0.5\tabcolsep}} c}
    \toprule
         & NFE & FID  \\
         \midrule \textit{EDM-VP}~\cite{Karras2022ElucidatingTD}\\ \midrule
{ODE~(Heun)}& 63 & 1.97\\ & 35& 1.97 \\
         {Improved SDE}
         & 63& 2.27 \\
         & 35& 2.45 \\
        {Restart} & 43& \textbf{1.90}\\
        \midrule \textit{PFGM++}~\cite{Xu2023PFGMUT}\\ \midrule
        {ODE~(Heun)}& 63 & 1.91\\ 
        & 35& 1.91 \\
        Restart & 43 & \textbf{1.88}\\
        \bottomrule
    \end{tabular}
	\end{minipage}\hfill
	\begin{minipage}{0.55\linewidth}
    \includegraphics[width=0.95\textwidth]{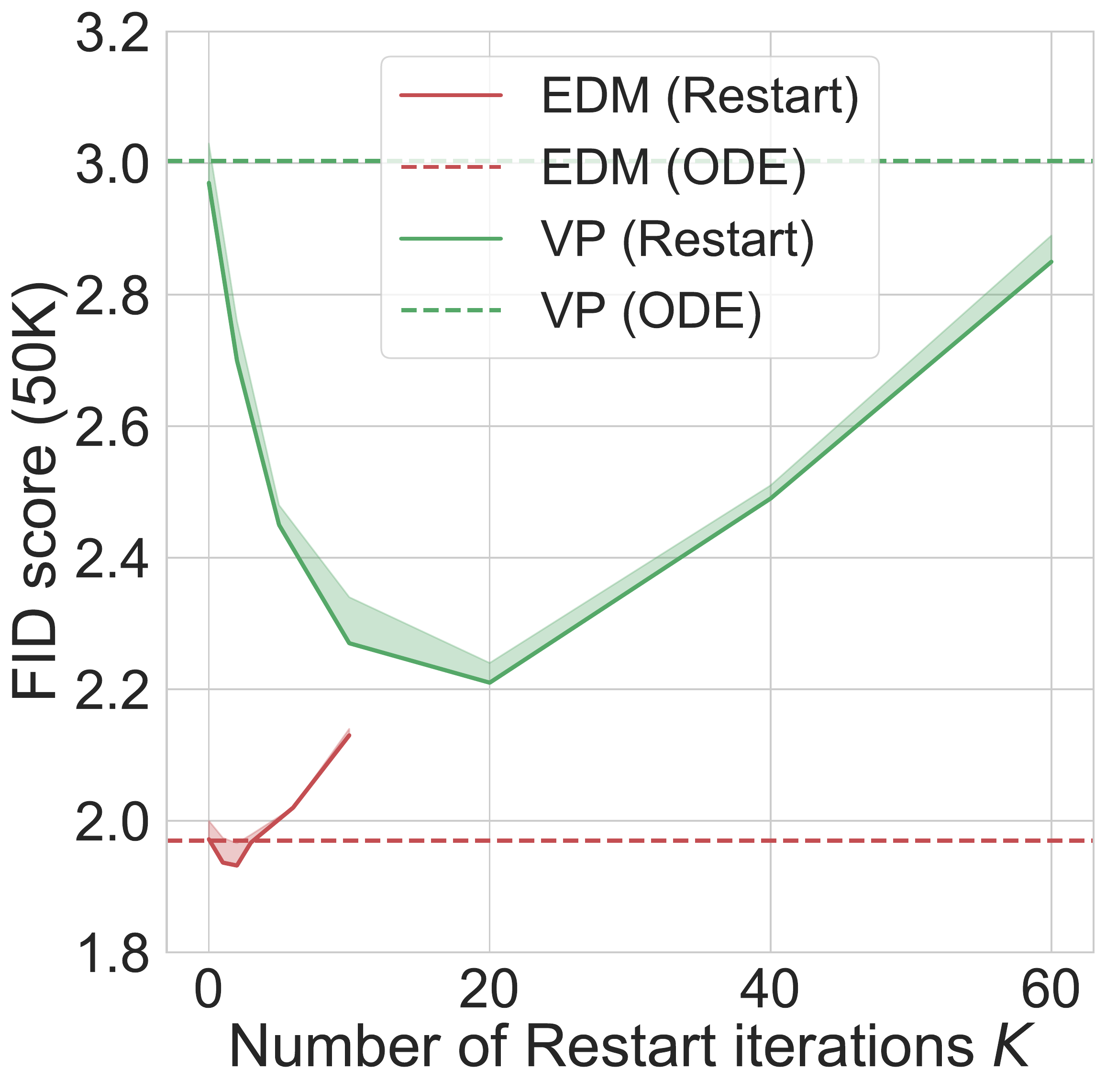}
    \vspace{-3pt}
    \captionof{figure}{\footnotesize FID score with a varying number of Restart iterations $K$.}
    \label{fig:k}
	\end{minipage}
    \vspace{-10pt}
\end{wrapfigure}

Theorem~\ref{thm2} shows that each Restart iteration reduces the contracted errors while increasing the additional sampling errors in the backward process. In \Figref{fig:k}, we explore the choice of the number of Restart iterations $K$ on CIFAR-10. We find that FID score initially improves and later worsens with increasing iterations $K$, with a smaller turning point for stronger EDM model. This supports the theoretical analysis that sampling errors will eventually outweigh the contraction benefits as $K$ increases, and EDM only permits fewer Restart iterations due to smaller accumulated errors. It also suggests that, as a rule of thumb, we should apply greater Restart strength (\eg larger $K$) for weaker or smaller architectures and vice versa. 


\subsection{Experiments on large-scale text-to-image model}
\begin{figure*}[h]
\vspace{-8pt}
    \centering
    \subfigure[FID versus CLIP score]{\includegraphics[width=0.47\textwidth]{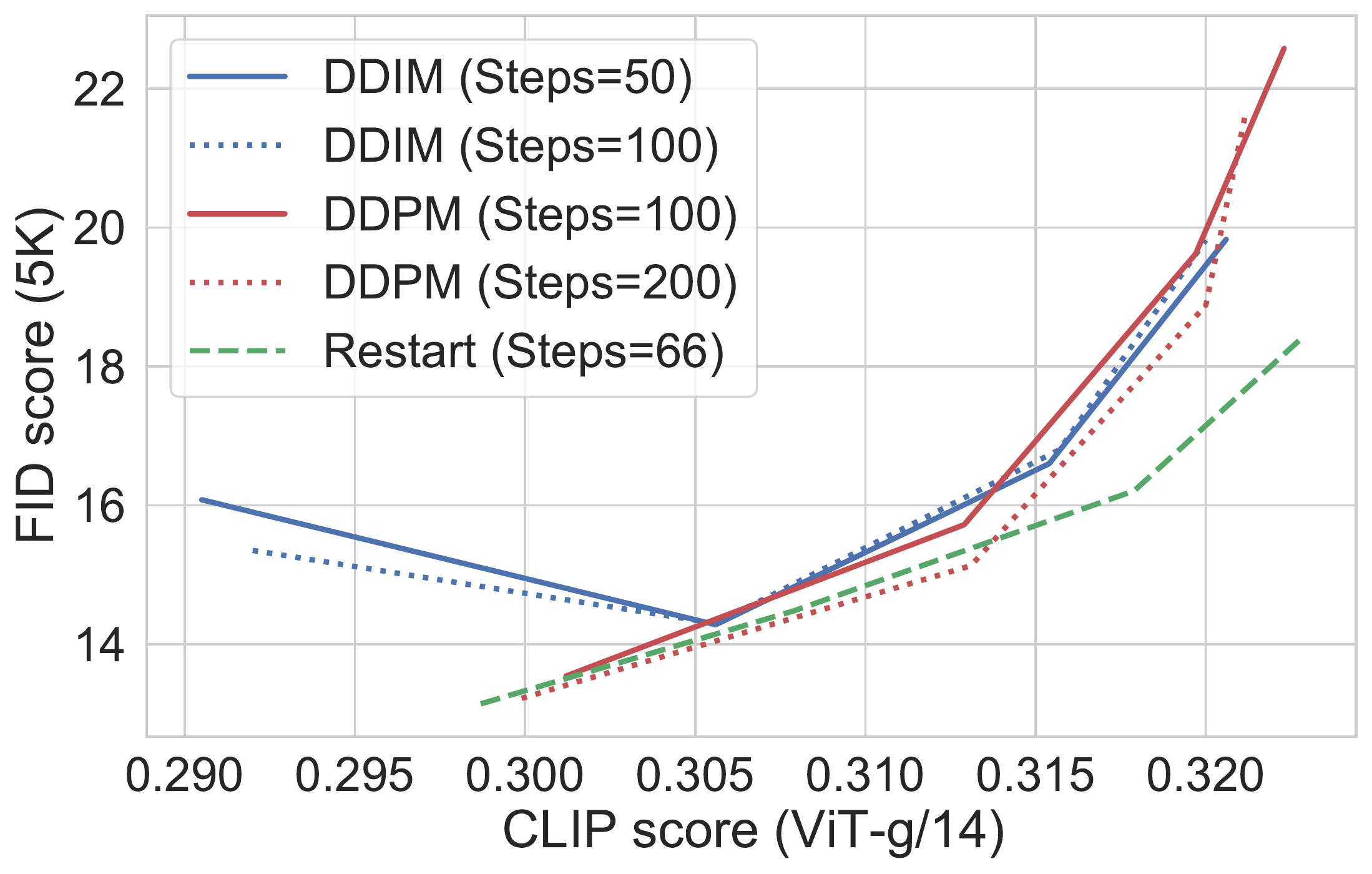}  \label{fig:fid_clip}}\hfill
   \subfigure[FID versus Aesthetic score]{\includegraphics[width=0.47\textwidth]{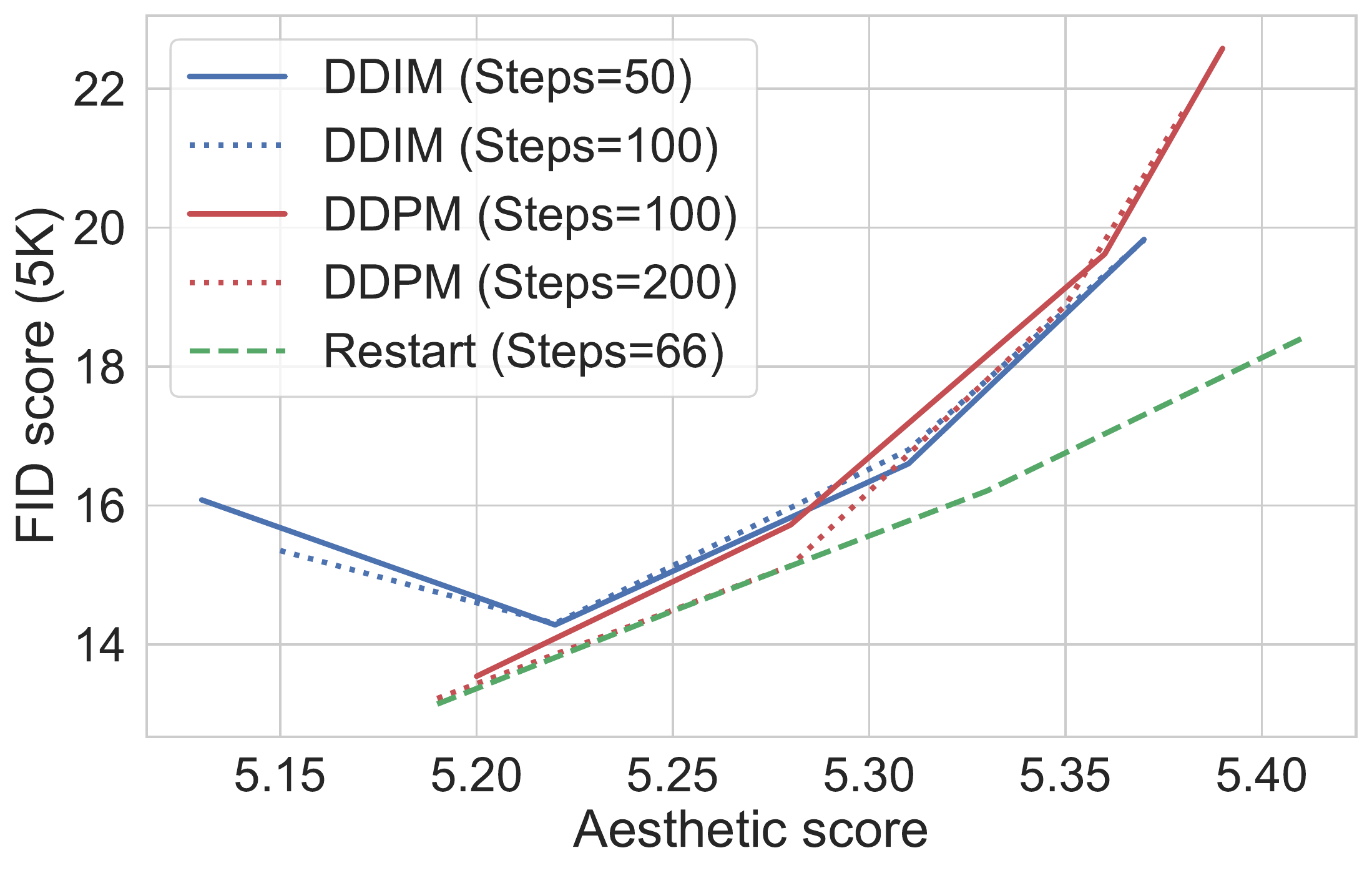}    \label{fig:fid_aes}}
     \vspace{-6pt}
    \caption{FID score versus \textbf{(a)} CLIP ViT-g/14 score and \textbf{(b)} Aesthetic score for text-to-image generation at $512 \times 512$ resolution,  using Stable Diffusion v1.5 with a varying classifier-free guidance weight $w=2,3,5,8$. }
      \vspace{-6pt}
\end{figure*}
We further apply Restart to the text-to-image Stable Diffusion v1.5~\footnote{\url{https://huggingface.co/runwayml/stable-diffusion-v1-5}} pre-trained on LAION-5B~\cite{Schuhmann2022LAION5BAO} at a resolution of $512 \times 512$. We employ the commonly used classifier-free guidance~\cite{Ho2022ClassifierFreeDG, Saharia2022PhotorealisticTD} for sampling, wherein each sampling step entails two function evaluations -- the conditional and unconditional predictions. Following \cite{Nichol2021GLIDETP, Saharia2022PhotorealisticTD}, we use the COCO~\cite{Lin2014MicrosoftCC} validation set for evaluation. We assess text-image alignment using the CLIP score~\cite{Hessel2021CLIPScoreAR} with the open-sourced ViT-g/14~\cite{ilharco_gabriel_2021_5143773}, and measure diversity via the FID score. We also evaluate visual quality through the Aesthetic score, as rated by the LAION-Aesthetics Predictor V2~\cite{LAION-AI}. Following \cite{Meng2022OnDO}, we compute all evaluation metrics using 5K captions randomly sampled from the validation set and plot the trade-off curves between CLIP/Aesthetic scores and FID score, with the classifier-free guidance weight $w$ in $\{2,3,5,8\}$. 

We compare with commonly used ODE sampler DDIM~\cite{Song2020DenoisingDI} and the stochastic sampler DDPM~\cite{Ho2020DenoisingDP}. For Restart, we adopt the DDIM solver with 30 steps in the main backward process, and Heun in the Restart backward process, as we empirically find that Heun performs better than DDIM in the Restart. In addition, we select different sets of the hyperparameters for each guidance weight. For instance, when $w=8$, we use $[\tmin,\tmax]{=}[0.1,2], K{=}2$ and $10$ steps in Restart backward process. We defer the detailed Restart configuration to Appendix~\ref{app:config}, and the results of Heun to Appendix~\ref{app:extra-numerical}. 

As illustrated in \Figref{fig:fid_clip} and \Figref{fig:fid_aes}, Restart achieves better FID scores in most cases, given the same CLIP/Aesthetic scores, using only 132 function evaluations (\ie 66 sampling steps). Remarkably, Restart achieves substantially lower FID scores than other samplers when CLIP/Aesthetic scores are high (\ie with larger $w$ values). Conversely, Restart generally obtains a better text-image alignment/visual quality given the same FID.  We also observe that DDPM generally obtains comparable performance with Restart in FID score when CLIP/Aesthetic scores are low, with Restart being more time-efficient. These findings suggest that Restart balances diversity (FID score) against text-image alignment (CLIP score) or visual quality (Aesthetic score) more effectively than previous samplers.

In \Figref{fig:vis}, we visualize the images generated by Restart, DDIM and DDPM with $w=8$. Compared to DDIM, the Restart generates images with superior details~(\eg the rendition of duck legs by DDIM is less accurate) and visual quality. Compared to DDPM, Restart yields more photo-realistic images~(\eg the astronaut). We provide extended of text-to-image generated samples in Appendix~\ref{app:extended-gen}.

\label{exp-t2i}

\begin{figure*}[t]
    \centering
 \subfigure[Restart (Steps=66)]{\includegraphics[width=0.33\textwidth]{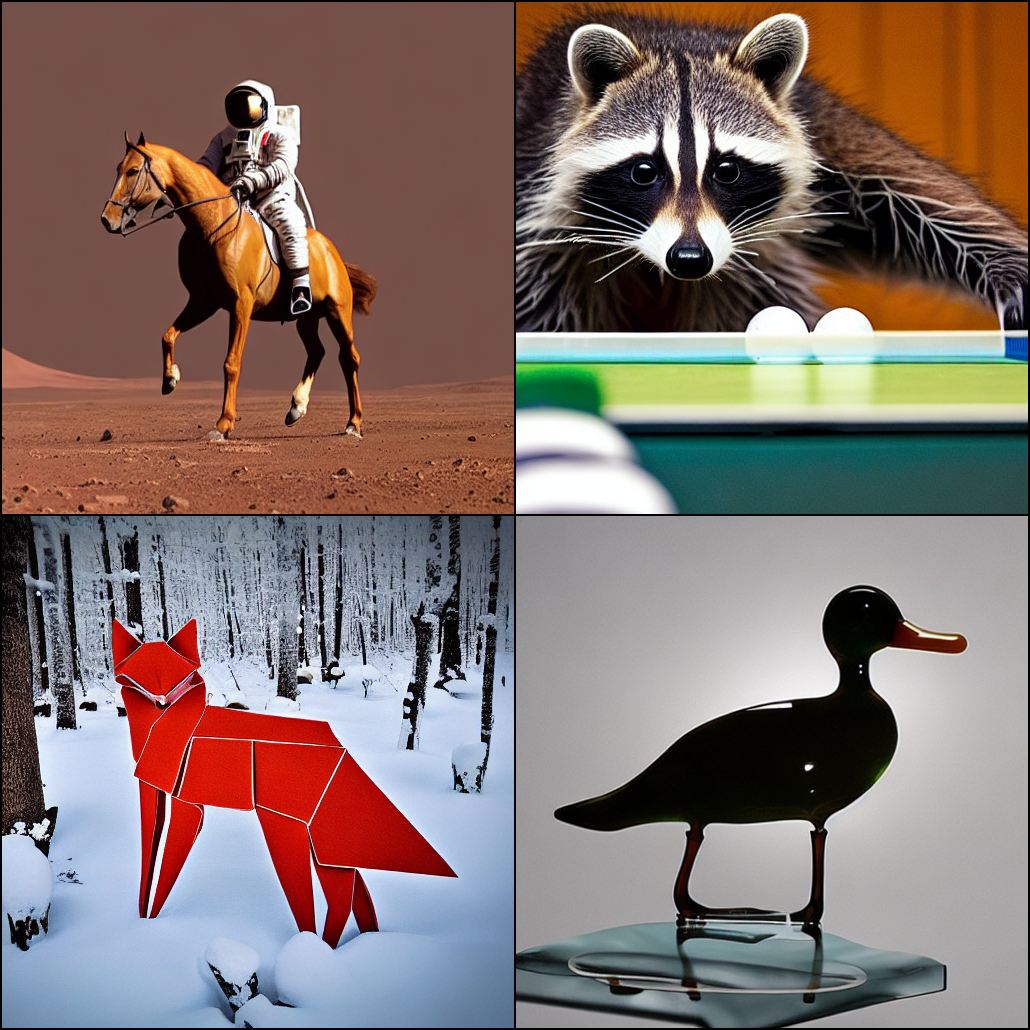}}\hfill
    \subfigure[DDIM (Steps=100)]{\includegraphics[width=0.33\textwidth]{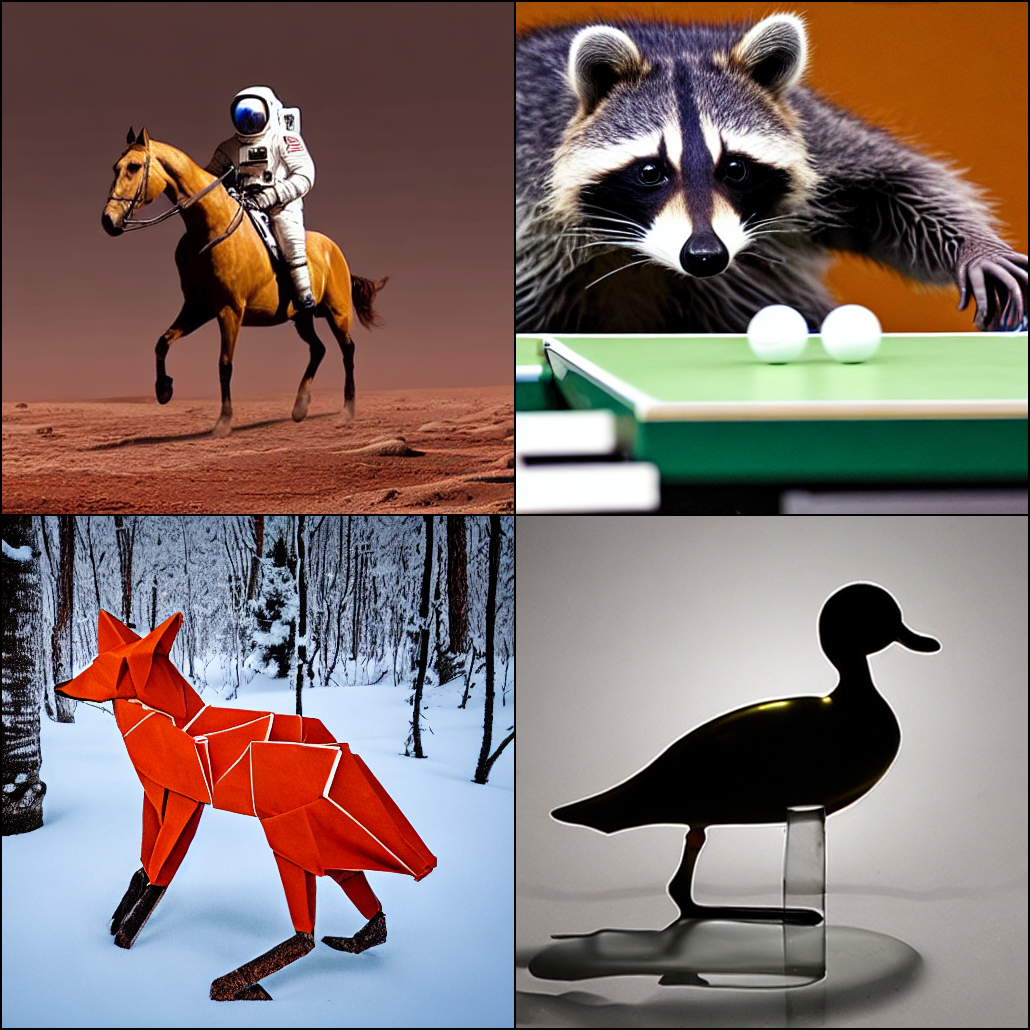}}\hfill
   \subfigure[DDPM (Steps=100)]{\includegraphics[width=0.33\textwidth]{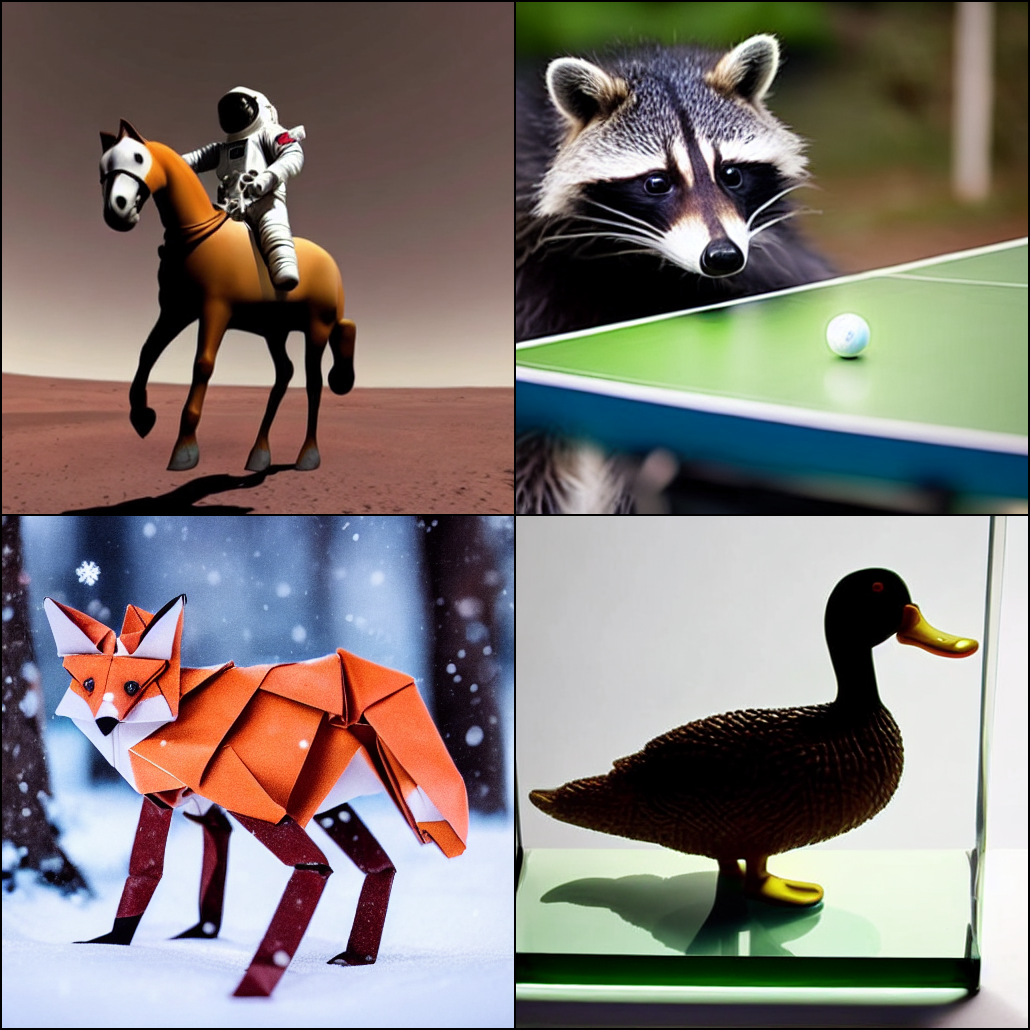}}
   \vspace{-6pt}
    \caption{Visualization of generated images with classifier-free guidance weight $w=8$, using four text prompts~(``A photo of an astronaut riding a horse on mars.", "A raccoon playing table tennis",
          "Intricate origami of a fox in a snowy forest" and "A transparent sculpture of a duck made out of glass") and the \textbf{same} random seeds.}
    \label{fig:vis}
    \vspace{-10pt}
\end{figure*}

\vspace{-2pt}
\section{Conclusion and future direction}
\label{sec:conclusion}
\vspace{-1pt}

In this paper, we introduce the Restart sampling for generative processes involving differential equations, such as diffusion models and PFGMs. By interweaving a forward process that adds a significant amount of noise with a corresponding backward ODE, Restart harnesses and even enhances the individual advantages of both ODE and SDE. Theoretically, Restart provides greater contraction effects of stochasticity while maintaining ODE-level discretization error. Empirically, Restart achieves a superior balance between quality and time, and improves the text-image alignment/visual quality and diversity trade-off in the text-to-image Stable Diffusion models.

A current limitation of the Restart algorithm is the absence of a principled way for hyperparameters selection, including the number of iterations $K$ and the time interval $[\tmin, \tmax]$. At present, we adjust these parameters based on the heuristic that weaker/smaller models, or more challenging tasks, necessitate a stronger Restart strength. In the future direction, we anticipate developing a more principled approach to automating the selection of optimal hyperparameters for Restart based on the error analysis of models, in order to fully unleash the potential of the Restart framework. 

\section*{Acknowledgements}

YX and TJ acknowledge support from MIT-DSTA Singapore collaboration, from NSF Expeditions grant (award 1918839) "Understanding the World Through Code", and from MIT-IBM Grand Challenge project. Xiang Cheng acknowledges support from NSF CCF-2112665 (TILOS AI Research Institute).

\clearpage
\bibliographystyle{plain}
\bibliography{ref}
\newpage
\appendix
{\huge Appendix}

\section{Proofs of Main Theoretical Results}

In this section, we provide proofs of our main results. We define below some crucial notations which we will use throughout. We use $\ODE(\dots)$ to denote the backwards ODE under exact score $\nabla \log p_t(x)$. More specifically, given any $x\in \mathbb{R}^d$ and $s>r>0$, let $x_t$ denote the solution to the following ODE:
\begin{align*}
    d x_t = - t \nabla \log p_t(x_t) dt.
    \numberthis \label{d:ODE}
\end{align*}
$\ODE(x, s \to r)$ is defined as "the value of $x_r$ when initialized at $x_s = x$". It will also be useful to consider a "time-discretized ODE with drift $t s_{\theta}(x,t)$": let $\delta$ denote the discretization step size and let $k$ denote any integer. Let $\delta$ denote a step size, let $\bx_t$ denote the solution to
\begin{align*}
    d \bx_t = - t  s_{\theta}(x_{k\delta},k\delta) dt,
    \numberthis \label{d:bODE}
\end{align*}
where for any $t$, $k$ is the unique integer such that $t\in ((k-1)\delta, k\delta]$.  We verify that the dynamics of \Eqref{d:bODE} is equivalent to the following discrete-time dynamics for $t=k\delta, k\in \mathbb{Z}$:
\begin{align*}
    \bx_{(k-1)\delta} = \bx_{k\delta} - \frac{1}{2} \lrp{\lrp{(k-1)\delta}^2 - (k\delta)^2}  s_{\theta}(x_{k\delta}, k\delta).
\end{align*}
We similarly denote the value of $\bx_r$ when initialized at $\bx_s = x$ as $\bODE(x,s\to r)$. Analogously, we let $\SDE(x,s\to r)$ and $\bSDE(x,s\to r)$ denote solutions to 
\begin{align*}
    & d y_t = - 2t \nabla \log p_t(y_t) dt + \sqrt{2t} dB_t\\
    & d \by_t = - 2t s_{\theta}(\by_t, t) dt + \sqrt{2t} dB_t
\end{align*}
respectively. Finally, we will define the $\bRestart$ process as follows:
\begin{align*}
(\textrm{$\bRestart$ forward process}) \qquad x^{i+1}_{\tmax} &= x^i_{\tmin} + \varepsilon^i_{\tmin \to \tmax}\\
(\textrm{$\bRestart$ backward process}) \qquad x^{i+1}_{\tmin} &= \bODE(x^{i+1}_{\tmax}, \tmax \to \tmin),
\numberthis \label{d:bRestart}
\end{align*}
where $\varepsilon^i_{\tmin \to \tmax} \sim \mathcal{N}\lrp{\bm{0},\lrp{\tmax^2 - \tmin^2}\bm{I}}$. We use $\bRestart(x, K)$ to denote $x^{K}_{\tmin}$ in the above processes, initialized at $x^{0}_{\tmin} = x$. In various theorems, we will refer to a function $Q(r) : \mathbb{R}^+ \to [0,1/2)$, defined as the Gaussian tail probability $Q(r) = Pr(a \ge r)$ for $a\sim \mathcal{N}(0,1)$.

\subsection{Main Result}
\label{a:main_result}
\begin{theorem}
\label{thm1}[Formal version of Theorem \ref{thm1_informal}]
Let $\tmax$ be the initial noise level. Let the initial random variables $\bx_{\tmax} = \by_{\tmax}$, and
\begin{align*}
    & \bx_{\tmin} = \bODE(\bx_{\tmax}, \tmax \to \tmin)\\
    & \by_{\tmin} = \bSDE(\by_{\tmax}, \tmax \to \tmin),
\end{align*}
Let $p_t$ denote the true population distribution at noise level $t$. Let $p^{\bODE}_t, p^{\bSDE}_t$ denote the distributions for $x_t,y_t$ respectively. Assume that for all $x,y,s,t$, $s_{\theta}(x,t)$ satisfies $\lrn{ts_\theta(x,t) - ts_{\theta}(x,s)}\leq L_0 |s-t|$, $\lrn{ts_{\theta}(x,t)}\leq L_1$,  $\lrn{ts_\theta(x,t) - ts_{\theta}(y,t)}\leq L_2 \lrn{x-y}$, and the approximation error $\lrn{t s_{\theta}(x,t)- t \nabla \log p_t(x)} \leq \eapprox$. Assume in addition that $\forall t \in  [\tmin, \tmax]$, $\lrn{x_t} < B/2$ for any $x_t$ in the support of $p_t$, $p^{\bODE}_t$ or $p^{\bSDE}_{t}$, and $K \leq \frac{C}{L_2 \lrp{\tmax - \tmin}}$ for some universal constant $C$. Then
\begin{align*}
    & W_1(p^{\bODE}_{\tmin}, p_{\tmin}) \leq   B \cdot TV\lrp{p^{\bODE}_{\tmax}, p_{\tmax}}\\
    & \hspace{80pt}+e^{L_2 \lrp{\tmax - \tmin}}\cdot \lrp{\delta (L_2 L_1 + L_0) + \eapprox} \lrp{\tmax - \tmin}\numberthis\label{eq:thm3-ode}\\
    & W_1(p^{\bSDE}_{\tmin}, p_{\tmin}) \leq B \cdot \lrp{1 - \lambda e^{-BL_1/\tmin- L_1^2 \tmax^2 / \tmin^2}}TV(p^{\bSDE}_{\tmax}, p_{\tmax})\\
    & \hspace{80pt}+e^{2L_2 \lrp{\tmax - \tmin}} \lrp{\eapprox + \delta L_0 + L_2 \lrp{\delta L_1 + \sqrt{2\delta d \tmax}}}\lrp{\tmax - \tmin}\numberthis\label{eq:thm3-sde}
\end{align*}
where $\lambda := 2Q\lrp{\frac{B}{2\sqrt{\tmax^2 - \tmin^2}}}$.
\end{theorem}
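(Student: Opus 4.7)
The plan is to prove the ODE bound (\ref{eq:thm3-ode}) and the SDE bound (\ref{eq:thm3-sde}) in parallel via a two-step coupling. In both cases I would first fix an optimal TV coupling of the initial distributions at $\tmax$: sample a pair $(\bx_{\tmax}, z_{\tmax})$ such that $\bx_{\tmax}$ is marginally $p^{\bODE}_{\tmax}$ (resp.\ $p^{\bSDE}_{\tmax}$), $z_{\tmax}$ is marginally $p_{\tmax}$, and the probability that they disagree equals the corresponding TV distance. Then from time $\tmax$ to $\tmin$ I would couple the downstream evolutions and bound $W_1$ by the expected distance under this coupling. The ``unequal'' event is controlled by the diameter $B$; the ``equal'' event is controlled by the pointwise ODE/SDE error.

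\textbf{ODE bound.} On the event $\bx_{\tmax}=z_{\tmax}$, I would compare $\bODE(\cdot,\tmax\to t)$ against $\ODE(\cdot,\tmax\to t)$ pathwise. Using \Eqref{d:ODE} and \Eqref{d:bODE}, the time-derivative of the difference equals $t\,s_\theta(\bx_{k\delta},k\delta) - t\,\nabla\log p_t(z_t)$, which I split into three pieces: the discretization-in-time gap, bounded by $L_0\delta$ together with $L_2\|\bx_t-\bx_{k\delta}\|\leq L_2 L_1\delta$; the approximation error, bounded by $\eapprox$; and the Lipschitz-in-$x$ residual $L_2\|\bx_t-z_t\|$. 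Grönwall's inequality then integrates to $e^{L_2(\tmax-\tmin)}[\delta(L_2 L_1+L_0)+\eapprox](\tmax-\tmin)$, and combining with the $B\cdot TV$ term from the ``unequal'' event gives (\ref{eq:thm3-ode}).

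\textbf{SDE bound and the contraction factor.} The new ingredient for (\ref{eq:thm3-sde}) is that the noise in $\bSDE$ can merge trajectories even when $\bx_{\tmax}\neq z_{\tmax}$. I would construct a coupling between $\bSDE(\bx_{\tmax},\tmax\to\tmin)$ and $\SDE(z_{\tmax},\tmax\to\tmin)$ as follows. First apply Girsanov's theorem on each process to remove the drifts $2t\,s_\theta$ and $2t\,\nabla\log p_t$, reducing both to pure Brownian motions with integrated variance $\int_{\tmin}^{\tmax}2t\,dt=\tmax^2-\tmin^2$. Under this equivalent measure, perform a reflection (maximal) coupling of the two resulting Gaussian endpoints, which differ in mean by at most $B$: their TV overlap is exactly $\lambda = 2Q\bigl(B/(2\sqrt{\tmax^2-\tmin^2})\bigr)$, so with probability $\lambda$ the trajectories meet and stay together. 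The Radon--Nikodym derivative in Girsanov's change of measure involves $\int_{\tmin}^{\tmax}\lVert t s_\theta/\sqrt{2t}\rVert^2\,dt$, which using $\lVert ts_\theta\rVert\leq L_1$ and the singular $1/t$ scaling near $\tmin$ is bounded below by $e^{-BL_1/\tmin - L_1^2\tmax^2/\tmin^2}$; this lower bound transfers the Brownian coupling probability $\lambda$ into an effective meeting probability $\lambda e^{-BL_1/\tmin - L_1^2\tmax^2/\tmin^2}$ under the original SDE dynamics. Multiplying by the diameter $B$ converts this TV-style contraction back to $W_1$. The remaining additional sampling error on the ``equal'' part is handled by a Grönwall argument analogous to the ODE case, with the extra $L_2\sqrt{2\delta d\tmax}$ term capturing the $\sqrt{\delta}$-order Brownian discretization contribution.

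\textbf{Main obstacle.} The subtle step is the Girsanov-plus-reflection coupling: one must simultaneously extract the $\lambda$ overlap from the Brownian reflection coupling and a quantitative lower bound on the Girsanov weight, while carefully tracking the $1/\tmin$ singularity in the score diffusion coefficient $\sqrt{2t}$. Once this is in hand, the ODE bound and the additional-sampling-error half of the SDE bound reduce to routine Grönwall estimates controlled by the Lipschitz constants $L_0,L_1,L_2$ and the uniform approximation error $\eapprox$.
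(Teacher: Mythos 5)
Your ODE argument matches the paper's: the paper also decomposes via an intermediate process $\hat x$ started from $p_{\tmax}$ and evolved under $\bODE$, bounding $W_1(p^{\bODE}_{\tmin},p_{\tmin})\le W_1(\bar x_{\tmin},\hat x_{\tmin})+W_1(\hat x_{\tmin},x_{\tmin})$, using data-processing plus the diameter $B$ for the first term and a Gr\"onwall estimate (Lemma~\ref{l:discretization_ode}) for the second. Your ``equal/unequal event'' coupling is the same decomposition phrased pathwise, and it is correct.

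For the SDE bound there are two genuine issues with your sketch. First, a structural one: you propose to couple $\bSDE(\bar x_{\tmax})$ directly against $\SDE(z_{\tmax})$, which mixes the contraction mechanism with the drift mismatch $s_\theta-\nabla\log p_t$. The paper instead introduces an auxiliary process $\hat y=\bSDE(y_{\tmax},\tmax\to\tmin)$ with $y_{\tmax}\sim p_{\tmax}$, so that the contraction (Lemma~\ref{l:sde_hitting_time}) compares \emph{two runs of the same discretized drift} $s_\theta$, while the drift/discretization mismatch is isolated in a separate Gr\"onwall estimate (Lemma~\ref{l:discretization_sde}). That separation is not cosmetic: the reflection-coupling argument for merging trajectories requires both processes to carry the same drift, otherwise the distance process acquires an extra uncontrolled drift term.

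Second, and more seriously, the Girsanov step does not work as stated. You claim the Radon--Nikodym derivative ``is bounded below by $e^{-BL_1/\tmin-L_1^2\tmax^2/\tmin^2}$.'' But the Girsanov weight has the form $\exp\bigl(\int b^\top dB-\tfrac12\int\lVert b\rVert^2\,dt\bigr)$, and while $\tfrac12\int\lVert b\rVert^2\,dt$ is deterministically controlled by $L_1,\tmin,\tmax$, the stochastic integral $\int b^\top dB$ has no pointwise lower bound; the weight can be arbitrarily small on a positive-probability set of paths. So ``probability $\lambda$ under the drift-free measure'' does not transfer to ``probability $\lambda e^{-\cdots}$ under the original measure'' by a uniform weight bound. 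The paper's mechanism for the factor $e^{-BL_1/\tmin-L_1^2\tmax^2/\tmin^2}$ is different: it runs a reflection coupling of the two $\bSDE$ trajectories, reduces to the one-dimensional radial process $r_t$, dominates it by a Brownian motion with constant drift $c\lesssim L_1/\tmin$, and then invokes the explicit hitting-time density $f(a,c,t)=\frac{a\exp(-(a+ct)^2/(2t))}{\sqrt{2\pi t^3}}\ge f(a,0,t)\,e^{-ac-c^2t/2}$ (Lemma~\ref{l:hitting-time-bound}). That comparison of densities, integrated over $[0,\tmax^2-\tmin^2]$, is where both the $2Q(\cdot)$ overlap and the exponential factor come from in one stroke, without needing a pointwise Girsanov bound. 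Your additional-sampling-error term via Gr\"onwall and the final $B\cdot TV$ conversion are fine once you make these two corrections.
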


\begin{proof}
    Let us define $x_{\tmax} \sim p_{\tmax}$, and let $x_{\tmin} = \ODE(x_{\tmax}, \tmax \to \tmin)$. We verify that $x_{\tmin}$ has density $p_{\tmin}$. Let us also define $\hat{x}_{\tmin} =  \bODE(x_{\tmax}, \tmax \to \tmin)$. We would like to bound the Wasserstein distance between $\Bar{x}_{\tmin}$ and $x_{\tmin}$~(\ie $p^{\bODE}_{\tmin}$ and $p_{\tmin}$), by the following triangular inequality:
    \begin{align*}
        W_1(\Bar{x}_{\tmin}, x_{\tmin}) \le W_1(\Bar{x}_{\tmin},\hat{x}_{\tmin})+W_1(\hat{x}_{\tmin}, x_{\tmin}) \numberthis \label{eq:thm3-tri} 
    \end{align*}    
    By Lemma \ref{l:discretization_ode}, we know that 
    \begin{align*}
        \lrn{\hat{x}_{\tmin}- x_{\tmin}}
        \leq e^{(\tmax-\tmin) L_2} \lrp{\delta (L_2 L_1 + L_0) + \eapprox} \lrp{\tmax - \tmin},
    \end{align*}
    where we use the fact that $\lrn{\hat{x}_{\tmax}- x_{\tmax}} = 0$. Thus we immediately have
    \begin{align*}
        W_1(\hat{x}_{\tmin}, x_{\tmin})\le e^{(\tmax-\tmin) L_2} \lrp{\delta (L_2 L_1 + L_0) + \eapprox} \lrp{\tmax - \tmin} \numberthis \label{eq:thm3-tri-1}
    \end{align*}
    On the other hand,
    \begin{align*}
        W_1({\hat{x}_{\tmin} , \bx_{\tmin}})
        \leq& B \cdot TV\lrp{{\hat{x}_{\tmin}, \bx_{\tmin}}}\\
        \leq& B \cdot TV\lrp{{\hat{x}_{\tmax}, \bx_{\tmax}}}\numberthis \label{eq:thm3-tri-2}
    \end{align*}
    where the last equality is due to the data-processing inequality. Combining \Eqref{eq:thm3-tri-1} , \Eqref{eq:thm3-tri-2} and the triangular inequality \Eqref{eq:thm3-tri}, we arrive at the upper bound for ODE~(\Eqref{eq:thm3-ode}). The upper bound for SDE~(\Eqref{eq:thm3-sde}) shares a similar proof approach. First, let $y_{\tmax} \sim p_{\tmax}$. Let $\hat{y}_{\tmin} = \bSDE(y_{\tmax}, \tmax \to \tmin)$. By Lemma \ref{l:sde_hitting_time},
    \begin{align*}
        TV\lrp{\hat{y}_{\tmin},\by_{\tmin}} \leq \lrp{1-2Q\lrp{\frac{B}{2\sqrt{\tmax^2 - \tmin^2}}} \cdot e^{-BL_1/\tmin - L_1^2 \tmax^2 / \tmin^2} }\cdot TV\lrp{\hat{y}_{\tmax}, \by_{\tmax}}
    \end{align*}
    On the other hand, by Lemma \ref{l:discretization_sde},
    \begin{align*}
        \E{\lrn{\hat{y}_{\tmin}- y_{\tmin}}} 
        \leq& e^{2L_2 \lrp{\tmax - \tmin}} \lrp{\eapprox + \delta L_0 + L_2 \lrp{\delta L_1 + \sqrt{2\delta d \tmax}}}\lrp{\tmax - \tmin}.
    \end{align*}
    The SDE triangular upper bound on $ W_1(\Bar{y}_{\tmin}, y_{\tmin})$ follows by multiplying the first inequality by $B$~(to bound $W_1(\Bar{y}_{\tmin},\hat{y}_{\tmin})$) and then adding the second inequality~(to bound $W_1(y_{\tmin},\hat{y}_{\tmin})$). Notice that by definition, $TV\lrp{\hat{y}_{\tmax}, \by_{\tmax}} = TV\lrp{{y}_{\tmax}, \by_{\tmax}}$. Finally, because of the assumption that $K \leq \frac{C}{L_2 \lrp{\tmax - \tmin}}$ for some universal constant, we summarize the second term in the \Eqref{eq:thm3-ode} and \Eqref{eq:thm3-sde} into the big $O$ in the informal version Theorem~\ref{thm1_informal}.
\end{proof}

\begin{theorem}
\label{thm2}[Formal version of Theorem \ref{thm2_informal}]
Consider the same setting as Theorem~\ref{thm1}. Let $p^{\bRestart, i}_{\tmin}$ denote the distributions after $i^\textrm{th}$ Restart iteration, \ie the distribution of $\bx^i_{\tmin} = \bRestart(\bx^0_{\tmin},i)$. Given initial $\bx^0_{\tmax}\sim p^{\Restart,0}_{\tmax}$, let $\bx^0_{\tmin} = \bODE(\bx^0_{\tmax}, \tmax \to \tmin)$. Then
\begin{align*}
   W_1(p^{\bRestart, K}_{\tmin}, p_{\tmin}) \leq& \underbrace{B \cdot \lrp{1 - \lambda}^K TV(p^{\Restart,0}_{\tmax}, p_{\tmax})}_{\textrm{upper bound on contracted error}}\\
   &+ \underbrace{e^{(K+1)L_2 \lrp{\tmax - \tmin}}(K+1)\lrp{\delta (L_2 L_1 + L_0) + \eapprox} \lrp{\tmax - \tmin}}_{\textrm{upper bound on additional sampling error}}
   \numberthis 
\end{align*}
where $\lambda = 2Q\lrp{\frac{B}{{2\sqrt{\tmax^2 - \tmin^2}}}}$.
\end{theorem}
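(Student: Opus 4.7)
The plan is to mirror the proof sketch already provided and lift it to the formal setting by carefully iterating the one-step contraction and discretization bounds across the $K$ Restart cycles. The starting point is to introduce, for each $i$, an auxiliary coupled process $q^{\bRestart(i)}_{t \in [\tmin,\tmax]}$ that performs exactly the same Restart iterations as $p^{\bRestart(i)}$ but is initialized from the \emph{true} distribution $p_{\tmax}$ at time $\tmax$ (rather than from $p^{\bRestart(0)}_{\tmax}$). Then I would split via the triangle inequality
\begin{align*}
    W_1(p^{\bRestart(K)}_{\tmin}, p_{\tmin}) \le W_1\!\bigl(p^{\bRestart(K)}_{\tmin}, q^{\bRestart(K)}_{\tmin}\bigr) + W_1\!\bigl(q^{\bRestart(K)}_{\tmin}, p_{\tmin}\bigr),
\end{align*}
identifying the first term as the contracted error and the second as the additional sampling error.

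For the contracted error, the plan is to build an explicit synchronous coupling of one trajectory sampled from $p^{\bRestart(i)}$ and one from $q^{\bRestart(i)}$ across all $K$ iterations. Within each Restart forward step $x^{i+1}_{\tmax} = x^i_{\tmin} + \varepsilon^i_{\tmin\to\tmax}$, I would use a maximal (reflection) coupling of the two Gaussian noises $\varepsilon^i$ so that, conditional on not yet having coalesced, the probability of mapping both trajectories to the same point is exactly $2Q(B/(2\sqrt{\tmax^2-\tmin^2})) = \lambda$. The uniform boundedness assumption $\|x_t\|<B/2$ ensures the two initial points lie within distance $B$, which is what makes this maximal coupling probability at least $\lambda$. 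Once the two trajectories coalesce, applying the \emph{same} Gaussian noise together with the deterministic $\bODE$ keeps them coupled forever (since $\bODE$ depends only on the state). Iterating this across $K$ rounds gives a TV bound of $(1-\lambda)^K \cdot TV(p^{\bRestart(0)}_{\tmax}, p_{\tmax})$, which I then upgrade to a $W_1$ bound by multiplying by $B$ thanks to the support bound.

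For the additional sampling error, the plan is to invoke the ODE bound already established in Theorem~\ref{thm1} (specifically Eq.~\ref{eq:thm3-ode}) and apply it iteratively to the $K+1$ backward ODE segments inside $q^{\bRestart}$ (the one segment from $T$ down to $\tmin$ plus the $K$ Restart backward segments from $\tmax$ to $\tmin$). Because $q^{\bRestart}$ is initialized at the true $p_{\tmax}$ and each Restart forward step maps the true marginal $p_{\tmin}$ back to the true marginal $p_{\tmax}$ exactly (the forward noise kernel is the true perturbation kernel of the data-generating diffusion), the only error introduced at each round is the ODE discretization plus approximation error, and these errors compose through the Lipschitz flow with an $e^{L_2(\tmax-\tmin)}$ factor per iteration, yielding the claimed $e^{(K+1)L_2(\tmax-\tmin)}(K+1)\bigl(\delta(L_2 L_1 + L_0)+\eapprox\bigr)(\tmax-\tmin)$ term.

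The main obstacle is the coupling argument for the contracted error: one must verify that the maximal coupling of the two Gaussian noises achieves coalescence probability at least $\lambda$ uniformly in the (bounded) distance between $p^{\bRestart(i)}_{\tmin}$ and $q^{\bRestart(i)}_{\tmin}$, and then carefully justify that the deterministic backward $\bODE$ preserves coalescence so the geometric factor $(1-\lambda)^K$ truly compounds. The key technical input is Lemma~\ref{l:sde_hitting_time}-style reasoning adapted to a single large noise injection rather than many small ones, which is precisely where Restart beats SDE: the full noise is injected in one shot, so the mixing probability per iteration is $\lambda = 2Q(B/(2\sqrt{\tmax^2-\tmin^2}))$ rather than the much smaller SDE contraction $\lambda e^{-BL_1/\tmin - L_1^2\tmax^2/\tmin^2}$ appearing in Theorem~\ref{thm1}. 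Once that per-iteration contraction is nailed down, the rest of the proof is bookkeeping: tensorizing the coupling over $K$ rounds, summing the per-round ODE errors, and finally absorbing the $K+1$ prefactor into the stated bound under the assumption $K \le C/(L_2(\tmax-\tmin))$.
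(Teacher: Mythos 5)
Your proposal follows the paper's proof essentially line-for-line: the same triangle-inequality split into contracted error (bounded via a reflection-coupling of the Gaussian restart noises, iterated $K$ times to give $(1-\lambda)^K$ and then upgraded to $W_1$ by the $B$-support bound, exactly as in Lemma~\ref{l:restart_tv_convergence} and Lemma~\ref{l:gaussian_tv_coupling}) and additional sampling error (bounded by iterating the ODE Gronwall/discretization estimate of Lemma~\ref{l:discretization_ode} across the $K+1$ backward segments while synchronizing the forward Gaussian noises, as in Lemma~\ref{l:discretization_restart}). Two small slips worth noting but not affecting correctness: the extra ODE segment runs from $\tmax$ to $\tmin$ (not from $T$), and the $(K+1)$ prefactor is kept explicit in the formal statement rather than absorbed—that absorption only happens in the informal Theorem~\ref{thm2_informal}.
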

\begin{proof}
    Let $x^0_{\tmax} ~ \sim p_{\tmax}$. Let $x^K_{\tmin} = \Restart(x^0_{\tmin},K)$. We verify that $x^K_{\tmin}$ has density $p_{\tmin}$. Let us also define $\hat{x}^0_{\tmin} = \bODE(x^0_{\tmax},\tmax \to \tmin)$ and $\hat{x}^K_{\tmin} = \bRestart(\hat{x}^0_{\tmin},K)$.
    
    By Lemma \ref{l:restart_tv_convergence}, 
    \begin{align*}
        TV\lrp{\bx^K_{\tmin}, \hat{x}^K_{\tmin}}
        \leq& \lrp{1-2Q\lrp{\frac{B}{{2\sqrt{\tmax^2 - \tmin^2}}}}}^K TV\lrp{\bx^0_{\tmin}, \hat{x}^0_{\tmin}}\\
        \le& \lrp{1-2Q\lrp{\frac{B}{{2\sqrt{\tmax^2 - \tmin^2}}}}}^K TV\lrp{\bx^0_{\tmax}, \hat{x}^0_{\tmax}}\\
        =& \lrp{1-2Q\lrp{\frac{B}{{2\sqrt{\tmax^2 - \tmin^2}}}}}^K TV\lrp{\bx^0_{\tmax}, x^0_{\tmax}}
    \end{align*}
    The second inequality holds by data processing inequality. The above can be used to bound the 1-Wasserstein distance as follows:
    \begin{align*}
        W_1\lrp{\bx^K_{\tmin}, \hat{x}^K_{\tmin}} \leq B \cdot TV\lrp{\bx^K_{\tmin}, \hat{x}^K_{\tmin}} \leq \lrp{1-2Q\lrp{\frac{B}{{2\sqrt{\tmax^2 - \tmin^2}}}}}^K TV\lrp{\bx^0_{\tmax}, x^0_{\tmax}}
        \numberthis \label{eq:thm4-tri1}
    \end{align*}
    On the other hand, using Lemma \ref{l:discretization_restart},
    \begin{align*}
        W_1 \lrp{x^K_{\tmin}, \hat{x}^K_{\tmin}} 
        \leq& \lrn{x^K_{\tmin} - \hat{x}^K_{\tmin}} \\
        \leq& e^{(K+1)L_2 \lrp{\tmax - \tmin}}(K+1)\lrp{\delta (L_2 L_1 + L_0) + \eapprox} \lrp{\tmax - \tmin}\numberthis \label{eq:thm4-tri2}
    \end{align*}
    We arrive at the result by combining the two bounds above~(\Eqref{eq:thm4-tri1}, \Eqref{eq:thm4-tri2}) with the following triangular inequality,
    \begin{align*}
            W_1(\Bar{x}_{\tmin}^K, x_{\tmin}^K) \le W_1(\Bar{x}_{\tmin}^K,\hat{x}_{\tmin}^K)+W_1(\hat{x}_{\tmin}^K, x_{\tmin}^K)
    \end{align*}
    
\end{proof}

\subsection{Mixing under Restart with exact ODE}

\begin{lemma}
    \label{l:restart_tv_convergence}
    Consider the same setup as Theorem \ref{thm2}. Consider the $\bRestart$ process defined in \eqref{d:bRestart}. Let
    \begin{align*}
        & x^{i}_{\tmin} = \bRestart(x^0_{\tmin}, i)\\
        & y^{i}_{\tmin} = \bRestart(y^0_{\tmin}, i).
    \end{align*}
    Let $p^{\bRestart(i)}_{t}$ and $q^{\bRestart(i)}_{t}$ denote the densities of $x^i_t$ and $y^i_t$ respectively. Then 
    \begin{align*}
        TV\lrp{p^{\bRestart(K)}_{\tmin},q^{\bRestart(K)}_{\tmin}}
        \leq& \lrp{1-\lambda}^K TV\lrp{p^{\bRestart(0)}_{\tmin},q^{\bRestart(0)}_{\tmin}},
    \end{align*}
    where $\lambda = 2Q\lrp{\frac{B}{{2\sqrt{\tmax^2 - \tmin^2}}}}$. 
\end{lemma}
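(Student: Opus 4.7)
My plan is to run a coupling argument: I will construct a joint process $(X^i, Y^i)_{i=0}^{K}$ with $X^0 \sim p^{\bRestart(0)}_{\tmin}$, $Y^0 \sim q^{\bRestart(0)}_{\tmin}$, and the correct marginal evolution under $\bRestart$, engineered so that (i) once $X^i = Y^i$ the trajectories coincide forever, and (ii) conditional on $X^i \ne Y^i$, the probability of merging at iteration $i{+}1$ is at least $\lambda$. Combined with the coupling characterization $TV(p^{\bRestart(K)}_{\tmin}, q^{\bRestart(K)}_{\tmin}) \le \Pr[X^K \ne Y^K]$, this will give the claimed geometric bound in a single line.

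The coupling is built inductively. I start with a maximal coupling $(X^0, Y^0)$ of the initial laws, so that $\Pr[X^0 \ne Y^0] = TV(p^{\bRestart(0)}_{\tmin}, q^{\bRestart(0)}_{\tmin})$. At iteration $i{+}1$, on the event $\{X^i = Y^i\}$ I drive the two Restart forward steps in \eqref{d:bRestart} with a common Gaussian noise; determinism of $\bODE$ then yields $X^{i+1} = Y^{i+1}$. On $\{X^i \ne Y^i\}$ I instead use the maximal coupling of $\mathcal{N}(X^i, (\tmax^2 - \tmin^2)I)$ and $\mathcal{N}(Y^i, (\tmax^2 - \tmin^2)I)$ for the noise pair, so that the two forward endpoints at $\tmax$ agree with the largest possible probability; any such agreement is propagated to $\tmin$ by the deterministic backward $\bODE$.

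The per-iteration contraction then reduces to a single Gaussian overlap estimate. Writing $\sigma := \sqrt{\tmax^2 - \tmin^2}$, the classical identity $TV(\mathcal{N}(\mu_1, \sigma^2 I), \mathcal{N}(\mu_2, \sigma^2 I)) = 1 - 2Q\lrp{\lrn{\mu_1 - \mu_2}/(2\sigma)}$ shows that, conditional on $X^i \ne Y^i$, the maximal coupling merges with probability at least $2Q(\lrn{X^i - Y^i}/(2\sigma))$. The support hypothesis $\lrn{\cdot} < B/2$, which applies to all intermediate distributions in Theorem~\ref{thm1_informal}, gives $\lrn{X^i - Y^i} \le B$ almost surely, and monotonicity of $Q$ upgrades this to the uniform lower bound $\lambda = 2Q(B/(2\sigma))$. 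Hence $\Pr[X^{i+1} \ne Y^{i+1}] \le (1-\lambda)\Pr[X^i \ne Y^i]$, and iterating $K$ times concludes the proof.

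The main difficulty I expect is not the Gaussian computation but the bookkeeping across iterations: the geometric rate hinges on a uniform-in-$i$ lower bound on the merge probability, which in turn requires the support bound $\lrn{\cdot} < B/2$ on \emph{every} iterate of both trajectories, not just the initial ones. This is exactly what the assumptions of Theorem~\ref{thm1_informal} stipulate for $p^{\bRestart(i)}_t$ and $q^{\bRestart(i)}_t$; without it, $\lambda$ would depend on $i$ and the clean $(1-\lambda)^K$ rate would be lost.
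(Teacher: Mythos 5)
Your proposal is correct and follows essentially the same route as the paper's proof: a per-iteration coupling that uses synchronous noise once the trajectories have merged and the maximal (reflection-type) coupling of the two Gaussians $\mathcal{N}(X^i,(\tmax^2-\tmin^2)I)$ and $\mathcal{N}(Y^i,(\tmax^2-\tmin^2)I)$ otherwise, with the merge probability bounded below by $2Q\lrp{B/(2\sqrt{\tmax^2-\tmin^2})}$ via the Gaussian TV identity (the paper's Lemma~\ref{l:gaussian_tv_coupling}) and the support assumption. Your remark about needing the $B/2$ support bound uniformly over all iterates to get an $i$-independent $\lambda$ is exactly the role that assumption plays in the paper as well.
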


\begin{proof}
    Conditioned on $x^i_{\tmin},y^i_{\tmin}$, let $x^{i+1}_{\tmax} = x^i_{\tmin} + \sqrt{\tmax^2 - \tmin^2} \xi_i^x$ and $y^{i+1}_{\tmax} = y^i_{\tmin} + \sqrt{\tmax^2 - \tmin^2} \xi_i^y$. We now define a coupling between $x^{i+1}_{\tmin}$ and $y^{i+1}_{\tmin}$ by specifying the joint distribution over $\xi^x_i$ and $\xi^y_i$. 
    
    If $x^i_{\tmin}=y^i_{\tmin}$, let $\xi^x_i = \xi^y_i$, so that $x^{i+1}_{\tmin}=y^{i+1}_{\tmin}$. On the other hand, if $x^i_{\tmin} \neq y^i_{\tmin}$, let $x^{i+1}_{\tmax}$ and $y^{i+1}_{\tmax}$ be coupled as described in the proof of Lemma \ref{l:gaussian_tv_coupling}, with $x' = x^{i+1}_{\tmax}, y' = y^{i+1}_{\tmax}, \sigma = \sqrt{\tmax^2 - \tmin^2}$. Under this coupling, we verify that, 
    \begin{align*}
        & \E{\ind{x^{i+1}_{\tmin} \neq y^{i+1}_{\tmin}}}\\
        \leq & \E{\ind{x^{i+1}_{\tmax} \neq y^{i+1}_{\tmax}}} \\
        \leq& \E{\lrp{1 - 2Q\lrp{\frac{\lrn{x^i_{\tmin} - y^i_{\tmin}}}{{2\sqrt{\tmax^2 - \tmin^2}}}}} \ind{x^{i}_{\tmin} \neq y^{i}_{\tmin}}}\\
        \leq& \lrp{1-2Q\lrp{\frac{B}{{2\sqrt{\tmax^2 - \tmin^2}}}}}\E{\ind{x^{i}_{\tmin} \neq y^{i}_{\tmin}}}.
    \end{align*}
    Applying the above recursively, 
    \begin{align*}
        \E{\ind{x^{K}_{\tmin} \neq y^{K}_{\tmin}}} \leq \lrp{1-2Q\lrp{\frac{B}{{2\sqrt{\tmax^2 - \tmin^2}}}}}^K \E{\ind{x^{0}_{\tmin} \neq y^{0}_{\tmin}}}.
    \end{align*}

    The conclusion follows by noticing that $TV\lrp{p^{\bRestart(K)}_{\tmin},q^{\bRestart(K)}_{\tmin}} \leq  Pr\lrp{x^{K}_{\tmin} \neq y^{K}_{\tmin}} = \E{\ind{x^{K}_{\tmin} \neq y^{K}_{\tmin}}}$, and by selecting the initial coupling so that $Pr\lrp{x^0_{\tmin} \neq y^0_{\tmin}} = TV\lrp{p^{\bRestart(0)}_{\tmin},q^{\bRestart(0)}_{\tmin}}$.
\end{proof}

\subsection{\texorpdfstring{$W_1$}{Lg} discretization bound}
\begin{lemma}[Discretization bound for ODE]
    \label{l:discretization_ode}
    Let $x_{\tmin} = \ODE\lrp{x_{\tmax}, \tmax \to \tmin}$ and let $\bx_{\tmin} = \bODE\lrp{\bx_{\tmax}, \tmax \to \tmin}$. 
    Assume that for all $x,y,s,t$, $s_{\theta}(x,t)$ satisfies $\lrn{ts_\theta(x,t) - ts_{\theta}(x,s)}\leq L_0 |s-t|$, $\lrn{ts_{\theta}(x,t)}\leq L_1$ and $\lrn{ts_\theta(x,t) - ts_{\theta}(y,t)}\leq L_2 \lrn{x-y}$. Then
    \begin{align*}
        \lrn{x_{\tmin}- \bx_{\tmin}}
        \leq e^{(\tmax-\tmin) L_2} \lrp{\lrn{x_{\tmax}- \bx_{\tmax}} + \lrp{\delta (L_2 L_1 + L_0) + \eapprox} \lrp{\tmax - \tmin}}
    \end{align*}
\end{lemma}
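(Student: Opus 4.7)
The plan is a standard Grönwall-style comparison of the two flows over the interval $[\tmin,\tmax]$, with the discretization subtlety confined to one auxiliary bound on $\|\bx_t - \bx_{k\delta}\|$. Let $e_t := x_t - \bx_t$ and let $k(t)$ be the unique integer with $t \in ((k-1)\delta, k\delta]$. Using the defining ODEs \eqref{d:ODE} and \eqref{d:bODE}, the error satisfies
\begin{equation*}
\frac{d e_t}{dt} \;=\; -\,t\nabla\log p_t(x_t) \;+\; t\,s_\theta(\bx_{k(t)\delta},\,k(t)\delta).
\end{equation*}

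The key step is to split the right-hand side by inserting the two intermediate terms $t\,s_\theta(x_t,t)$, $t\,s_\theta(\bx_t,t)$, and $t\,s_\theta(\bx_{k(t)\delta},t)$, then bound each of the four resulting pieces using the hypotheses: (i) the approximation error gives $\|t\nabla\log p_t(x_t) - t\,s_\theta(x_t,t)\| \le \eapprox$; (ii) spatial Lipschitzness gives $\|t\,s_\theta(x_t,t) - t\,s_\theta(\bx_t,t)\| \le L_2\|e_t\|$; (iii) spatial Lipschitzness again gives $\|t\,s_\theta(\bx_t,t) - t\,s_\theta(\bx_{k(t)\delta},t)\| \le L_2\|\bx_t - \bx_{k(t)\delta}\|$; and (iv) time Lipschitzness gives $\|t\,s_\theta(\bx_{k(t)\delta},t) - t\,s_\theta(\bx_{k(t)\delta},k(t)\delta)\| \le L_0\delta$.

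The one nontrivial piece is term (iii), which requires an a priori bound on how far the discretized trajectory can drift within a single step. Since on the interval $(k(t)\delta-\delta,\,k(t)\delta]$ the drift is the constant vector $-\,t\,s_\theta(\bx_{k(t)\delta},k(t)\delta)$ of norm at most $L_1$, integration over a window of length $\le \delta$ yields $\|\bx_t - \bx_{k(t)\delta}\| \le L_1\delta$, so piece (iii) is bounded by $L_2 L_1 \delta$. This is the main place one has to be careful about the piecewise-constant structure of the discretized drift; everything else is triangle inequality.

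Putting the four bounds together produces the differential inequality
\begin{equation*}
\Bigl|\tfrac{d}{dt}\|e_t\|\Bigr| \;\le\; L_2\|e_t\| \;+\; \bigl(\delta(L_2 L_1 + L_0) + \eapprox\bigr),
\end{equation*}
and integrating backward from $\tmax$ down to $\tmin$ via Grönwall's inequality gives
\begin{equation*}
\|e_{\tmin}\| \;\le\; e^{L_2(\tmax-\tmin)}\Bigl(\|e_{\tmax}\| + \bigl(\delta(L_2 L_1 + L_0) + \eapprox\bigr)(\tmax-\tmin)\Bigr),
\end{equation*}
which is exactly the claimed inequality. The argument is entirely deterministic (no coupling or TV bookkeeping as in the SDE/Restart analyses), so I expect no further obstacles beyond the piecewise-drift bound already noted.
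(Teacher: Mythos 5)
Your proof is correct and takes essentially the same route as the paper's: the same four-way telescoping decomposition (approximation error, spatial Lipschitzness twice, time Lipschitzness), the same $L_1\delta$ bound on the one-step drift $\|\bx_t - \bx_{k(t)\delta}\|$, and Gr\"onwall over $[\tmin,\tmax]$ (the paper phrases the last step via the time-reversed process $\xr_t := x_{-t}$ to integrate forward, but that is cosmetic). The only tiny slip is the phrase ``the drift is the constant vector $-t\,s_\theta(\bx_{k(t)\delta},k(t)\delta)$'': the score argument is frozen at $k(t)\delta$, but the prefactor $t$ varies over the window, so the drift is not literally constant. The bound still holds because $\|t\,s_\theta(\bx_{k\delta},k\delta)\| = \tfrac{t}{k\delta}\|k\delta\,s_\theta(\bx_{k\delta},k\delta)\| \le L_1$ for $t \le k\delta$, giving $\|\bx_t - \bx_{k(t)\delta}\| \le L_1\delta$ as claimed.
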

\begin{proof}
    Consider some fixed arbitrary $k$, and recall that $\delta$ is the step size. Recall that by definition of $\ODE$ and $\bODE$, for $t\in((k-1)\delta, k\delta]$,
    \begin{align*}
        & d x_t = - t \nabla \log p_t(x_t) dt\\
        & d \bx_t = - t \st(\bx_{k\delta}, k\delta) dt.
    \end{align*}
    For $t\in \lrb{\tmin,\tmax}$, let us define a time-reversed process $\xr_t := x_{- t}$. Let $v(x,t) := \nabla \log p_{- t}(x)$. Then for $t\in \lrb{-\tmax , -\tmin}$
    \begin{align*}
        d \xr_t = t v(\xr_t,t) ds.
    \end{align*}
    Similarly, define $\bxr_t := \bx_{-t}$ and $\bv(x,t) := s_{\theta}\lrp{x, -t}$. It follows that
    \begin{align*}
        d \bxr_t = t \bv(\bxr_{k\delta},k\delta) ds,
    \end{align*}
    where $k$ is the unique (negative) integer satisfying $t\in [k\delta, (k+1)\delta)$. Following these definitions,
    \begin{align*}
        & \frac{d}{dt} \lrn{\xr_{t}- \bxr_{t}}\\
        \leq& \lrn{t v(\xr_t,t) - t\bv(\xr_t,t) } \\
        &\qquad + \lrn{t \bv(\xr_t,t) - t \bv(\bxr_t, t)} \\
        &\qquad + \lrn{t \bv(\bxr_t, t) - t \bv(\bxr_t, k\delta)}\\
        &\qquad + \lrn{t \bv(\bxr_t, k\delta) - t \bv(\bxr_{k\delta}, k\delta)} \\
        \leq& \eapprox + L_2 \lrn{\xr_t - \bxr_t} + \delta L_0 + L_2 \lrn{\bxr_t - \bxr_{k\delta}}\\
        \leq&  \eapprox + L_2 \lrn{\xr_t - \bxr_t} + \delta L_0 + \delta L_2 L_1.
    \end{align*}
    Applying Gronwall's Lemma over the interval $t\in[-\tmax, -\tmin]$,
    \begin{align*}
        & \lrn{x_{\tmin} - \bx_{\tmin}}\\
        =& \lrn{\xr_{-\tmin} - \bxr_{-\tmin}}\\
        \leq& e^{L_2 \lrp{\tmax - \tmin}} \lrp{\lrn{\xr_{-\tmax} - \bxr_{-\tmax}} + \lrp{\eapprox + \delta L_0 + \delta L_2 L_1}\lrp{\tmax - \tmin}}\\
        =& e^{L_2 \lrp{\tmax - \tmin}} \lrp{\lrn{x_{\tmax} - \bx_{\tmax}} + \lrp{\eapprox + \delta L_0 + \delta L_2 L_1}\lrp{\tmax - \tmin}}.
    \end{align*}
\end{proof}

\begin{lemma}
    \label{l:discretization_restart}
    Given initial $x^0_{\tmax}$, let $x^0_{\tmin}=\ODE\lrp{x^0_{\tmax}, \tmax \to \tmin}$, and let $\hat{x}^0_{\tmin}=\bODE\lrp{x^0_{\tmax}, \tmax \to \tmin}$. We further denote the variables after $K$ Restart iterations as $x^K_{\tmin} = \Restart(x^0_{\tmin},K)$ and $\hat{x}^K_{\tmin} = \bRestart(\hat{x}^0_{\tmin},K)$, with true field and learned field respectively. Then there exists a coupling between $x^K_{\tmin}$ and $\hat{x}^K_{\tmin}$ such that 
    \begin{align*}
        \lrn{x^K_{\tmin} - \hat{x}^K_{\tmin}} \leq e^{(K+1)L_2 \lrp{\tmax - \tmin}}(K+1)\lrp{\delta (L_2 L_1 + L_0) + \eapprox} \lrp{\tmax - \tmin}.
    \end{align*}
\end{lemma}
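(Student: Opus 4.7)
The plan is to prove the bound by induction on the number of Restart iterations $K$, using Lemma~\ref{l:discretization_ode} as the one-step building block and choosing a synchronous coupling of the Restart noise across the two processes.

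First, I would set up the coupling. Since the Restart forward step is $x^{i+1}_{\tmax} = x^i_{\tmin} + \varepsilon^i_{\tmin\to\tmax}$ with a perturbation kernel that does not depend on the score, I couple the two trajectories by using the \emph{same} Gaussian noise $\varepsilon^i_{\tmin\to\tmax}$ in the Restart forward step of both the true and the discretized/learned processes. Under this coupling, $\lrn{x^{i+1}_{\tmax} - \hat{x}^{i+1}_{\tmax}} = \lrn{x^i_{\tmin} - \hat{x}^i_{\tmin}}$, so the forward step contributes no additional error.

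Next, I would apply Lemma~\ref{l:discretization_ode} to each Restart backward step. Writing $\alpha \defeq e^{L_2(\tmax-\tmin)}$ and $A \defeq (\delta(L_2 L_1 + L_0) + \eapprox)(\tmax-\tmin)$, Lemma~\ref{l:discretization_ode} applied to the pair $(x^{i+1}_{\tmax},\hat{x}^{i+1}_{\tmax}) \mapsto (x^{i+1}_{\tmin},\hat{x}^{i+1}_{\tmin})$ gives
\begin{align*}
    \lrn{x^{i+1}_{\tmin} - \hat{x}^{i+1}_{\tmin}} \;\le\; \alpha\lrp{\lrn{x^{i+1}_{\tmax}-\hat{x}^{i+1}_{\tmax}} + A} \;=\; \alpha\lrp{\lrn{x^i_{\tmin}-\hat{x}^i_{\tmin}} + A}.
\end{align*}
The same lemma applied to the initial pass $x^0_{\tmax}=\hat{x}^0_{\tmax}$ gives $\lrn{x^0_{\tmin}-\hat{x}^0_{\tmin}} \le \alpha A$.

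Finally, I would unroll the recurrence $e_{i+1} \le \alpha(e_i + A)$ with $e_0 \le \alpha A$ to obtain $e_K \le A\sum_{j=1}^{K+1}\alpha^j \le (K+1)\alpha^{K+1} A$, which is exactly the claimed bound. There is no real obstacle here beyond bookkeeping — the nontrivial ingredient (the one-step Gr\"onwall estimate with discretization, approximation, and time-regularity errors) is already packaged in Lemma~\ref{l:discretization_ode}, and the coupling choice ensures that the stochastic forward step cannot amplify the current discrepancy. The only subtlety worth flagging is that the bound is purely pathwise, so this gives a valid coupling witness for the $W_1$ estimate used in the proof of Theorem~\ref{thm2}.
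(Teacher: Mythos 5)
Your proof is correct, and it is essentially the same argument as the paper's, just organized more cleanly. Both use the identical synchronous coupling of the Restart noise and the identical building block (Lemma~\ref{l:discretization_ode} applied once per pass); the paper packages the accumulation as a telescoping/hybrid decomposition via auxiliary processes $y^{i,j}_{\tmin}=\bRestart(x^i_{\tmin},j-i)$ and bounds $\lrn{y^{0,K}_{\tmin}-\hat{x}^K_{\tmin}}+\sum_{i<K}\lrn{y^{i,K}_{\tmin}-y^{i+1,K}_{\tmin}}$, whereas you fold the same accounting into the one-step recurrence $e_{i+1}\le\alpha(e_i+A)$ with $e_0\le\alpha A$, whose unrolling gives the identical geometric sum $\sum_{j=1}^{K+1}\alpha^j A$ before rounding up to $(K+1)\alpha^{K+1}A$. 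The recurrence formulation is the more compact presentation; there is no substantive gap.
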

\begin{proof}
We will couple $x^i_{\tmin}$ and $\hat{x}^i_{\tmin}$ by using the same noise $\varepsilon^i_{\tmin \to \tmax}$ in the Restart forward process for $i=0\dots K-1$ (see \Eqref{d:bRestart}). For any $i$, let us also define $y^{i,j}_{\tmin}:= \bRestart\lrp{x^i_{\tmin},j-i}$, and this process uses the same noise $\varepsilon^i_{\tmin \to \tmax}$ as previous ones. From this definition, $y^{K, K}_{\tmin} = x^K_{\tmin}$. We can thus bound
\begin{align*}
    \lrn{x^K_{\tmin}, \hat{x}^K_{\tmin}}
    \leq& \lrn{y^{0,K}_{\tmin} - \hat{x}^{K}_{\tmin}} + \sum_{i=0}^{K-1} \lrn{y^{i,K}_{\tmin} - y^{i+1,K}_{\tmin}}
    \numberthis \label{e:asldkalkda:1}
\end{align*}
Using the assumption that $t s_{\theta}(\cdot,t)$ is $L_2$ Lipschitz,
\begin{align*}
    & \lrn{y^{0,i+1}_{\tmin} - \hat{x}^{i+1}_{\tmin}} \\
    =& \lrn{\bODE(y^{0,i}_{\tmax}, \tmax \to \tmin) - \bODE(\hat{x}^{i}_{\tmax}, \tmax \to \tmin)}\\
    \leq& e^{L_2 \lrp{\tmax - \tmin}} \lrn{y^{0,i}_{\tmax} - \hat{x}^{i}_{\tmax}}\\
    =& e^{L_2 \lrp{\tmax - \tmin}} \lrn{y^{0,i}_{\tmin} - \hat{x}^{i}_{\tmin}},
\end{align*}
where the last equality is because we add the same additive Gaussian noise $\varepsilon^i_{\tmin \to \tmax}$ to $y^{0,i}_{\tmin}$ and $\hat{x}^{i}_{\tmin}$ in the Restart forward process. Applying the above recursively, we get 
\begin{align*}
    \lrn{y^{0,K}_{\tmin} - \hat{x}^{K}_{\tmin}} 
    \leq& e^{K L_2 \lrp{\tmax - \tmin}} \lrn{y^{0,0}_{\tmin} - \hat{x}^{0}_{\tmin}}\\
    \leq& e^{K L_2 \lrp{\tmax - \tmin}} \lrn{x^0_{\tmin} - \hat{x}^{0}_{\tmin}}\\
    \leq& e^{(K+1) L_2 \lrp{\tmax - \tmin}} \lrp{\delta (L_2 L_1 + L_0) + \eapprox} \lrp{\tmax - \tmin},
    \numberthis \label{e:asldkalkda:2}
\end{align*}
where the last line follows by Lemma \ref{l:discretization_ode} when setting $x_{\tmax}=\Bar{x}_{\tmax}$. We will now bound $\lrn{y^{i,K}_{\tmin} - y^{i+1,K}_{\tmin}}$ for some $i \leq K$. It follows from definition that
\begin{align*}
    & y^{i,i+1}_{\tmin} = \bODE\lrp{x^{i}_{\tmax}, \tmax \to \tmin}\\
    & y^{i+1,i+1}_{\tmin} = x^{i+1}_{\tmin} = \ODE\lrp{x^{i}_{\tmax}, \tmax \to \tmin}.
\end{align*}
By Lemma \ref{l:discretization_ode}, 
\begin{align*}
    \lrn{y^{i,i+1}_{\tmin} - y^{i+1,i+1}_{\tmin}} \leq e^{L_2(\tmax-\tmin)} \lrp{\delta (L_2 L_1 + L_0) + \eapprox} \lrp{\tmax - \tmin}
\end{align*}
For the remaining steps from $i+2\dots K$, both $y^{i,\cdot}$ and $y^{i+1,\cdot}$ evolve with $\bODE$ in each step. Again using the assumption that $t s_{\theta}(\cdot,t)$ is $L_2$ Lipschitz,
\begin{align*}
    \lrn{y^{i,K}_{\tmin} - y^{i+1,K}_{\tmin}} \leq e^{(K-i)L_2(\tmax-\tmin)} \lrp{\delta (L_2 L_1 + L_0) + \eapprox} \lrp{\tmax - \tmin}
\end{align*}
Summing the above for $i=0...K-1$, and combining with \Eqref{e:asldkalkda:1} and \Eqref{e:asldkalkda:2} gives
\begin{align*}
    \lrn{x^K_{\tmin} - \hat{x}^K_{\tmin}} \leq e^{(K+1)L_2 \lrp{\tmax - \tmin}}(K+1)\lrp{\delta (L_2 L_1 + L_0) + \eapprox} \lrp{\tmax - \tmin}.
\end{align*}
\end{proof}

\begin{lemma}
    \label{l:discretization_sde}
    Consider the same setup as Theorem \ref{thm1}. Let $x_{\tmin} = \SDE\lrp{x_{\tmax}, \tmax \to \tmin}$ and let $\bx_{\tmin} = \SDE\lrp{\bx_{\tmax}, \tmax \to \tmin}$. Then there exists a coupling between $x_t$ and $\bx_t$ such that
    \begin{align*}
        \E{\lrn{x_{\tmin}- \bx_{\tmin}}} 
        &\leq e^{2L_2 \lrp{\tmax - \tmin}} \E{\lrn{x_{\tmax}- \bx_{\tmax}}}\\
        &\hspace{20pt}+ e^{2L_2 \lrp{\tmax - \tmin}}\lrp{\eapprox + \delta L_0 + L_2 \lrp{\delta L_1 + \sqrt{2\delta d \tmax}}}\lrp{\tmax - \tmin}
    \end{align*}
\end{lemma}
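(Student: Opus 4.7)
The plan is to mirror the ODE discretization argument of Lemma~\ref{l:discretization_ode} but adapt it to the SDE setting by using a synchronous coupling of the driving Brownian motion, so that the stochastic integrals cancel pathwise and only the drift mismatch contributes to $\lrn{x_t - \bx_t}$. First, I would time-reverse both processes via $\xr_t := x_{-t}$ and $\bxr_t := \bx_{-t}$ for $t \in [-\tmax, -\tmin]$, coupling them so that they share the same Brownian motion $B_t$. Under this coupling the reversed processes satisfy
\begin{align*}
    d\xr_t &= -2t\,\nabla\log p_{-t}(\xr_t)\,dt + \sqrt{2|t|}\,dB_t,\\
    d\bxr_t &= -2t\,s_\theta(\bxr_{k\delta}, k\delta)\,dt + \sqrt{2|t|}\,dB_t,
\end{align*}
so that $d(\xr_t - \bxr_t)$ is a pure drift term with no diffusion.

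Next, I would decompose the drift mismatch at any $t \in [k\delta, (k+1)\delta)$ into the same four pieces used in Lemma~\ref{l:discretization_ode}: (i) the approximation error between $t\nabla\log p_t$ and $ts_\theta$, contributing $\eapprox$; (ii) the spatial Lipschitz term $L_2 \lrn{\xr_t - \bxr_t}$; (iii) the temporal Lipschitz term $\delta L_0$ from freezing the time argument of $s_\theta$; and (iv) the spatial term $L_2 \lrn{\bxr_t - \bxr_{k\delta}}$ from freezing the state. The only genuinely new ingredient versus the ODE case is bounding (iv) in expectation: under the discretized SDE, $\bxr_t - \bxr_{k\delta}$ consists of a drift piece bounded by $\delta L_1$ plus a Brownian increment whose $L^2$ norm is $\sqrt{2|t|\delta\cdot d} \leq \sqrt{2\delta d\tmax}$, so that $\E{\lrn{\bxr_t - \bxr_{k\delta}}} \leq \delta L_1 + \sqrt{2\delta d\tmax}$ by Jensen.

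Finally, since the SDE drift coefficient is $-2t\nabla\log p_t$ rather than $-t\nabla\log p_t$, the Lipschitz constant appearing in Gronwall's inequality becomes $2L_2$. Assembling the four pieces and applying Gronwall to $\E{\lrn{\xr_t - \bxr_t}}$ on the interval $[-\tmax, -\tmin]$ gives
\begin{align*}
    \E{\lrn{x_{\tmin} - \bx_{\tmin}}} \leq e^{2L_2(\tmax-\tmin)}\E{\lrn{x_{\tmax}-\bx_{\tmax}}} + e^{2L_2(\tmax-\tmin)}\lrp{\eapprox + \delta L_0 + L_2\lrp{\delta L_1 + \sqrt{2\delta d\tmax}}}(\tmax - \tmin),
\end{align*}
which is the claimed bound.

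The main obstacle I anticipate is the technical justification that $\frac{d}{dt}\E{\lrn{\xr_t - \bxr_t}}$ is indeed dominated by the four-term drift decomposition above: because $\lrn{\cdot}$ is not smooth at zero, one needs either a mollification argument or to work with $\lrn{\cdot}^2$ via Itô's formula and then take square roots, keeping in mind that the synchronous coupling annihilates the martingale part before any non-smooth manipulation. Once this pathwise/expectation cancellation is established, the rest is a direct transcription of the ODE Gronwall computation with the extra Brownian-increment term from step (iv).
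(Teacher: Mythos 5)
Your proposal is correct and follows essentially the same route as the paper's proof: synchronously couple the Brownian motions so that $d(\xr_t - \bxr_t)$ is pure drift, time-reverse, decompose the drift mismatch into the same four pieces as the ODE lemma with the new expectation bound $\E{\lrn{\bxr_t - \bxr_{k\delta}}} \le \delta L_1 + \sqrt{2\delta d\tmax}$, and close with Gronwall at rate $2L_2$. The non-smoothness worry you raise at the end is actually moot here: under the synchronous coupling the martingale parts cancel pathwise, so $\xr_t - \bxr_t$ is absolutely continuous in $t$ and $\lrn{\xr_t - \bxr_t}$ is differentiable a.e.\ with $\bigl|\tfrac{d}{dt}\lrn{\xr_t-\bxr_t}\bigr|\le\lrn{\tfrac{d}{dt}(\xr_t-\bxr_t)}$, so no It\^o formula or mollification is needed, exactly as the paper implicitly assumes.
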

\begin{proof}
    Consider some fixed arbitrary $k$, and recall that $\delta$ is the stepsize. By definition of $\SDE$ and $\bSDE$, for $t\in((k-1)\delta, k\delta]$,
    \begin{align*}
        & d x_t = - 2 t \nabla \log p_t(x_t) dt + \sqrt{2t} dB_t\\
        & d \bx_t = - 2 t \st(\bx_{k\delta}, k\delta) dt + \sqrt{2t} dB_t.
    \end{align*}
    Let us define a coupling between $x_t$ and $\bx_t$ by identifying their respective Brownian motions. It will be convenient to define the time-reversed processes $\xr_t := x_{-t}$, and $\bxr_t := \bx_{-t}$, along with $v(x,t):= \nabla \log p_{-t}(x)$ and $\bv(x,t) := s_{\theta}(x,-t)$. Then there exists a Brownian motion $\Br_t$, such that for $t\in[-\tmax, -\tmin]$,
    \begin{align*}
        & d \xr_t =  - 2 t v(\xr_t,t) dt + \sqrt{-2t} d\Br_t\\
        & d \bxr_t =  - 2 t \bv(\bxr_{k\delta},k\delta) dt + \sqrt{-2t} d\Br_t\\
        \Rightarrow \qquad 
        & d (\xr_t - \bxr_t) = - 2 t\lrp{ v(\xr_t,t) - \bv(\bxr_{k\delta},k\delta)}dt,
    \end{align*}
    where $k$ is the unique negative integer such that $t \in [k\delta, (k+1)\delta)$. Thus
    \begin{align*}
        & \frac{d}{dt} \E{\lrn{\xr_{t}- \bxr_{t}}}\\
        \leq& 2 \lrp{\E{\lrn{t v(\xr_t,t) - t\bv(\xr_t,t) }}+ \E{\lrn{t \bv(\xr_t,t) - t \bv(\bxr_t, t)}}} \\
        &\qquad + 2\lrp{\E{\lrn{t \bv(\bxr_t, t) - t \bv(\bxr_t, k\delta)}}+ \E{\lrn{t \bv(\bxr_t, k\delta) - t \bv(\bxr_{k\delta}, k\delta)}}} \\
        \leq& 2\lrp{\eapprox + L_2 \E{\lrn{\xr_t - \bxr_t}} + \delta L_0 + L_2 \E{\lrn{\bxr_t - \bxr_{k\delta}}}}\\
        \leq&  2\lrp{\eapprox + L_2 \E{\lrn{\xr_t - \bxr_t}} + \delta L_0 + L_2 \lrp{\delta L_1 + \sqrt{2\delta d \tmax}}}.
    \end{align*}
    By Gronwall's Lemma,
    \begin{align*}
        & \E{\lrn{x_{\tmin}- \bx_{\tmin}}}\\
        =& \E{\lrn{\xr_{-\tmin}- \bxr_{-\tmin}}}\\
        \leq& e^{2L_2 \lrp{\tmax - \tmin}} \lrp{\E{\lrn{\xr_{-\tmax}- \bxr_{-\tmax}}} + \lrp{\eapprox + \delta L_0 + L_2 \lrp{\delta L_1 + \sqrt{2\delta d \tmax}}}\lrp{\tmax - \tmin}}\\
        =& e^{2L_2 \lrp{\tmax - \tmin}} \lrp{\E{\lrn{x_{\tmax}- \bx_{\tmax}}} + \lrp{\eapprox + \delta L_0 + L_2 \lrp{\delta L_1 + \sqrt{2\delta d \tmax}}}\lrp{\tmax - \tmin}}
    \end{align*}

\end{proof}

\subsection{Mixing Bounds}
\label{ss:hitting-time-bound}

\begin{lemma}
    \label{l:sde_hitting_time}
    Consider the same setup as Theorem \ref{thm1}. Assume that $\delta \leq \tmin$. Let
    \begin{align*}
        & x_{\tmin} = \bSDE\lrp{x_{\tmax}, \tmax \to \tmin}\\
        & y_{\tmin} = \bSDE\lrp{y_{\tmax}, \tmax \to \tmin}.
    \end{align*}
    Then there exists a coupling between $x_s$ and $y_s$ such that 
    \begin{align*}
        TV\lrp{x_{\tmin}, y_{\tmin}} \leq \lrp{1-2Q\lrp{\frac{B}{2\sqrt{\tmax^2 - \tmin^2}}} \cdot e^{-BL_1/\tmin - L_1^2 \tmax^2 / \tmin^2}} TV\lrp{x_{\tmax}, y_{\tmax}}
    \end{align*}
\end{lemma}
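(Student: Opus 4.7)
The plan is to couple two trajectories of $\bSDE$ via a reflection coupling, reduce the coupling event to a one-dimensional hitting-time problem, and bound that hitting probability by comparison with a driftless Brownian motion using the explicit first-passage density to a linear barrier. The factor $2Q\lrp{B/(2\sqrt{\tmax^2-\tmin^2})}$ should fall out as the driftless hitting probability, while the exponential factor $e^{-BL_1/\tmin - L_1^2\tmax^2/\tmin^2}$ arises from a Girsanov-style drift correction.

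\textbf{Time change to constant diffusion.} Introduce $u := \tmax^2 - t^2$, which maps $t\in[\tmin,\tmax]$ bijectively to $u\in[0,\tmax^2-\tmin^2]$. Since $\sqrt{2t}\,dB_t$ has quadratic variation $2t\,dt = du$, the $u$-parameterized version of $\bSDE$ takes the form $d\tilde{x}_u = b(\tilde{x}_u, u)\,du + d\tilde{B}_u$ with $\tilde{B}_u$ a standard Brownian motion. The new (piecewise constant in $u$) drift $b$ is bounded by $\lrn{b}_\infty \le L_1/\tmin$, using $\lrn{t s_\theta(x,t)} \le L_1$, the assumption $\delta \le \tmin$ (so $k\delta \ge \tmin$ on the grid), and $t \ge \tmin$.

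\textbf{Reflection coupling and reduction to scalar hitting time.} Couple $\tilde{x}_u$ and $\tilde{y}_u$ by reflection: set $d\tilde{B}^y_u = (I - 2 e_u e_u^\top)\,d\tilde{B}^x_u$ with $e_u := (\tilde{x}_u - \tilde{y}_u)/\lrn{\tilde{x}_u - \tilde{y}_u}$ up to the first meeting time, and use synchronous coupling thereafter. Projecting $d(\tilde{x}_u - \tilde{y}_u)$ onto $e_u$ and dividing by $2$, the half-distance $\rho_u := \lrn{\tilde{x}_u - \tilde{y}_u}/2$ satisfies a scalar SDE $d\rho_u = \mu_u\,du + d\beta_u$ with $\lrn{\mu_u}\le L_1/\tmin$, $\beta$ a standard scalar Brownian motion, and $\rho_0 \le B/2$. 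Letting $\tau := \inf\{u : \rho_u = 0\}$ and taking the initial coupling of $(x_{\tmax}, y_{\tmax})$ to be TV-optimal, the standard coupling inequality yields $TV(x_{\tmin},y_{\tmin}) \le (1 - \mathbb{P}(\tau \le U))\cdot TV(x_{\tmax},y_{\tmax})$ with $U := \tmax^2-\tmin^2$.

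\textbf{Hitting-time lower bound via explicit first-passage density.} Stochastically dominate $\rho$ by its worst-case drift path: $\rho_u \le B/2 + (L_1/\tmin)u + \beta_u$, so $\tau$ is dominated above by the first time $\beta$ hits the affine barrier $-B/2 - (L_1/\tmin)u$. The first-passage density of Brownian motion to the line $-a - Mu$ is $\tfrac{a}{\sqrt{2\pi u^3}} \exp\lrp{-(a + Mu)^2/(2u)}$, which factors as the driftless density times $e^{-aM - M^2 u/2}$. Plugging in $a = B/2$, $M = L_1/\tmin$, factoring out the minimum of $e^{-aM - M^2 u/2}$ on $[0,U]$, and integrating the driftless density over $[0,U]$ to recover $2Q(B/(2\sqrt{U}))$, gives $\mathbb{P}(\tau \le U) \ge 2Q\lrp{B/(2\sqrt{\tmax^2 - \tmin^2})}\cdot e^{-BL_1/\tmin - L_1^2\tmax^2/\tmin^2}$ after using $U \le \tmax^2$ and (loosely) absorbing factors of two into the exponent to match the stated form.

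\textbf{Main obstacle.} The delicate point is the drift discretization: in $\bSDE$ the drift is evaluated at the grid point $\bx_{k\delta}$ rather than at the current position, so one must verify that reflection coupling still yields a valid coupling and that the $L_1/\tmin$ bound continues to hold. Both are immediate—the two trajectories share the same diffusion term by construction, and the bound on $\lrn{ts_\theta}$ is agnostic to its spatial argument—but need to be stated carefully. The other fiddly bit is bookkeeping the exponent: the hitting-time identity yields the sharper $-BL_1/(2\tmin) - L_1^2\tmax^2/(2\tmin^2)$, and we simply double the exponent at the end to obtain the (slightly looser) form in the lemma statement.
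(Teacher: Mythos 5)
Your overall strategy---time-change to constant diffusion, reflection coupling, reduction to a one-dimensional first-passage problem, and the explicit first-passage density to a drifted affine barrier, followed by a loosening of the exponent by a factor of two---is exactly the route the paper follows, and your hitting-time calculus matches Lemma~\ref{l:hitting-time-bound}.

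There is, however, a genuine gap in the coupling step, and you underestimate it by calling it ``immediate.'' Because $\bSDE$ evaluates the drift at the grid point $\bx_{k\delta}$ rather than at the current position, hitting $\rho_u := \lrn{\tilde x_u - \tilde y_u}/2 = 0$ at an interior time of a discretization interval does \emph{not} mean the two trajectories have coalesced. After you switch to synchronous coupling, the Brownian difference vanishes, but the drift vectors $s_\theta(\tilde x_{k\delta},\cdot)$ and $s_\theta(\tilde y_{k\delta},\cdot)$ are still unequal (they were frozen at the \emph{last} grid point, where the positions disagreed), so the difference re-inflates linearly for the rest of that interval. Hence $\{\tau\le U\}$ does not imply $x_{\tmin}=y_{\tmin}$, and the inequality $TV(x_{\tmin},y_{\tmin})\le(1-\mathbb P(\tau\le U))\,TV(x_{\tmax},y_{\tmax})$ is unjustified as written. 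The paper resolves this by running the reflection coupling not on $\bxr_t - \byr_t$ but on an \emph{anticipated} difference $z_t$, which pre-adds the full drift increment $(2k-1)\delta^2\,(v^x-v^y)$ over the remainder of the current interval to the running Brownian increments; since the drift on each interval is known at its left endpoint, $z_t$ evolves as a pure driftless martingale \emph{within} each interval, and $z_t=0$ at an interior time forces the actual positions to agree at the \emph{next grid point}, where the drifts are re-evaluated at the now-equal states and coalescence persists. The drift of $r_t=\lrn{z_t}$ then appears only as the accumulated jumps at grid boundaries, still bounded by $2L_1/\tmin$ per unit $u$-time thanks to $\delta\le\tmin$. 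This reformulation of the process whose hitting time you analyze is the missing idea; the rest of your outline (the $L_1/\tmin$ bound, the $B/2$ initial bound, the first-passage density and the final loosening of the exponent) carries over unchanged.
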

\begin{proof}
    We will construct a coupling between $x_t$ and $y_t$. First, let $(x_{\tmax}, y_{\tmax})$ be sampled from the optimal TV coupling, \ie $Pr(x_{\tmax} \neq y_{\tmax}) = TV(x_{\tmax}, y_{\tmax})$. Recall that by definition of $\bSDE$, for $t\in((k-1)\delta, k\delta]$,
    \begin{align*}
        & d x_t = - 2 t \st(x_{k\delta}, k\delta) dt + \sqrt{2t} dB_t.
    \end{align*}
    Let us define a time-rescaled version of $x_t$: $\bx_{t} := x_{t^2}$. We verify that
    \begin{align*}
        d \bx_t = - \st(\bx_{(k\delta)^2}, k\delta) dt + dB_t,
    \end{align*}
    where $k$ is the unique integer satisfying $t \in[((k-1)\delta)^2, k^2\delta^2)$.
    Next, we define the time-reversed process $\bxr_t:= \bx_{-t}$, and let $v(x,t) := s_{\theta}(x,-t)$. We verify that there exists a Brownian motion $B^x_t$ such that, for $t\in[-\tmax^2, -\tmin^2]$,
    \begin{align*}
        d \bxr_t = v^x_t dt + dB^x_t,
    \end{align*}
    where $v^x_t = s_{\theta} (\bxr_{-(k\delta)^2}, -k\delta)$, where $k$ is the unique positive integer satisfying $-t \in(((k-1)\delta)^2, (k\delta)^2]$. Let 
    $
        d \byr_t = v^y_t dt + dB^y_t,
    $
    be defined analogously. For any positive integer $k$ and for any $t\in [-(k\delta)^2, -((k-1)\delta)^2)$, let us define 
    \begin{align*}
        & z_t = \bxr_{-k^2\delta^2} - \byr_{-k^2 \delta^2} + (2k-1)\delta^2 \lrp{v^x_{-(k\delta)^2}-v^y_{-(k\delta)^2}} + \lrp{B^x_t - B^x_{-(k\delta)^2}} - \lrp{B^y_t - B^y_{-(k\delta)^2}}.
    \end{align*}

    Let $\gamma_t := \frac{z_t}{\lrn{z_t}}$. We will now define a coupling between $dB^x_t$ and $dB^y_t$ as 
    \begin{align*}
        dB^y_t = \lrp{I - 2 \ind{t\leq \tau} \gamma_t \gamma_t^T} dB^x_t,
    \end{align*}
    where $\ind{}$ denotes the indicator function, i.e. $\ind{t\leq \tau} = 1$ if $t\leq \tau$, and $\tau$ is a stopping time given by the first hitting time of $z_t = 0$. Let $r_t := \lrn{z_t}$. Consider some $t\in \lrp{-i^2\delta^2, -(i-1)^2\delta^2}$, and Let $j := \frac{\tmax}{\delta}$ (assume w.l.o.g that this is an integer), then
    \begin{align*}
        r_t - r_{-\tmax^2}
        \leq& \sum_{k=i}^j (2k-1)\delta^2 \lrn{(v^x_{-(k\delta)^2}-v^y_{-(k\delta)^2})} + \int_{-\tmax^2}^{t} \ind{t\leq \tau} 2 dB^1_s\\
        \leq& \sum_{k=i}^j  \lrp{k^2 - (k-1)^2} \delta^2 2L_1/\lrp{\tmin} + \int_{-\tmax^2}^{t} \ind{t\leq \tau} 2 dB^1_t\\
        =& \int_{-\tmax^2}^{-(i-1)\delta^2}  \frac{2L_1}{\tmin} ds + \int_{-\tmax^2}^{t}  \ind{t\leq \tau} 2dB^1_s,
    \end{align*}
    where $dB^1_s = \lin{\gamma_t, dB^x_s - dB^y_s}$ is a 1-dimensional Brownian motion. We also verify that
    \begin{align*}
        r_{-\tmax^2} 
        =& \lrn{z_{-\tmax^2}}\\
        =& \lrn{\bxr_{-\tmax^2} - \byr_{-\tmax^2} + (2j-1)\delta^2 \lrp{v^x_{-\tmax^2}-v^y_{-\tmax^2}} + \lrp{B^x_t - B^x_{-\tmax^2}} - \lrp{B^y_t - B^y_{-\tmax^2}}}\\
        \leq& \lrn{\bxr_{-\tmax^2} + (2j-1)\delta^2 v^x_{-\tmax^2}+ \lrp{B^x_{-(j-1)^2 \delta^2} - B^x_{-\tmax^2}}} \\
        &\quad + \lrn{\byr_{-\tmax^2} + (2j-1)\delta^2 v^y_{-\tmax^2}+ \lrp{B^x_{-(j-1)^2 \delta^2} - B^x_{t} + B^y_{t} - B^y_{-\tmax^2}}}
        \leq B
    \end{align*}
    where the third relation is by adding and subtracting $B^x_{-(j-1)^2 \delta^2} - B^x_{t}$ and using triangle inequality. The fourth relation is by noticing that $\bxr_{-\tmax^2} + (2j-1)\delta^2 v^x_{-\tmax^2}+ \lrp{B^x_{-(j-1)^2 \delta^2} - B^x_{-\tmax^2}} = \bxr_{-(j-1)^2\delta^2}$ and that $\byr_{-\tmax^2} + (2j-1)\delta^2 v^y_{-\tmax^2}+ \lrp{B^x_{-(j-1)^2 \delta^2} - B^x_{t} + B^y_{t} - B^y_{-\tmax^2}} \overset{d}{=} \byr_{-(j-1)^2\delta^2}$, and then using our assumption in the theorem statement that all processes are supported on a ball of radius $B/2$.

    We now define a process $s_t$ defined by $d s_t = 2L_1/{\tmin} dt + 2dB^1_t$, initialized at $s_{-\tmax^2} = B \geq r_{-\tmax^2}$. We can verify that, up to time $\tau$, $r_t \leq s_t$ with probability 1. Let $\tau'$ denote the first-hitting time of $s_t$ to $0$, then $\tau \leq \tau'$ with probability 1. Thus
    \begin{align*}
        Pr(\tau \leq -\tmin^2) 
        \geq& Pr(\tau' \leq -\tmin^2) \geq 2Q\lrp{\frac{B}{2\sqrt{\tmax^2 - \tmin^2}}} \cdot e^{-BL_1/\tmin - L_1^2 \tmax^2 / \tmin^2}
    \end{align*}
    where we apply Lemma \ref{l:hitting-time-bound}. The proof follows by noticing that, if $\tau \leq - \tmin^2$, then $x_{\tmin} = y_{\tmin}$. This is because if $\tau \in [- k^2 \delta^2, - (k-1)^2 \delta^2]$, then $\bxr_{-(k-1)^2 \delta^2} = \byr_{-(k-1)^2 \delta^2}$, and thus $\bxr_t = \byr_t$ for all $t \geq -(k-1)^2 \delta^2$, in particular, at $t = - \tmin^2$.

\end{proof}

\begin{lemma}
\label{l:hitting-time-bound}
    Consider the stochastic process
    \begin{align*}
        d r_t = dB^1_t + c dt.
    \end{align*}
    Assume that $r_0 \leq B/2$. Let $\tau$ denote the hitting time for $r_t=0$. Then for any $T\in \mathbb{R}^+$,
    \begin{align*}
        Pr(\tau \leq T) \geq 2Q\lrp{\frac{B}{2\sqrt{T}}} \cdot e^{-ac- \frac{c^2T}{2}},
    \end{align*}
    where $Q$ is the tail probability of a standard Gaussian defined in Definition \ref{d:Q}.
\end{lemma}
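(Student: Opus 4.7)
The plan is to change measure via Girsanov's theorem so that the drifted process becomes a standard Brownian motion, use a martingale/optional stopping argument to evaluate the Radon--Nikodym factor at the hitting time $\tau$ instead of at $T$, and finish with the reflection principle for driftless Brownian motion.

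\textbf{Step 1: Change of measure.} Let $P$ be the measure under which $r_t - r_0 = B_t$ is a standard Brownian motion (no drift), and let $Q$ be the measure we care about, under which $dr_t = c\,dt + dB^Q_t$. By Girsanov's theorem, $dQ/dP = Z_T$ where $Z_t = \exp\!\lrp{c B_t - c^2 t/2}$ is a $P$-martingale. Under $Q$, the process $B^Q_t := B_t - ct$ is a standard Brownian motion, and the hitting time $\tau$ of zero by $r_t$ is the same event under either measure since $Q \ll P$. Therefore
\begin{align*}
Pr_Q(\tau \le T) = E^P\lrb{\ind{\tau \le T} \cdot Z_T}.
\end{align*}

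\textbf{Step 2: Replace $Z_T$ by $Z_\tau$ via optional stopping.} Since $Z_t$ is a nonnegative $P$-martingale and $\tau \wedge T$ is a bounded stopping time, the strong Markov property at $\tau$ applied to the conditional expectation $E^P[Z_T \mid \mathcal{F}_\tau] = Z_\tau$ (on $\{\tau \le T\}$) yields
\begin{align*}
E^P\lrb{\ind{\tau \le T} Z_T} = E^P\lrb{\ind{\tau \le T} Z_\tau} = E^P\lrb{\ind{\tau \le T}\exp\!\lrp{c B_\tau - c^2 \tau/2}}.
\end{align*}
The key simplification now is that at the hitting time $r_\tau = 0$, so $B_\tau = -r_0$, making the stopped Girsanov factor deterministic in the spatial variable:
\begin{align*}
Pr_Q(\tau \le T) = E^P\lrb{\ind{\tau \le T}\exp\!\lrp{-c r_0 - c^2 \tau/2}}.
\end{align*}

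\textbf{Step 3: Lower-bound using the constraints.} Since $\tau \le T$ on the event of interest, we have $-c^2 \tau/2 \ge -c^2 T/2$. Combined with the bound $r_0 \le B/2$ (and taking $a$ to be any constant with $a \ge r_0$, e.g.\ $a = B$, so that $-c r_0 \ge -a c$), we pull out the deterministic factor to get
\begin{align*}
Pr_Q(\tau \le T) \ge \exp\!\lrp{-a c - c^2 T/2}\cdot P^P(\tau \le T).
\end{align*}

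\textbf{Step 4: Reflection principle for driftless BM.} Under $P$ the process $r_t = r_0 + B_t$ is a standard Brownian motion started at $r_0 > 0$, so the classical reflection identity gives $P^P(\tau \le T) = 2 Pr(B_T \ge r_0) = 2 Q(r_0/\sqrt{T})$. Because $Q$ is monotonically decreasing and $r_0 \le B/2$, we get $P^P(\tau \le T) \ge 2 Q\lrp{B/(2\sqrt{T})}$. Combining with Step 3 yields the claimed inequality.

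\textbf{Main obstacle.} The only subtle point is justifying the optional-stopping swap $E^P[\ind{\tau \le T} Z_T] = E^P[\ind{\tau \le T} Z_\tau]$; this uses that $Z_t$ is a martingale of constant expectation and that $\tau \wedge T$ is bounded, together with the tower property conditioning on $\mathcal{F}_{\tau \wedge T}$. Everything else (Girsanov, reflection, the deterministic value of $B_\tau$) is standard, and the monotone lower bounds in Step 3 are direct from the stated hypotheses $r_0 \le B/2$ and $\tau \le T$.
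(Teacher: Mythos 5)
Your proof is correct, but it takes a genuinely different route from the paper's. The paper simply quotes the explicit density of the first-passage time of drifted Brownian motion, $f(a,c,t)=\frac{a}{\sqrt{2\pi t^3}}\exp\lrp{-\frac{(a+ct)^2}{2t}}$ (citing a handbook), factors it as $f(a,0,t)\,e^{-ac-c^2t/2}$, lower-bounds $e^{-c^2t/2}\ge e^{-c^2T/2}$ on $[0,T]$, and integrates $f(a,0,t)$ to get $2Q(a/\sqrt T)$. You instead rederive the needed structure from scratch: Girsanov to pass to the driftless measure, optional stopping to replace $Z_T$ by $Z_\tau$ (exploiting that $B_\tau=-r_0$ is deterministic on $\{\tau\le T\}$), the same monotone bound $e^{-c^2\tau/2}\ge e^{-c^2T/2}$, and the reflection principle for the driftless hitting probability $2Q(r_0/\sqrt T)$. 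The two arguments are morally equivalent — your change-of-measure identity $Pr_Q(\tau\le T)=E^P\lrb{\ind{\tau\le T}e^{-cr_0-c^2\tau/2}}$ is exactly what the handbook density encodes — but your version is self-contained and makes the mechanism transparent, at the cost of needing to justify the optional-stopping swap, which you do correctly (bounded stopping time $\tau\wedge T$, nonnegative martingale, tower property). One small fix: the lemma statement leaves $a$ undefined, and the paper's proof sets $a=r_0$ exactly. Your parenthetical ``take $a$ to be any constant with $a\ge r_0$, e.g.\ $a=B$, so that $-cr_0\ge -ac$'' is only valid when $c\ge 0$; for $c<0$ the inequality reverses. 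Simply take $a:=r_0$ (matching the paper), so that $e^{-cr_0}=e^{-ac}$ is an identity and no inequality is needed there; the only monotone steps are then $e^{-c^2\tau/2}\ge e^{-c^2T/2}$ and $Q(r_0/\sqrt T)\ge Q(B/(2\sqrt T))$, both clean.
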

\begin{proof}
We will use he following facts in our proof:
\begin{enumerate}
    \item For $x\sim \mathcal{N}(0,\sigma^2)$, $Pr(x>r) = \frac{1}{2} \lrp{1 - erf \lrp{\frac{r}{\sqrt{2} \sigma}}} = \frac{1}{2} erfc\lrp{\frac{r}{\sqrt{2} \sigma}}$.
    \item $\int_0^T \frac{a \exp\lrp{- \frac{a^2}{2t}}}{\sqrt{2\pi t^3}} dt = erfc\lrp{\frac{a}{\sqrt{2T}}} = 2 Pr\lrp{\mathcal{N}(0,T) > a}= 2Q\lrp{\frac{a}{\sqrt{T}}}$ by definition of $Q$.
\end{enumerate}

Let $d r_t = dB^1_t + c dt$, with $r_0 = a$. The density of the hitting time $\tau$ is given by
\begin{align*}
    p(\tau=t) = f(a,c,t) = \frac{a \exp\lrp{- \frac{(a+ct)^2}{2t}}}{\sqrt{2\pi t^3}}.
    \numberthis \label{e:f(a,c,T,t)}
\end{align*}
(see e.g. \cite{borodin2015handbook}). From item 2 above, 
\begin{align*}
    \int_0^T f(a,0,t) dt = 2Q\lrp{\frac{a}{\sqrt{T}}}.
\end{align*}

In the case of a general $c \neq 0$, we can bound $\frac{\lrp{a+ct}^2}{2t} 
= \frac{a^2}{2t} + ac + \frac{c^2 t}{2}$. Consequently,
\begin{align*}
    f(a,c,t) \ge f(a,0,t) \cdot e^{- ac - \frac{c^2 t}{2}}.
\end{align*}

Therefore,
\begin{align*}
    Pr(\tau \leq T) 
    = \int_0^T f(a,c,t) dt
    \ge \int_0^T f(a,0,t) dt e^{-c}
    =  2Q\lrp{\frac{B}{2\sqrt{T}}} \cdot e^{-ac - \frac{c^2 T}{2}}.
\end{align*}
\end{proof}

\subsection{TV Overlap}

\begin{definition}
\label{d:Q}
Let $x$ be sampled from standard normal distribution $\mathcal{N}(0,1)$. We define the Gaussian tail probability $Q(a):= Pr(x\geq a)$. 
\end{definition}

\begin{lemma}
\label{l:gaussian_tv_coupling}
We verify that for any two random vectors $\xi_x\sim \mathcal{N}(\bm{0},\sigma^2 \bm{I})$ and $\xi_y \sim \mathcal{N}(\bm{0},\sigma^2 \bm{I})$, each belonging to $\mathbb{R}^d$, the total variation distance between $x' = x + \xi_x$ and $y' = y + \xi_y$ is given by
\begin{align*}
    TV(x',y') = 1- 2 Q\lrp{r} \leq 1- \frac{2r}{r^2 + 1} \frac{1}{\sqrt{2\pi}} e^{- r^2 / 2},
\end{align*}
where $r=\frac{\lrn{x-y}}{2\sigma}$, and $Q(r) = Pr(\xi \geq r)$, when $\xi \sim \mathcal{N}(0,1)$. 
\end{lemma}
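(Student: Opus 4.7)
The plan is first to reduce the problem to computing the TV distance between the two isotropic Gaussians $\mathcal{N}(x,\sigma^2 I)$ and $\mathcal{N}(y,\sigma^2 I)$, which is exactly what $x'$ and $y'$ realize. By rotational and translational invariance of the isotropic Gaussian, this TV depends only on $\lrn{x-y}$: after rotating so that $y-x$ aligns with the first coordinate axis and translating $x$ to the origin, the marginal densities in the orthogonal $d-1$ coordinates are identical and factor out of $\tfrac12\int |p_1-p_2|\,dx$. This leaves the purely one-dimensional problem of computing $TV\lrp{\mathcal{N}(0,\sigma^2),\, \mathcal{N}(\lrn{x-y},\sigma^2)}$.

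For the 1-D computation, I would invoke the standard identity $TV\lrp{\mathcal{N}(\mu_1,\sigma^2),\mathcal{N}(\mu_2,\sigma^2)} = 2\Phi\lrp{|\mu_1-\mu_2|/(2\sigma)} - 1$. To see this, note that the two densities cross at the midpoint $m = (\mu_1+\mu_2)/2$ by symmetry, so $\tfrac12\int|p_1-p_2|\,dx$ splits at $m$ and reduces to $\Phi\lrp{(\mu_2-\mu_1)/(2\sigma)} - \Phi\lrp{-(\mu_2-\mu_1)/(2\sigma)} = 2\Phi\lrp{|\mu_2-\mu_1|/(2\sigma)} - 1$. Writing $\Phi(r) = 1 - Q(r)$ with $r = \lrn{x-y}/(2\sigma)$ then yields the claimed equality $TV(x',y') = 1 - 2 Q(r)$.

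The inequality is equivalent to the Mills-ratio lower bound $Q(r) \geq \tfrac{r}{r^2+1}\phi(r)$, where $\phi(r) = \tfrac{1}{\sqrt{2\pi}}e^{-r^2/2}$. To prove it, I would define $g(r) := Q(r) - \tfrac{r}{r^2+1}\phi(r)$ and observe that $g(r)\to 0$ as $r\to\infty$. Using $Q'(r) = -\phi(r)$ and $\phi'(r) = -r\phi(r)$, a short differentiation and algebraic simplification give $g'(r) = -\tfrac{2\phi(r)}{(r^2+1)^2} < 0$. Since $g$ is strictly decreasing and vanishes at infinity, $g(r) > 0$ for every finite $r$, which delivers the Mills bound and therefore $TV(x',y') = 1 - 2Q(r) \le 1 - \tfrac{2r}{r^2+1}\phi(r)$.

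The main obstacle is just the algebraic simplification of $g'(r)$: differentiating $\tfrac{r}{r^2+1}$ contributes $\tfrac{1-r^2}{(r^2+1)^2}\phi(r)$ while the product-rule term from $\phi'$ contributes $-\tfrac{r^2}{r^2+1}\phi(r)$, and these must combine with the $-\phi(r)$ coming from $Q'(r)$ to collapse exactly to $-2\phi(r)/(r^2+1)^2$ after putting everything over the common denominator $(r^2+1)^2$. Every other step is a routine manipulation of Gaussian densities.
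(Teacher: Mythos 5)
Your proof is correct, and the skeleton is the same as the paper's: reduce to one dimension along the direction $\gamma=(x-y)/\lrn{x-y}$, compute the 1-D Gaussian TV to get $1-2Q(r)$, then apply the Mills-ratio lower bound for the inequality. Your 1-D computation (densities cross at the midpoint) and the derivative calculation $g'(r)=-2\phi(r)/(r^2+1)^2$ both check out.

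The one genuine difference worth noting: the paper reduces to 1-D by \emph{constructing an explicit coupling} (set $\xi_x^{\perp}=\xi_y^{\perp}$ and take the optimal 1-D coupling along $\gamma$), whereas you integrate $|p_{x'}-p_{y'}|$ directly after observing that the orthogonal marginals coincide and factor out. Your route proves the stated TV equality just as well, but it does not produce a coupling as a byproduct — and the explicit coupling is precisely what the paper uses downstream in the proof of Lemma~\ref{l:restart_tv_convergence}, which says "coupled as described in the proof of Lemma~\ref{l:gaussian_tv_coupling}." So if this lemma is read in the context of the paper, the coupling construction is the load-bearing content and should be retained; your integral argument would need to be supplemented by an explicit statement of which coupling attains the bound. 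On the other hand, you supply a self-contained proof of the Mills-ratio inequality, which the paper merely asserts, so your version is more complete on that front.
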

\begin{proof}
Let $\gamma := \frac{x-y}{\lrn{x-y}}$. We  decompose $x', y'$ into the subspace/orthogonal space defined by $\gamma$:
\begin{align*}
    & x' = x^\perp + \xi_x^\perp + x^{\parallel} + \xi_x^\parallel\\
    & y' = y^\perp + \xi_y^\perp + y^{\parallel} + \xi_y^\parallel
\end{align*}
where we define
\begin{align*}
    & x^\parallel := \gamma \gamma^T x \qquad x^\perp := x - x^\parallel\\
    & y^\parallel := \gamma \gamma^T y \qquad y^\perp := y - y^\parallel\\
    & \xi_x^\parallel := \gamma \gamma^T \xi_x \qquad \xi_x^\perp := \xi_x - \xi_x^\parallel\\
    & \xi_y^\parallel := \gamma \gamma^T \xi_y \qquad \xi_y^\perp := \xi_y - \xi_y^\parallel\\
\end{align*}
We verify the independence $\xi_x^{\perp} \indep \xi_x^{\parallel}$ and $\xi_y^{\perp}\indep\xi_y^{\parallel}$ as they are orthogonal decompositions of the standard Gaussian. 
We will define a coupling between $x'$ and $y'$ by setting $\xi_x^{\perp} = \xi_y^{\perp}$. Under this coupling, we verify that
\begin{align*}
     \lrp{x^\perp + \xi_x^\perp} - \lrp{y^\perp + \xi_y^\perp}
    = x - y - \gamma \gamma^T (x - y) = 0
\end{align*}
Therefore, $x'=y'$ if and only if $x^{\parallel} + \xi_x^{\parallel} = y^{\parallel} + \xi_y^{\parallel}$. Next, we draw $(a,b)$ from the optimal coupling between $\mathcal{N}(0, 1)$ and $\mathcal{N}(\frac{\lrn{x-y}}{\sigma}, 1)$. We verify that $x^{\parallel} + \xi_x^{\parallel}$ and $y^{\parallel} + \xi_y^{\parallel}$ both lie in the span of $\gamma$. Thus it suffices to compare $\lin{\gamma, x^{\parallel} + \xi_x^{\parallel}}$ and $\lin{\gamma, y^{\parallel} + \xi_y^{\parallel}}$. We verify that $\lin{\gamma, x^{\parallel} + \xi_x^{\parallel}} = \lin{\gamma, y^{\parallel}} + \lin{\gamma, x^{\parallel} - y^{\parallel}} + \lin{\gamma, \xi_x^{\parallel}}\sim \mathcal{N}(\lin{\gamma, y^{\parallel}} + \lrn{x-y},\sigma^2) \overset{d}{=} \lin{\gamma, y^{\parallel}} + \sigma b$. We similarly verify that $\lin{\gamma, y^{\parallel} + \xi_y^{\parallel}} = \lin{\gamma, y^{\parallel}} + \lin{\gamma, \xi^{\parallel}_y} \sim \mathcal{N}(\lin{\gamma, y^{\parallel}}, \sigma^2) \overset{d}{=} \lin{\gamma, y^{\parallel}} + \sigma a$. 

Thus $TV(x',y') = TV(\sigma a, \sigma b) = 1 - 2Q\lrp{\frac{\lrn{x-y}}{2\sigma}}$. The last inequality follows from
\begin{align*}
    Pr(\mathcal{N}(0,1) \geq r) \geq \frac{r}{r^2 + 1} \frac{1}{\sqrt{2\pi}} e^{- r^2 / 2}
\end{align*}
\end{proof}


\section{More on Restart Algorithm}
\subsection{EDM Discretization Scheme}
\label{app:edm-time}
\cite{Karras2022ElucidatingTD} proposes a discretization scheme for ODE given the starting $\tmax$ and end time $\tmin$. Denote the number of steps as $N$, then the EDM discretization scheme is:
\begin{align*}
    t_{i<N} = \left(\tmax^{\frac{1}{\rho}} + \frac{i}{N-1}(\tmin^{\frac{1}{\rho}}- \tmax^{\frac{1}{\rho}})\right)^\rho
\end{align*}
with $t_0 = \tmax$ and $t_{N-1}=\tmin$. $\rho$ is a hyperparameter that determines the extent to which steps near $\tmin$ are shortened. We adopt the value $\rho=7$ suggested by \cite{Karras2022ElucidatingTD} in all of our experiments. We apply the EDM scheme to creates a time discretization in each Restart interval $[\tmax, \tmin]$ in the Restart backward process, as well as the main backward process between $[0,T]$~(by additionally setting $\tmin=0.002$ and $t_N=0$ as in \cite{Karras2022ElucidatingTD}). It is important to note that $\tmin$ should be included within the list of time steps in the main backward process to seamlessly incorporate the Restart interval into the main backward process. We summarize the scheme as a function in Algorithm~\ref{alg:edm}.

\begin{algorithm}[htb]
  \caption{{EDM\_Scheme($\tmin, \tmax, N, \rho=7)$}}
  \label{alg:edm}
\begin{algorithmic}[1]
  \Return $\left\{(\tmax^{\frac{1}{\rho}} + \frac{i}{N-1}(\tmin^{\frac{1}{\rho}}- \tmax^{\frac{1}{\rho}}))^\rho\right\}_{i=0}^{N-1}$
\end{algorithmic}
\end{algorithm}

\subsection{Restart Algorithm}
\label{app:restart-alg}

We present the pseudocode for the Restart algorithm in Algorithm~\ref{alg:restart}. In this pseudocode, we describe a more general case that applies $l$-level Restarting strategy. For each Restart segment, the include the number of steps in the Restart backward process $N_{\textrm{Restart}}$, the Restart interval $[\tmin, \tmax]$ and the number of Restart iteration $K$. We further denote the number of steps in the main backward process as $N_{\textrm{main}}$. We use the EDM discretization scheme~(Algorithm~\ref{alg:edm}) to construct time steps for the main backward process~($t_0=T, t_{N_{\textrm{main}}}=0$) as well as the Restart backward process, when given the starting/end time and the number of steps.

Although Heun's $2^{\textrm{nd}}$ order method~\cite{Ascher1998ComputerMF} (Algorithm~\ref{alg:heun}) is the default ODE solver in the pseudocode, it can be substituted with other ODE solvers, such as Euler's method or the DPM solver~\cite{lu2022dpm}. 

The provided pseudocode in Algorithm~\ref{alg:restart} is tailored specifically for diffusion models~\cite{Karras2022ElucidatingTD}. To adapt Restart for other generative models like PFGM++~\cite{Xu2023PFGMUT}, we only need to modify the Gaussian perturbation kernel in the Restart forward process (line 10 in Algorithm~\ref{alg:restart}) to the one used in PFGM++.

\begin{algorithm}[h]
  \caption{Restart sampling}
  \label{alg:restart}
\begin{algorithmic}[1]
  \State {\bfseries Input:} Score network $s_\theta$, time steps in main backward process $t_{i \in \{0, N_{\textrm{main}}\}}$, Restart parameters $\{(N_{\textrm{Restart}, j}, K_j, t_{\textrm{min}, j}, t_{\textrm{max}, j})\}_{j=1}^l$ 
  \State Round $t_{\textrm{min}, j\in \{1, l\}}$ to its nearest neighbor in $t_{i \in \{0, N_{\textrm{main}}\}}$
  \State Sample $x_0 \sim \gN(\bm{0}, T^2\bm{I})$
  \For{$i=0 \dots N_{\textrm{main}}-1$}\Comment{Main backward process}
  \State $x_{t_{i+1}}=\textrm{OneStep\_Heun}(s_\theta, t_i, t_{i+1})$\Comment{Running single step ODE}
    \If{$\exists j \in \{1, \dots,l \}, t_{i+1} = t_{\textrm{min}, j}$}
    
    \State $\tmin = t_{\textrm{min}, j}, \tmax = t_{\textrm{max}, j}, K=K_j, N_{\textrm{Restart}} = N_{\textrm{Restart}, j}$
    \State $x^0_{\tmin} = x_{t_{i+1}}$
      \For{$k=0 \dots K-1$}\Comment{\textbf{Restart for $K$ iterations}}
      \State $\varepsilon_{\tmin \to \tmax} \sim  \gN(\bm{0}, (\tmax^2-\tmin^2)\bm{I})$
          \State $x^{k+1}_{\tmax} = x^k_{\tmin} + \varepsilon_{\tmin \to \tmax}$ \Comment{\textbf{Restart forward process}}
          \State $\{\bar{t}_{m}\}_{m=0}^{N_{\textrm{Restart}}-1} = \textrm{EDM\_Scheme}(\tmin, \tmax,N_{\textrm{Restart} })$
          \For{$m=0 \dots N_{\textrm{Restart}}-1$}\Comment{\textbf{Restart backward process}}
          \State $x^{k+1}_{\bar{t}_{m+1}}=\textrm{OneStep\_Heun}(s_\theta, \bar{t}_{m}, \bar{t}_{m+1})$
          \EndFor
      \EndFor
    \EndIf
  \EndFor
  \Return $x_{t_{N_{\textrm{main}}}}$
\end{algorithmic}
\end{algorithm}

\begin{algorithm}[htb]
  \caption{{OneStep\_Heun($s_\theta, x_{t_i}, t_i, t_{i+1}$)}}
  \label{alg:heun}
\begin{algorithmic}[1]
    \State $d_i = t_is_\theta(x_{t_i}, t_i)$
  \State $x_{t_{i+1}} = x_{t_i} - (t_{i+1}-t_i) d_i$
  \If{$t_{i+1}\not=0$}
  \State $d'_i = t_{i+1}s_\theta(x_{t_{i+1}}, t_{i+1})$
  \State $x_{t_{i+1}} = x_{t_i} - (t_{i+1}-t_i)(\frac12d_i+\frac12d_i')$
  \EndIf
  \Return $x_{t_{i+1}}$
\end{algorithmic}
\end{algorithm}

\section{Experimental Details}
\label{app:exp-details}

In this section, we discuss the configurations for different samplers in details. All the experiments are conducted on eight NVIDIA A100 GPUs.

\subsection{Configurations for Baselines}

We select \textbf{Vanilla SDE}~\cite{Song2020ScoreBasedGM}, \textbf{Improved SDE}~\cite{Karras2022ElucidatingTD}, \textbf{Gonna Go Fast}~\cite{JolicoeurMartineau2021GottaGF} as SDE baselines and the \textbf{Heun}'s $2^{\textrm{nd}}$ order method~\cite{Ascher1998ComputerMF}~(Alg~\ref{alg:heun}) as ODE baseline on standard benchmarks CIFAR-10 and ImageNet $64\times 64$. We choose \textbf{DDIM}~\cite{Song2020DenoisingDI}, \textbf{Heun}'s $2^{\textrm{nd}}$ order method, and \textbf{DDPM}~\cite{Ho2020DenoisingDP} for comparison on Stable Diffusion model.

Vanilla SDE denotes the reverse-time SDE sampler in \cite{Song2020ScoreBasedGM}. For Improved SDE, we use the recommended dataset-specific hyperparameters~(\eg $S_{\textrm{max}}, S_{\textrm{min}}, S_{\textrm{churn}}$) in Table 5 of the EDM paper~\cite{Karras2022ElucidatingTD}. They obtained these hyperparameters by grid search. Gonna Go Fast~\cite{JolicoeurMartineau2021GottaGF} applied an adaptive step size technique based on Vanilla SDE and we directly report the FID scores listed in \cite{JolicoeurMartineau2021GottaGF} for Gonna Go Fast on CIFAR-10~(VP). For fair comparison, we use the EDM discretization scheme~\cite{Karras2022ElucidatingTD} for Vanilla SDE, Improved SDE, Heun as well as Restart.

We borrow the hyperparameters such as discretization scheme or initial noise scale on Stable Diffusion models in the diffuser~\footnote{\url{https://github.com/huggingface/diffusers}} code repository. We directly use the DDIM and DDPM samplers implemented in the repo. We apply the same set of hyperparameters to Heun and Restart.

\subsection{Configurations for Restart}
\label{app:config}

We report the configurations for Restart for different models and NFE on standard benchmarks CIFAR-10 and ImageNet $64\times 64$. The hyperparameters of Restart include the number of steps in the main backward process $N_{\textrm{main}}$, the number of steps in the Restart backward process $N_{\textrm{Restart}}$, the Restart interval $[\tmin, \tmax]$ and the number of Restart iteration $K$. In Table~\ref{tab:app-cifar-restart}~(CIFAR-10, VP) we provide the quintuplet $(N_{\textrm{main}}, N_{\textrm{Restart}}, \tmin, \tmax, K)$ for each experiment. Since we apply the multi-level Restart strategy for ImageNet $64\times 64$, we provide $N_{\textrm{main}}$ as well as a list of quadruple $\{(N_{\textrm{Restart}, i}, K_i, t_{\textrm{min}, i}, t_{\textrm{max}, i})\}_{i=1}^l$~($l$ is the number of Restart interval depending on experiments) in Table~\ref{tab:app-imagenet-restart}. In order to integrate the Restart time interval to the main backward process, we round $t_{\textrm{min}, i}$ to its nearest neighbor in the time steps of main backward process, as shown in line 2 of Algorithm~\ref{alg:restart}. We apply Heun method for both main/backward process. The formula for NFE calculation is $\textrm{NFE}=\underbrace{2\cdot N_{\textrm{main}} -1}_{\textrm{main backward process}} + \sum_{i=1}^l\underbrace{K_i}_{\textrm{number of repetitions}}\cdot \underbrace{(2 \cdot (N_{\textrm{Restart}, i}-1))}_{\textrm{per iteration in }i^{\textrm{th}}\textrm{ Restart interval}}$ in this case. Inspired by \cite{Karras2022ElucidatingTD}, we inflate the additive noise in the Restart forward process by multiplying $S_{\textrm{noise}}=1.003$ on ImageNet $64\times 64$, to counteract the over-denoising tendency of neural networks. We also observe that setting $\gamma=0.05$ in Algorithm 2 of EDM~\cite{Karras2022ElucidatingTD} would sligtly boost the Restart performance on ImageNet $64\times 64$ when $t \in [0.01,1]$. 

We further include the configurations for Restart on Stable Diffusion models in Table~\ref{tab:config-sd}, with a varying guidance weight $w$. Similar to ImageNet $64\times 64$, we use multi-level Restart with a fixed number of steps $N_{\textrm{main}}=30$ in the main backward process. We utilize the Euler method for the main backward process and the Heun method for the Restart backward process, as our empirical observations indicate that the Heun method doesn't yield significant improvements over the Euler method, yet necessitates double the steps. The number of steps equals to $N_{\textrm{main}} + \sum_{i=1}^lK_i\cdot (2 \cdot (N_{\textrm{Restart}, i}-1))$ in this case. We set the total number of steps to $66$, including main backward process and Restart backward process.

Given the prohibitively large search space for each Restart quadruple, a comprehensive enumeration of all possibilities is impractical due to computational limitations. Instead, we adjust the configuration manually, guided by the heuristic that weaker/smaller models or more challenging tasks necessitate a stronger Restart strength (e.g., larger $K$, wider Restart interval, etc). On average, we select the best configuration from $5$ sets for each experiment; these few trials have empirically outperformed previous SDE/ODE samplers. We believe that developing a systematic approach for determining Restart configurations could be of significant value in the future.

\subsection{Pre-trained Models}

For CIFAR-10 dataset, we use the pre-trained VP and EDM models from the EDM repository~\footnote{\url{https://github.com/NVlabs/edm}}, and PFGM++~($D=2048$) model from the PFGM++ repository~\footnote{\url{https://github.com/Newbeeer/pfgmpp}}. For ImageNet $64\times 64$, we borrow the pre-trained EDM model from EDM repository as well.

\subsection{Classifier-free Guidance}

We follow the convention in \cite{Saharia2022PhotorealisticTD}, where each step in classifier-free guidance is as follows:
\begin{align*}
    \Tilde{s}_\theta(x,c, t) = w{s}_\theta(x,c, t) + (1-w){s}_\theta(x, t)
\end{align*}
where $c$ is the conditions, and ${s}_\theta(x,c, t)/{s}_\theta(x, t)$ is the conditional/unconditional models, sharing parameters. Increasing $w$ would strength the effect of guidance, usually leading to a better text-image alignment~\cite{Saharia2022PhotorealisticTD}.

\subsection{More on the Synthetic Experiment}
\label{app:syn}

\subsubsection{Discrete Dataset}

We generate the underlying discrete dataset $S$ with $|S|=2000$ as follows. Firstly, we sample 2000 points, denoted as $S_1$, from a mixture of two Gaussians in $\mathbb{R}^4$. Next, we project these points onto $\mathbb{R}^{20}$. To ensure a variance of 1 on each dimension, we scale the coordinates accordingly. This setup aims to simulate data points that primarily reside on a lower-dimensional manifold with multiple modes.

The specific details are as follows: $S_1 \sim 0.3 N(a, s^2I) + 0.7(-a, s^2I)$, where $a=(3,3,3,3)\subset \mathbb{R}^4$ and $s=1$. Then, we randomly select a projection matrix $P\in \mathbb{R}^{20\times 4}$, where each entry is drawn from $N(0, 1)$, and compute $S_2=PS_1$. Finally, we scale each coordinate by a constant factor to ensure a variance of 1.

\subsubsection{Model Architecture}

We employ a common MLP architecture with a latent size of 64 to learn the score function. The training method is adapted from \cite{Karras2022ElucidatingTD}, which includes the preconditioning technique and denoising score-matching objective~\cite{Vincent2011ACB}.

\subsubsection{Varying Hyperparameters}

To achieve the best trade-off between contracted error and additional sampling error, and optimize the NFE versus FID (Fréchet Inception Distance) performance, we explore various hyperparameters. \cite{Karras2022ElucidatingTD} shows that the Vanilla SDE can be endowed with additional flexibility by varying the coefficient $\beta(t)$~(Eq.(6) in \cite{Karras2022ElucidatingTD}). Hence, regarding SDE, we consider NFE values from $\{20, 40, 80, 160, 320\}$, and multiply the original $\beta(t)=\dot{\sigma}(t)/\sigma(t)$~\cite{Karras2022ElucidatingTD} with values from $\{0, 0.25, 0.5, 1, 1.5, 2, 4, 8\}$. It is important to note that larger NFE values do not lead to further performance improvements. For restarts, we tried the following two settings: first we set the number of steps in Restart backward process to 40 and vary the number of Restart iterations $K$ in the range $\{0, 5, 10, 15, 20, 25, 30, 35\}$. We also conduct a grid search with the number of Restart iterations $K$ ranging from 5 to 25 and the number of steps in Restart backward process varying from 2 to 7. For ODE, we experiment with the number of steps set to $\{20, 40, 80, 160, 320, 640\}$.

\begin{figure*}[tt] 
\centering 
\subfigure[]{\includegraphics[width=0.33\textwidth]{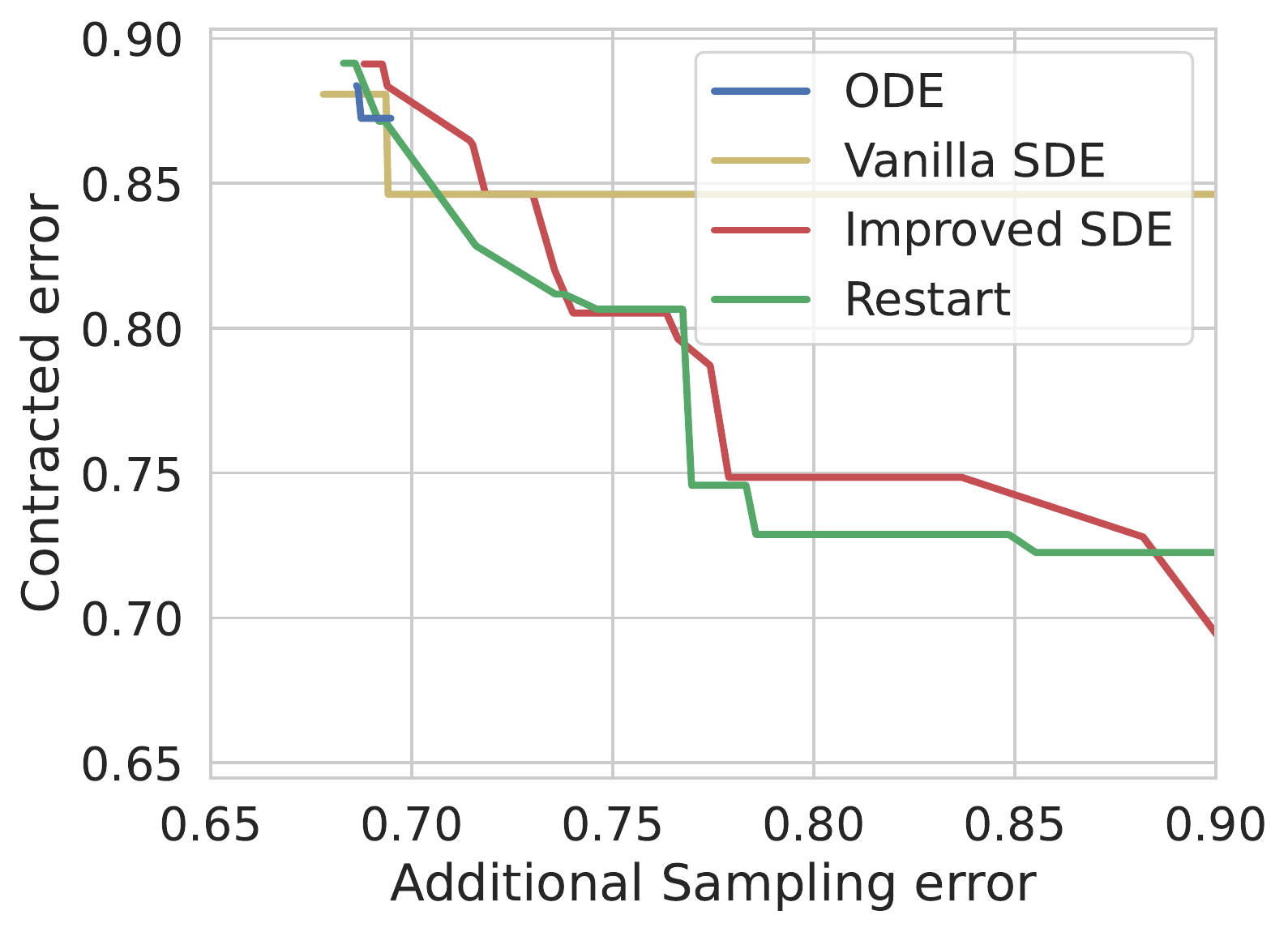}\label{fig:approx_contract_sde}}\hfill
\subfigure[]{\includegraphics[width=0.325\textwidth]{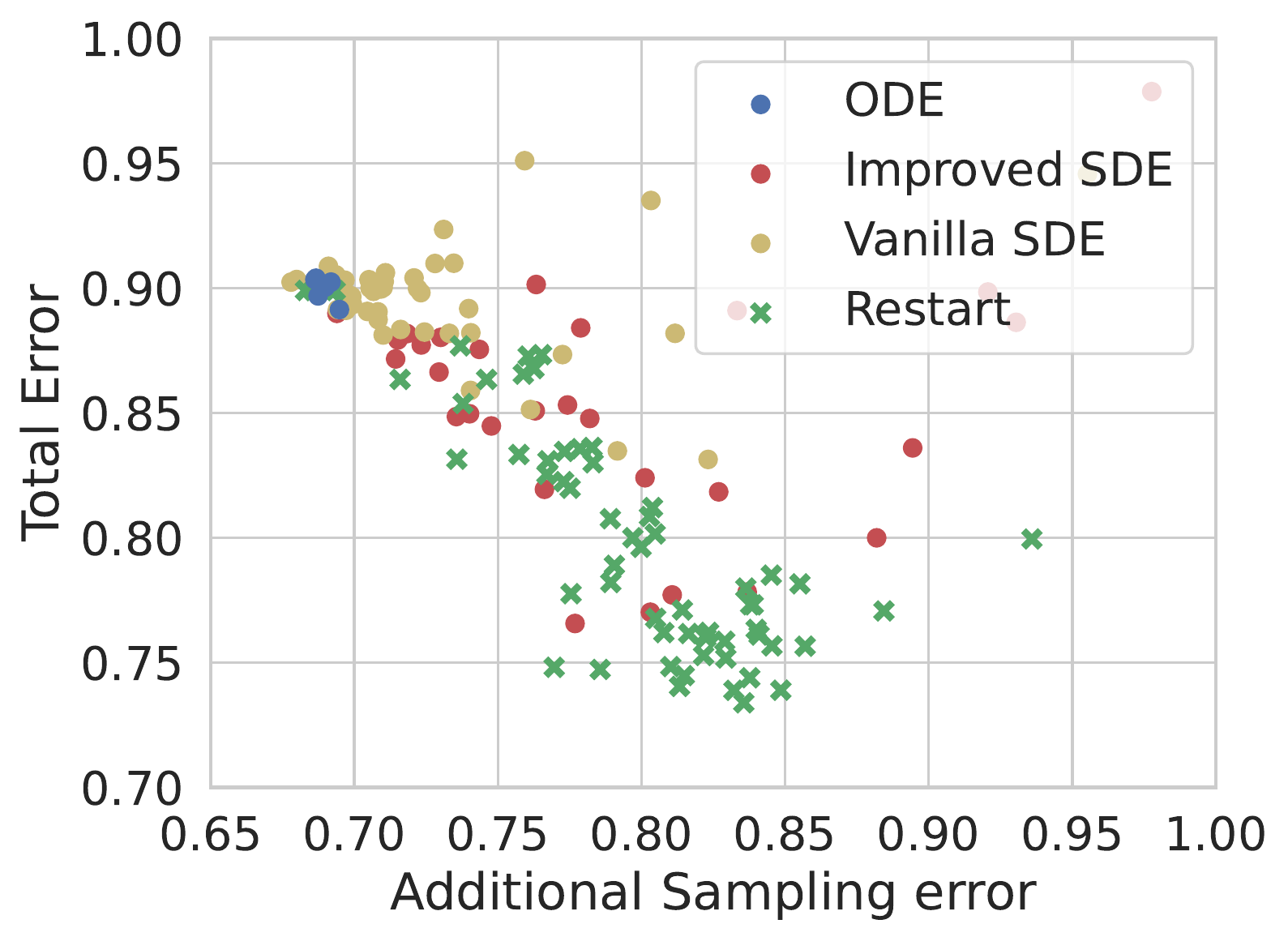}\label{fig:approx_fid_sde}}\hfill
\subfigure[]{\includegraphics[width=0.32\textwidth]{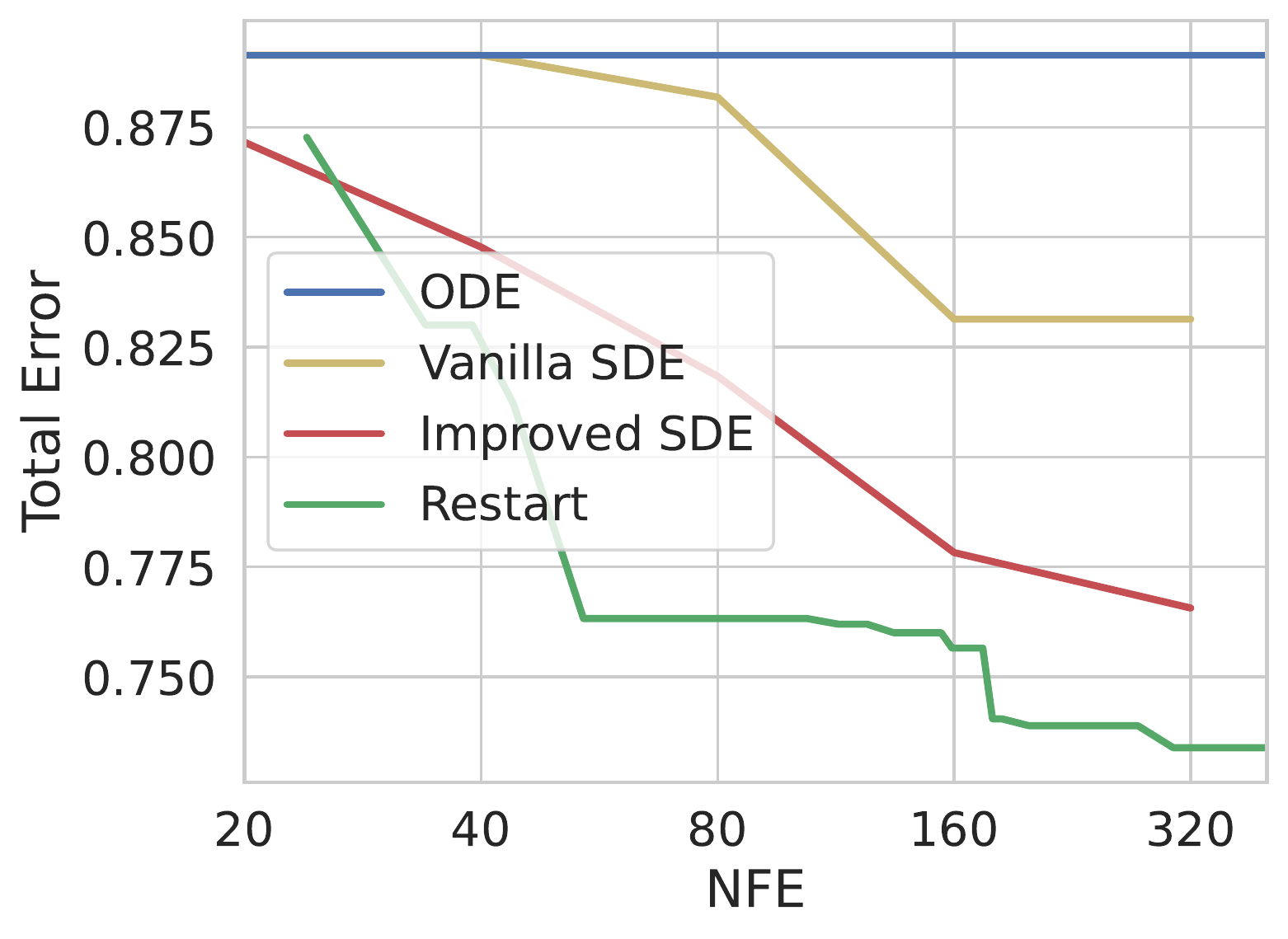}\label{fig:nfe_total_sde}}
\vspace{-6pt}
\caption{Comparison of additional sampling error versus \textbf{(a)} contracted error (plotting the Pareto frontier) and \textbf{(b)} total error (using a scatter plot). \textbf{(c)} Pareto frontier of NFE versus total error.} \label{fig:toy-with-sde}
\vspace{-12pt}
\end{figure*}

Additionally, we conduct an experiment for Improved SDE in EDM. We try different values of $S_{\text{churn}}$ in the range of $\{0, 1, 2, 4, 8, 16, 32, 48, 64\}$. We also perform a grid search where the number of steps ranged from $20$ to $320$ and $S_{\text{churn}}$ takes values of $[0.2 \times\text{steps}, 0.5 \times \text{steps}, 20, 60]$. The plot combines the results from SDE and is displayed in Figure \ref{fig:toy-with-sde}.

To mitigate the impact of randomness, we collect the data by averaging the results from five runs with the same hyperparameters. To compute the Wasserstein distance between two discrete distributions, we use minimum weight matching.

\subsubsection{Plotting the Pareto frontier}

We generate the Pareto frontier plots as follows. For the additional sampling error versus contracted error plot, we first sort all the data points based on their additional sampling error and then connect the data points that represent prefix minimums of the contracted error. Similarly, for the NFE versus FID plot, we sort the data points based on their NFE values and connect the points where the FID is a prefix minimum.

\section{Extra Experimental Results}
\label{app:extra-exp}

\subsection{Numerical Results}
\label{app:extra-numerical}
In this section, we provide the corresponding numerical reuslts of \Figref{fig:cifar10} and \Figref{fig:imagenet}, in Table~\ref{tab:app-cifar},~\ref{tab:app-cifar-restart}~(CIFAR-10 VP, EDM, PFGM++) and Table~\ref{tab:app-imagenet},~\ref{tab:app-imagenet-restart}~(ImageNet $64\times 64$ EDM), respectively. We also include the performance of Vanilla SDE in those tables. For the evaluation, we compute the Fréchet distance between 50000 generated samples and the pre-computed statistics of CIFAR-10 and ImageNet $64\times 64$. We follow the evaluation protocol in EDM~\cite{Karras2022ElucidatingTD} that calculates each FID scores three times with different seeds and report the minimum. 

We also provide the numerical results on the Stable Diffusion model~\cite{Rombach2021HighResolutionIS}, with a classifier guidance weight $w=2,3,5,8$ in Table~\ref{tab:w=2},~\ref{tab:w=3},~\ref{tab:w=5},~\ref{tab:w=8}. As in \cite{Meng2022OnDO}, we report the zero-shot FID score on 5K random prompts sampled from the COCO validation set. We evaluate CLIP score~\cite{Hessel2021CLIPScoreAR} with the open-sourced ViT-g/14~\cite{ilharco_gabriel_2021_5143773}, Aesthetic score by the more recent LAION-Aesthetics Predictor V2~\footnote{https://github.com/christophschuhmann/improved-aesthetic-predictor}. We average the CLIP and Aesthetic scores over 5K generated samples. The number of function evaluations is two times the sampling steps in Stable Diffusion model, since each sampling step involves the evaluation of the conditional and unconditional model.

\begin{table}[H]
    \centering
    \caption{CIFAR-10 sample quality~(FID score) and number of function evaluations~(NFE) on VP~\cite{Song2020ScoreBasedGM} for baselines}
    \begin{tabular}{c c c c c}
    \toprule
         & NFE & FID \\
          \midrule
\textit{ODE~(Heun)}~\cite{Karras2022ElucidatingTD} 
& 1023 & 2.90 &  \\
& 511 & 2.90\\
& 255 & 2.90\\
& 127 & 2.90 \\
& 63 & 2.89\\
& 35& 2.97 \\
\midrule
          \textit{Vanilla SDE}~\cite{Song2020ScoreBasedGM} & 1024& 2.79 \\
          & 512& 4.01 \\
          & 256 & 4.79\\
          & 128 & 12.57\\
          \midrule
          \textit{Gonna Go Fast}~\cite{JolicoeurMartineau2021GottaGF} 
          & 1000& 2.55 \\
          & 329 & 2.70\\
          & 274 & 2.74\\
          & 179 & 2.59\\
          & 147 & 2.95\\
          & 49 & 72.29\\
          \midrule
        \textit{Improved SDE}~\cite{Karras2022ElucidatingTD} & 1023&  2.35\\
          & 511& 2.37\\
          & 255&  2.40\\
        & 127& 2.58 \\
        & 63& 2.88\\
        & 35 & 3.45\\
        \bottomrule
    \end{tabular}
    \label{tab:app-cifar}
\end{table}

\begin{table}[H]
    \centering
    \caption{CIFAR-10 sample quality~(FID score), number of function evaluations~(NFE) and Restart configurations on VP~\cite{Song2020ScoreBasedGM}, VP with DPM-Solver-3~\cite{lu2022dpm}, EDM~\cite{Karras2022ElucidatingTD} and PFGM++~\cite{Xu2023PFGMUT}}
    \begin{tabular}{c c c c c}
    \toprule
         \multirow{2}*{Method} & \multirow{2}*{NFE} & \multirow{2}*{FID} & Configuration \\ & &  & $(N_{\textrm{main}}, N_{\textrm{Restart}, i}, K_i, t_{\textrm{min}, i}, t_{\textrm{max}, i})$ \\
        \midrule
        \textit{VP}\\
        \midrule
        & 519 & 2.11 & $(20, 9, 30, 0.06, 0.20)$\\ 
        & 115 & 2.21& $(18, 3, 20, 0.06, 0.30)$\\
        & 75 & 2.27 & $(18, 3, 10, 0.06, 0.30)$\\
        & 55 & 2.45 & $(18, 3, 5, 0.06, 0.30)$ \\
        & 43 & 2.70 & $(18, 3, 2, 0.06, 0.30)$ \\
        \midrule
        \textit{VP w/ DPM-Solver-3}\\
        \midrule
            & 27 & 2.11 & $(8, 3, 1, 0.06, 0.3)$ \\
        & 24 & 2.15 & $(7, 3, 1, 0.06, 1)$ \\
        & 21 & 2.28 & $(6, 3, 1, 0.06, 1)$ \\
        & 18 & 2.40 & $(5, 3, 1, 0.06, 1)$ \\
        \midrule
        \textit{EDM}\\
        \midrule
        & 43 & 1.90 & $(18, 3, 2, 0.14, 0.30)$\\ 
        \midrule
        \textit{PFGM++}\\
        \midrule
        & 43 & 1.88& $(18, 3, 2, 0.14, 0.30)$\\
        \bottomrule
    \end{tabular}
    \label{tab:app-cifar-restart}
\end{table}

\begin{table}[htbp]
    \centering
    \caption{ImageNet $64\times 64$ sample quality~(FID score) and number of function evaluations~(NFE) on EDM~\cite{Karras2022ElucidatingTD} for baselines} 
    \begin{tabular}{c c c c c}
    \toprule
         & NFE & FID~(50k)  \\
          \midrule
\textit{ODE~(Heun)}~\cite{Karras2022ElucidatingTD} 
& 1023 & 2.24\\
& 511 &  2.24\\
& 255 & 2.24\\
& 127 & 2.25\\
& 63 & 2.30\\
& 35& 2.46\\
\midrule
      \textit{Vanilla SDE}~\cite{Song2020ScoreBasedGM}  
           & 1024 & 1.89\\
           & 512 & 3.38\\
           & 256 & 11.91 \\
           & 128 & 59.71\\
          \midrule
        \textit{Improved SDE}~\cite{Karras2022ElucidatingTD} &1023 & 1.40\\
        & 511 & 1.45\\
        & 255 & 1.50\\
        & 127 &  1.75\\
        & 63 &  2.24\\
        & 35 &  2.97\\
        \bottomrule
    \end{tabular}
    \label{tab:app-imagenet}
\end{table}

\begin{table}[H]
    \centering
    \caption{ImageNet $64\times 64$ sample quality~(FID score), number of function evaluations~(NFE) and Restart configurations on EDM~\cite{Karras2022ElucidatingTD}} 
    \begin{tabular}{c c c c c}
    \toprule
         & NFE & FID~(50k) & Configuration \\ & & & $N_{\textrm{main}}$, $\{(N_{\textrm{Restart}, i}, K_i, t_{\textrm{min}, i}, t_{\textrm{max}, i})\}_{i=1}^l$ \\
        \midrule
        & \multirow{3}*{623} &  \multirow{3}*{1.36}& 36, \{(10, 3, 19.35, 40.79),(10, 3, 1.09, 1.92), \\
        & & & (7, 6, 0.59, 1.09), (7, 6, 0.30, 0.59),\\
        & & & (7, 25, 0.06, 0.30)\} \\
        & \multirow{3}*{535} & \multirow{3}*{1.39}& 36, \{(6, 1, 19.35, 40.79),(6, 1, 1.09, 1.92), \\
        & & & (7, 6, 0.59, 1.09), (7, 6, 0.30, 0.59),\\
        & & & (7, 25, 0.06, 0.30)\}\\
        & \multirow{3}*{385} &  \multirow{3}*{1.41} & 36, \{(3, 1, 19.35, 40.79),(6, 1, 1.09, 1.92), \\
        & & & (6, 5, 0.59, 1.09), (6, 5, 0.30, 0.59),\\
        & & & (6, 20, 0.06, 0.30)\}\\
        & \multirow{3}*{203} &  \multirow{3}*{1.46} & 36, \{(4, 1, 19.35, 40.79),(4, 1, 1.09, 1.92), \\
        & & & (4, 5, 0.59, 1.09), (4, 5, 0.30, 0.59),\\
        & & & (6, 6, 0.06, 0.30)\} \\
        & \multirow{3}*{165} &  \multirow{3}*{1.51} & 18, \{(3, 1, 19.35, 40.79),(4, 1, 1.09, 1.92), \\
        & & & (4, 5, 0.59, 1.09), (4, 5, 0.30, 0.59),\\
        & & & (4, 10, 0.06, 0.30)\} \\
        & \multirow{3}*{99} & \multirow{3}*{1.71} & 18, \{(3, 1, 19.35, 40.79),(4, 1, 1.09, 1.92), \\
        & & & (4, 4, 0.59, 1.09), (4, 1, 0.30, 0.59),\\
        & & & (4, 4, 0.06, 0.30)\} \\  
        & \multirow{2}*{67} & \multirow{2}*{1.95}& 18, \{(5, 1, 19.35, 40.79),(5, 1, 1.09, 1.92), \\
        & & & (5, 1, 0.59, 1.09), (5, 1, 0.06, 0.30)\} \\ 
        \multirow{2}*{}& \multirow{2}*{39} & \multirow{2}*{2.38} & 14, \{(3, 1, 19.35, 40.79), \\
        & & & (3, 1, 1.09, 1.92), (3, 1, 0.06, 0.30)\} \\
        \bottomrule
    \end{tabular}
    \label{tab:app-imagenet-restart}
\end{table}

\begin{table}[H]
    \centering
    \caption{Numerical results on Stable Diffusion v1.5 with a classifier-free guidance weight $w=2$}
    \begin{tabular}{c c c c c}
        \toprule
         & Steps & FID~(5k) $\downarrow$ & CLIP score $\uparrow$& Aesthetic score $\uparrow$ \\
          \midrule
\textit{DDIM}~\cite{Song2020DenoisingDI} 
    & 50 & 16.08 & 0.2905 & 5.13\\
    &100 & 15.35 & 0.2920 & 5.15\\
          \midrule
          \textit{Heun} & 51 & 18.80 & 0.2865 & 5.14\\
    &101 & 18.21 & 0.2871 & 5.15\\
    \midrule
\textit{DDPM}~\cite{Ho2020DenoisingDP} & 100 & 13.53 & {0.3012} & {5.20}\\
        & 200 & 13.22 & 0.2999 & 5.19\\
        \midrule
        \textit{Restart} & 66 & {13.16}& 0.2987 & 5.19 \\
        \bottomrule
    \end{tabular}
    \label{tab:w=2}
\end{table}

\begin{table}[H]
    \centering
    \caption{Numerical results on Stable Diffusion v1.5 with a classifier-free guidance weight $w=3$}
    \begin{tabular}{c c c c c}
        \toprule
         & Steps & FID~(5k) $\downarrow$ & CLIP score $\uparrow$& Aesthetic score $\uparrow$ \\
          \midrule
\textit{DDIM}~\cite{Song2020DenoisingDI} 
    & 50 & 14.28 & 0.3056 & 5.22\\
    &100 & 14.30 & 0.3056 & 5.22\\
          \midrule
          \textit{Heun} & 51 & 15.63 & 0.3022 & 5.20\\
    &101 & 15.40 & 0.3026 & 5.21\\
    \midrule
\textit{DDPM}~\cite{Ho2020DenoisingDP} & 100 & 15.72 & 0.3129 & 5.28\\
        & 200 & 15.13 & {0.3131} & {5.28}\\
        \midrule
        \textit{Restart} & 66 & {14.48} & 0.3079 & 5.25 \\
        \bottomrule
    \end{tabular}
    \label{tab:w=3}
\end{table}

\begin{table}[H]
    \centering
    \caption{Numerical results on Stable Diffusion v1.5 with a classifier-free guidance weight $w=5$}
    \begin{tabular}{c c c c c}
        \toprule
         & Steps & FID~(5k) $\downarrow$ & CLIP score $\uparrow$& Aesthetic score $\uparrow$ \\
          \midrule
\textit{DDIM}~\cite{Song2020DenoisingDI} 
    & 50 & 16.60 & 0.3154 & 5.31\\
    &100 & 16.80 & 0.3157 & 5.31\\
          \midrule
          \textit{Heun} & 51 & 16.26 & 0.3135 & 5.28\\
    &101 & 16.38 & 0.3136 & 5.29\\
    \midrule
\textit{DDPM}~\cite{Ho2020DenoisingDP} & 100 & 19.62 & 0.3197 & 5.36\\
        & 200 & 18.88 & 0.3200 & 5.35\\
        \midrule
        \textit{Restart} & 66 & 16.21 & 0.3179 & 5.33 \\
        \bottomrule
    \end{tabular}
    \label{tab:w=5}
\end{table}

\begin{table}[H]
    \centering
    \caption{Numerical results on Stable Diffusion v1.5 with a classifier-free guidance weight $w=8$}
    \begin{tabular}{c c c c c}
    \toprule
         & Steps & FID~(5k) $\downarrow$ & CLIP score $\uparrow$& Aesthetic score $\uparrow$ \\
          \midrule
\textit{DDIM}~\cite{Song2020DenoisingDI} 
    & 50 & 19.83 &   0.3206  & 5.37\\
    & 100 & 19.82 &   0.3200  & 5.37\\
          \midrule
\textit{Heun}
    & 51 & 18.44 &   0.3186  & 5.35\\
    & 101 & 18.72 &   0.3185  & 5.36\\
    \midrule
\textit{DDPM}~\cite{Ho2020DenoisingDP} & 100 & 22.58    & 0.3223 & 5.39\\
        & 200 & 21.67    & 0.3212 & 5.38\\
        \midrule
        \textit{Restart} & 47 & {18.40}& {0.3228} & {5.41} \\
        \bottomrule
    \end{tabular}
    \label{tab:w=8}
\end{table}

\begin{table}[htbp]
    \centering
    \caption{Restart~(Steps=66) configurations on Stable Diffusion v1.5}
    \begin{tabular}{c c c }
    \toprule
         \multirow{2}*{$w$}  & Configuration \\ & $N_{\textrm{main}}$, $\{(N_{\textrm{Restart}, i}, K_i, t_{\textrm{min}, i}, t_{\textrm{max}, i})\}_{i=1}^l$ \\
          \midrule
        2& 30, \{(5, 2, 1, 9), (5, 2, 5, 10)\}\\
        3& 30, \{(10, 2, 0.1, 3)\}\\
        5& 30, \{(10, 2 0.1, 2)\}\\
        8& 30, \{(10, 2, 0.1, 2)\}\\
        \bottomrule
    \end{tabular}
    \label{tab:config-sd}
\end{table}

\begin{figure*}[h]
    \centering
    \subfigure[FID versus CLIP score]{\includegraphics[width=0.48\textwidth]
    {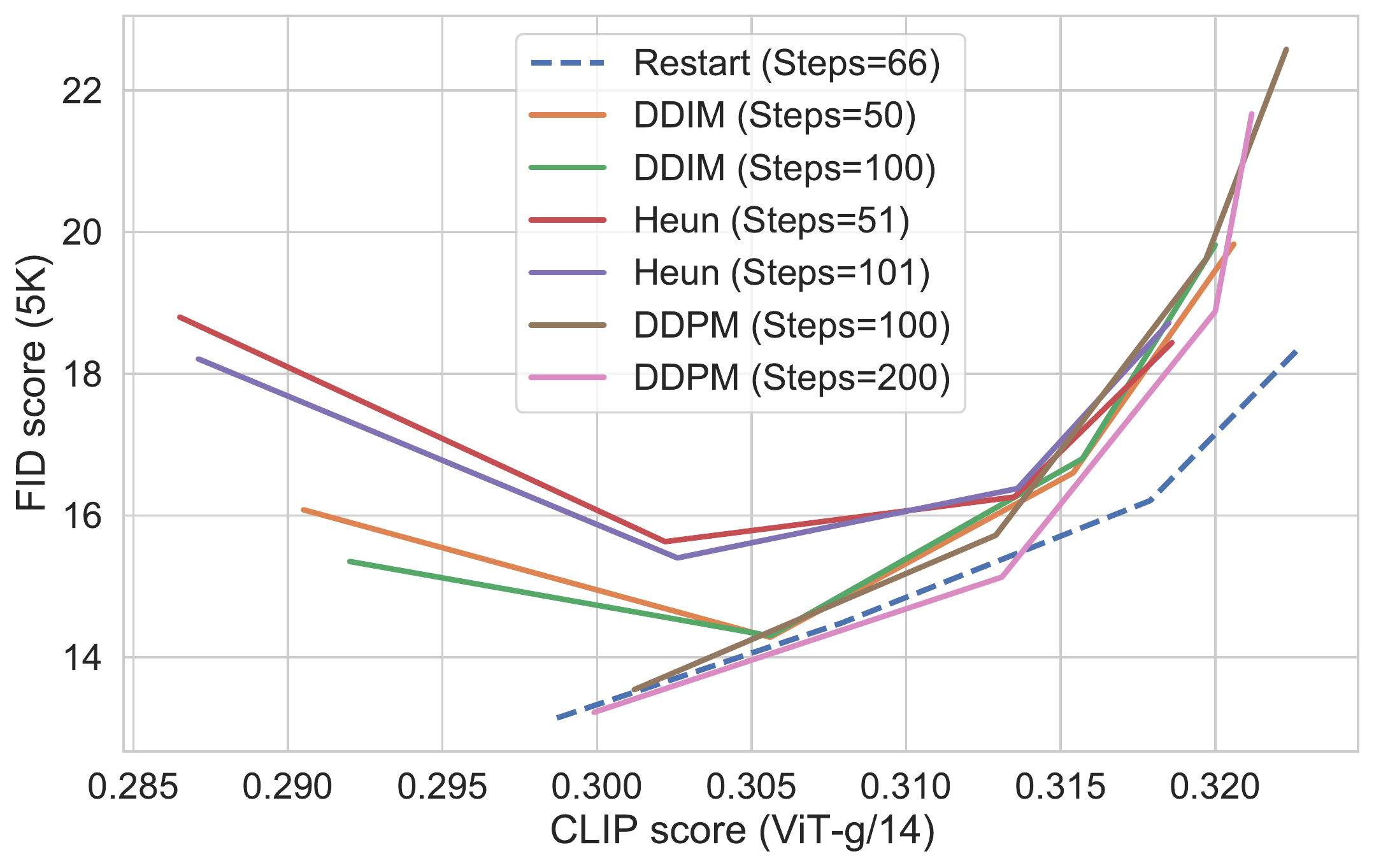}   }\hfill
   \subfigure[FID versus Aesthetic score]{\includegraphics[width=0.48\textwidth]{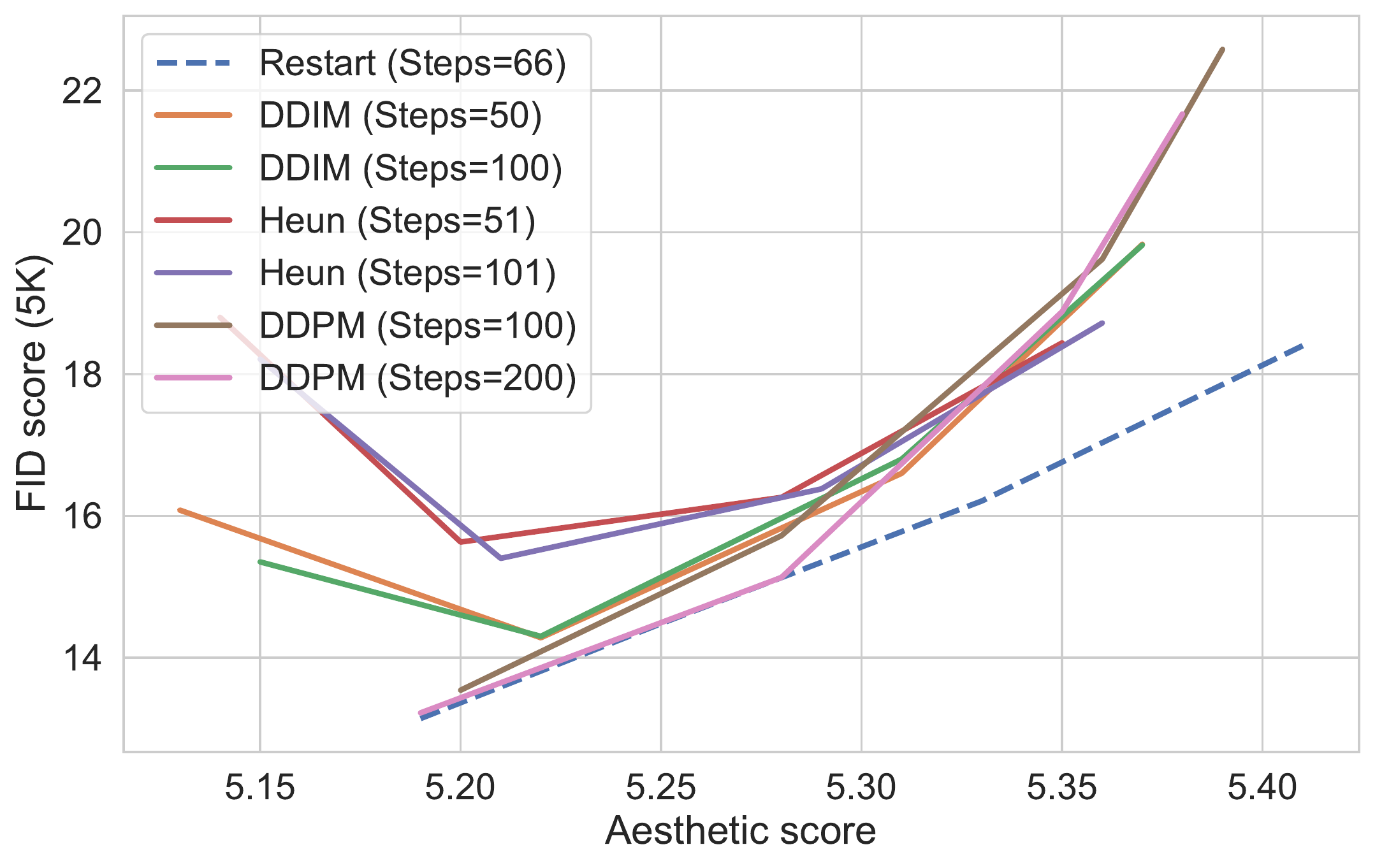} }
    \caption{FID score versus \textbf{(a)} CLIP ViT-g/14 score and \textbf{(b)} Aesthetic score for text-to-image generation at $512 \times 512$ resolution,  using Stable Diffusion v1.5 with varying classifier-free guidance weight $w=2,3,5,8$. }
\end{figure*}

\subsection{Sensitivity Analysis of Hyper-parameters}

\begin{figure*}
\centering
    \subfigure[]{\includegraphics[width=0.45\textwidth]{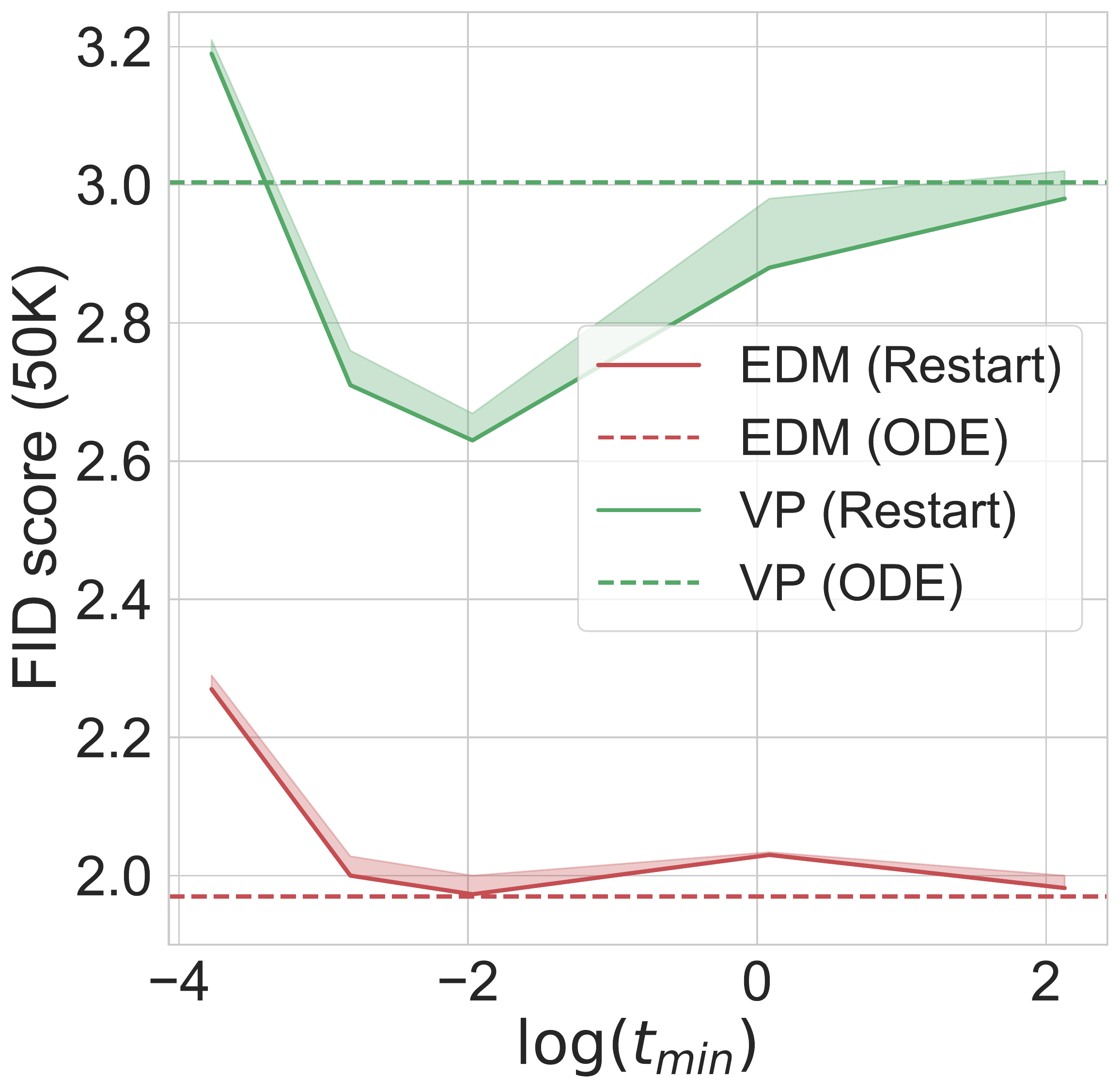}\label{fig:stmin}}
    \subfigure[]{\includegraphics[width=0.53\textwidth]{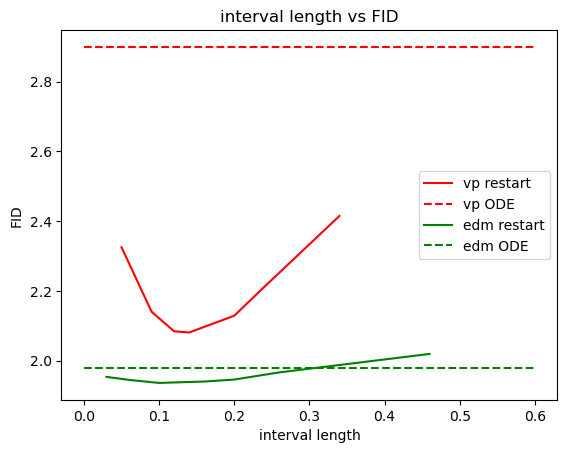}\label{fig:slen}}
    \caption{(a): Adjusting $\tmin$ in Restart on VP/EDM; (b): Adjusting the Restart interval length when $\tmin=0.06$.}
    \label{fig:tmin}
\end{figure*}
We also investigate the impact of varying $\tmin$ when $\tmax=\tmin+0.3$, and the length the restart interval when $\tmin=0.06$. \Figref{fig:stmin} reveals that FID scores achieve a minimum at a $\tmin$ close to $0$ on VP, indicating higher accumulated errors at the end of sampling and poor neural estimations at small $t$. Note that the Restart interval $0.3$ is about twice the length of the one in Table~\ref{tab:cifar10-edm} and Restart does not outperform the ODE baseline on EDM. This suggests that, as a rule of thumb, we should apply greater Restart strength (\eg larger $K$, $\tmax-\tmin$) for weaker or smaller architectures and vice versa. 

In theory, a longer interval enhances contraction but may add more additional sampling errors. Again, the balance between these factors results in a V-shaped trend in our plots (\Figref{fig:slen}). In practice, selecting $\tmax$ close to the dataset's radius usually ensures effective mixing when $\tmin$ is small.

\section{Extended Generated Images}
\label{app:extended-gen}
In this section, we provide extended generated images by Restart, DDIM, Heun and DDPM on text-to-image Stable Diffusion v1.5 model~\cite{Rombach2021HighResolutionIS}. We showcase the samples of four sets of text prompts in \Figref{fig:app_horse}, \Figref{fig:app_raccoon}, \Figref{fig:app_fox}, \Figref{fig:app_duck}, with a classifier-guidance weight $w=8$.

\begin{figure*}[htbp]
    \centering
    \subfigure[Restart (Steps=66)]{\includegraphics[width=0.48\textwidth]{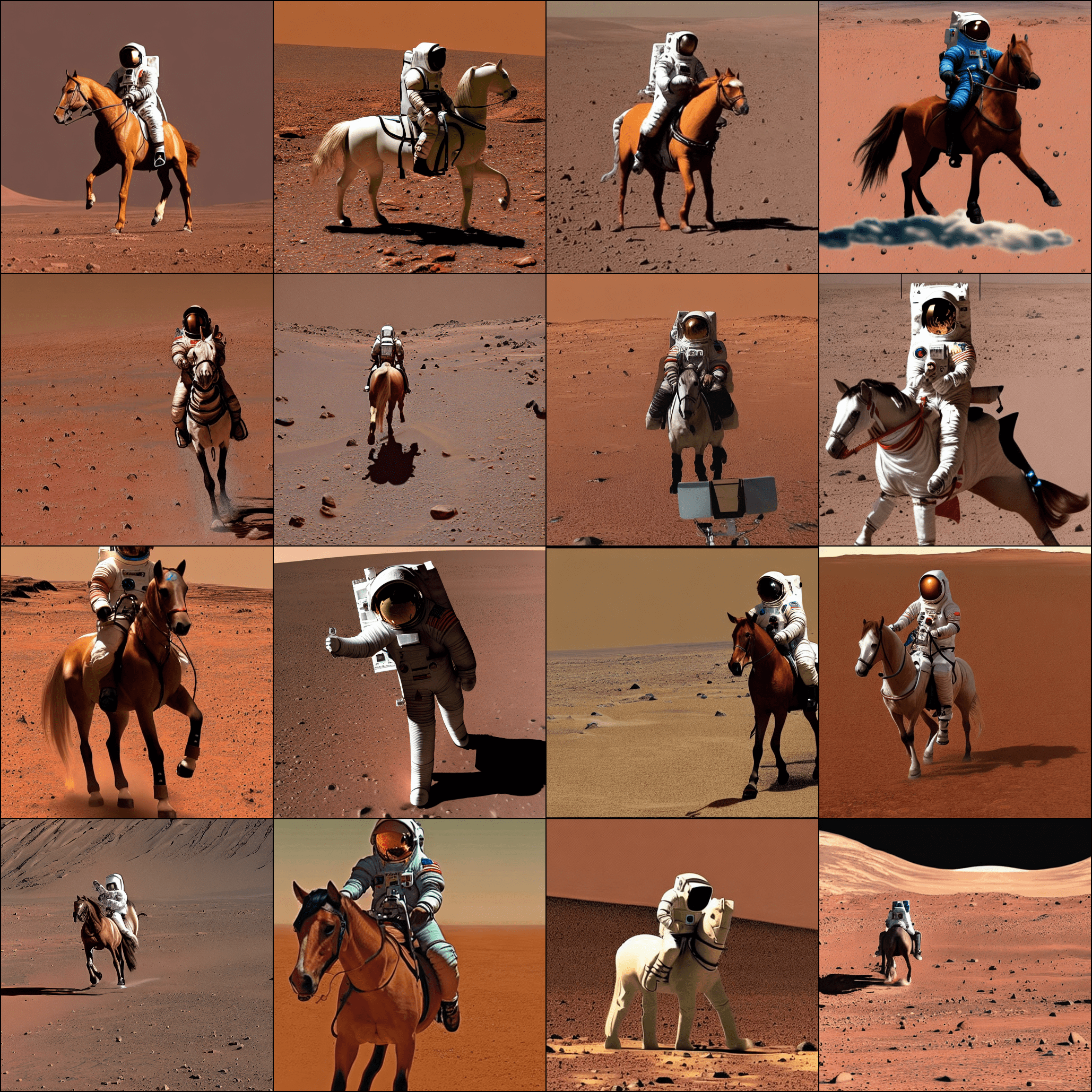}}\hfill
    \subfigure[DDIM (Steps=100)]{\includegraphics[width=0.48\textwidth]{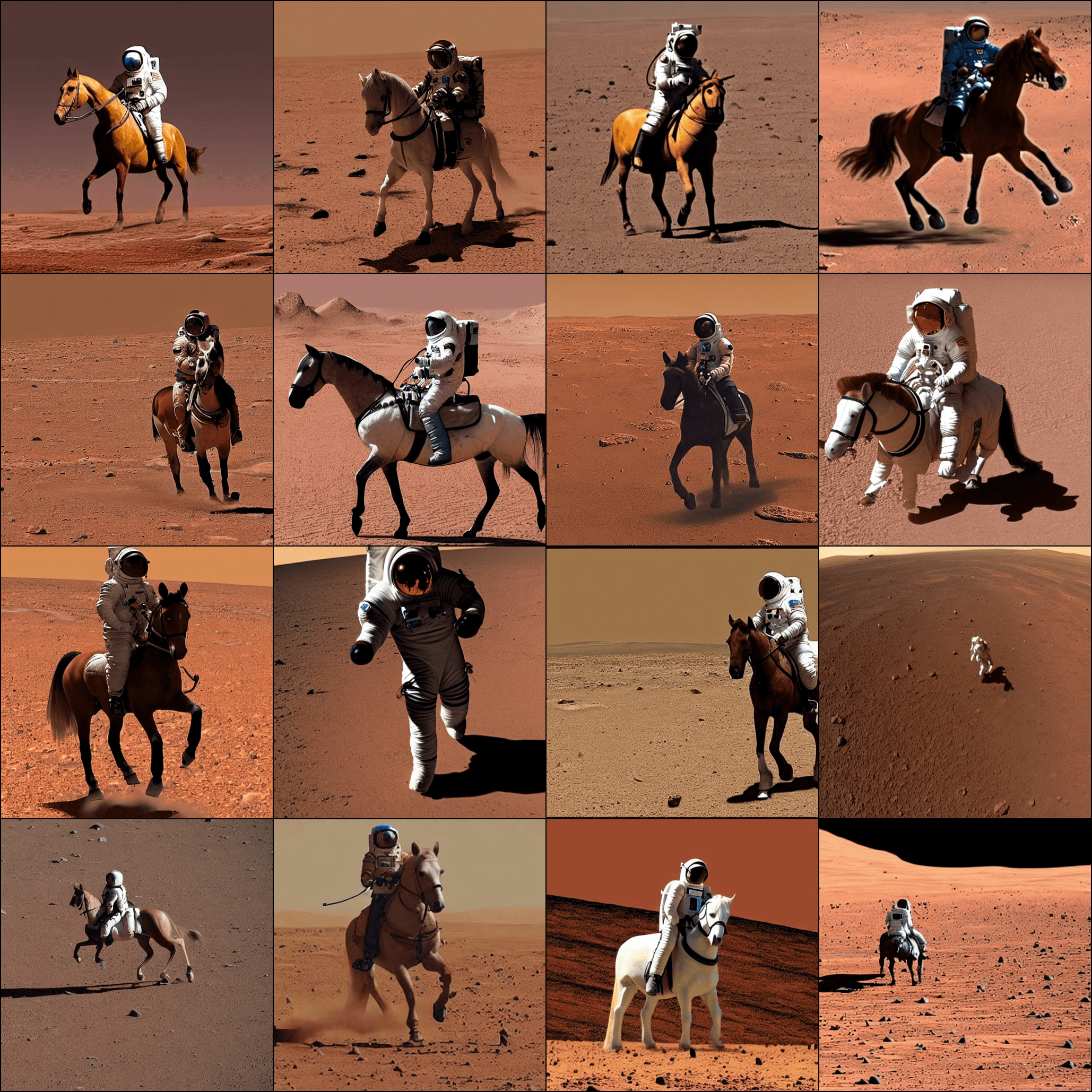}}
    \subfigure[Heun (Steps=101)]{\includegraphics[width=0.48\textwidth]{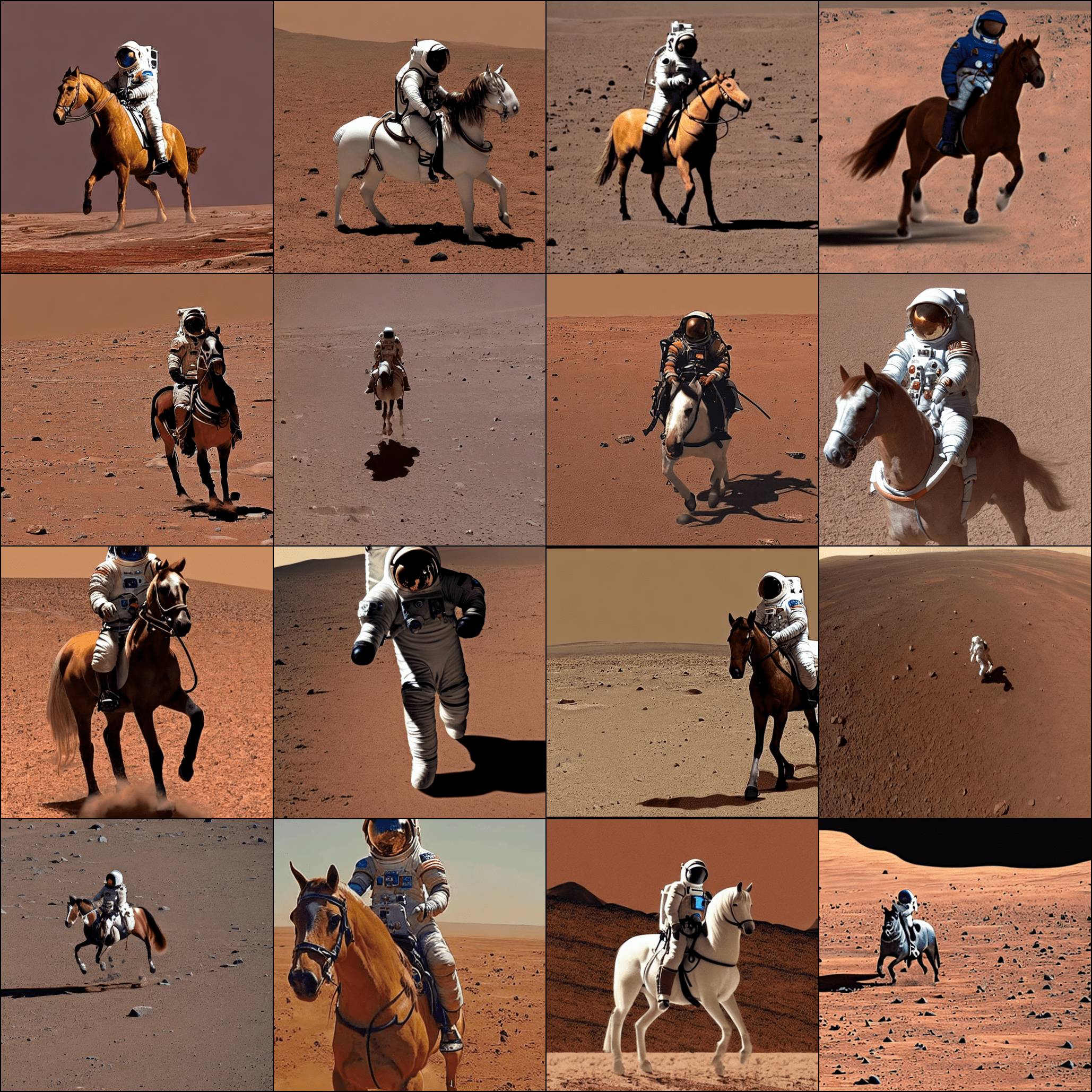}}
    \subfigure[DDPM (Steps=100)]{\includegraphics[width=0.48\textwidth]{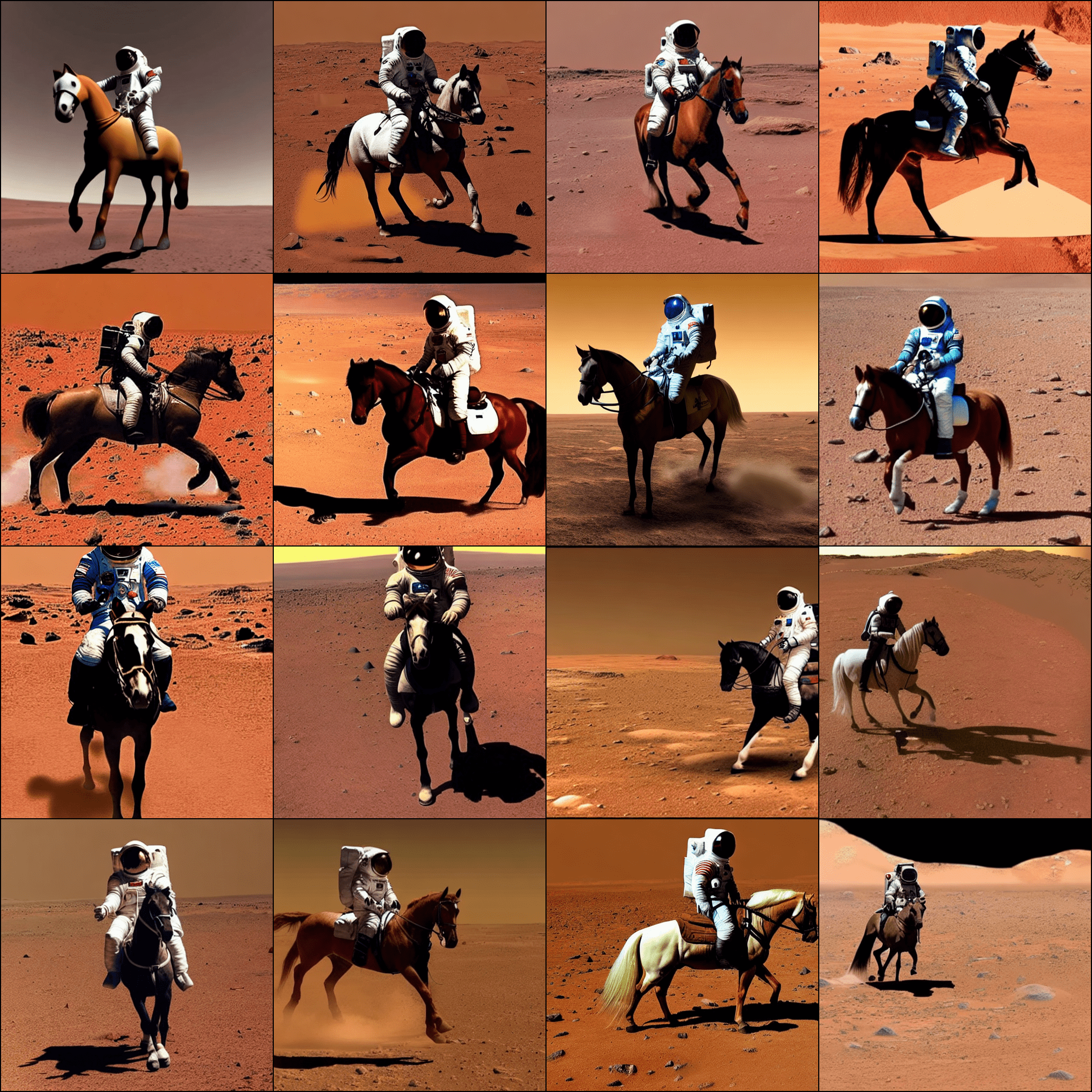}}
    \caption{Generated images with text prompt="A photo of an astronaut riding a horse on mars" and $w=8$.}
    \label{fig:app_horse}
\end{figure*}

\begin{figure*}[htbp]
    \centering
    \subfigure[Restart (Steps=66)]{\includegraphics[width=0.48\textwidth]{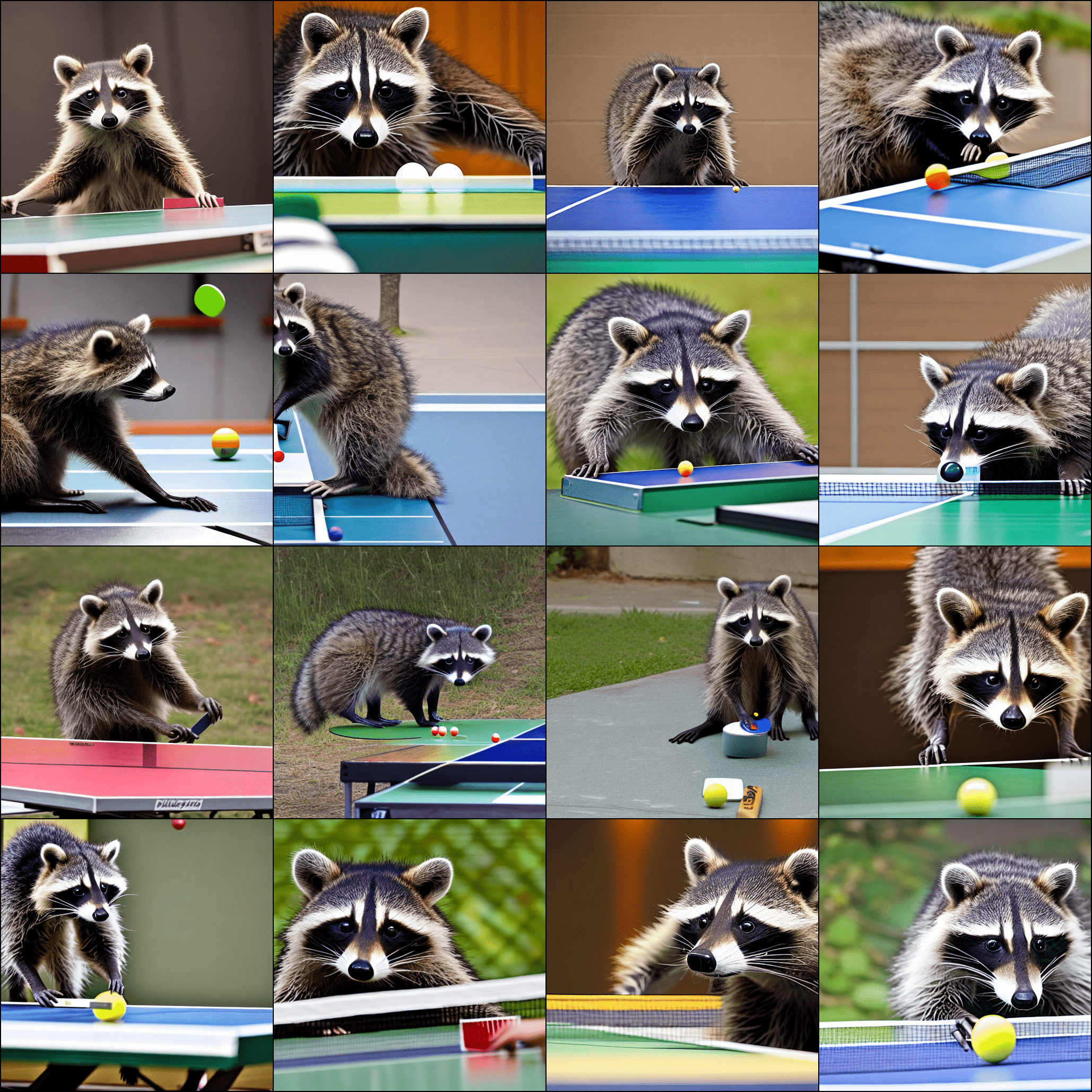}}\hfill
    \subfigure[DDIM (Steps=100)]{\includegraphics[width=0.48\textwidth]{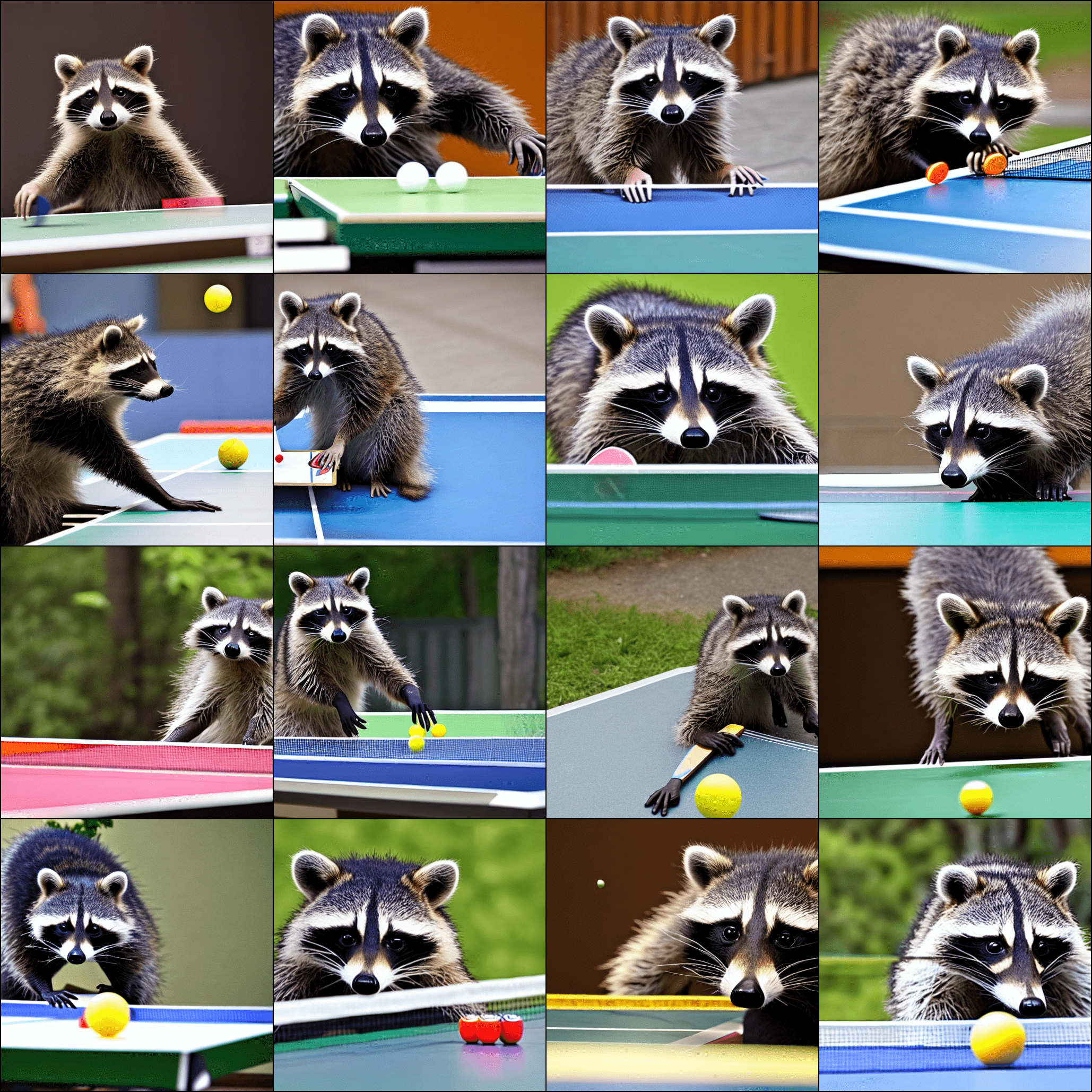}}
    \subfigure[Heun (Steps=101)]{\includegraphics[width=0.48\textwidth]{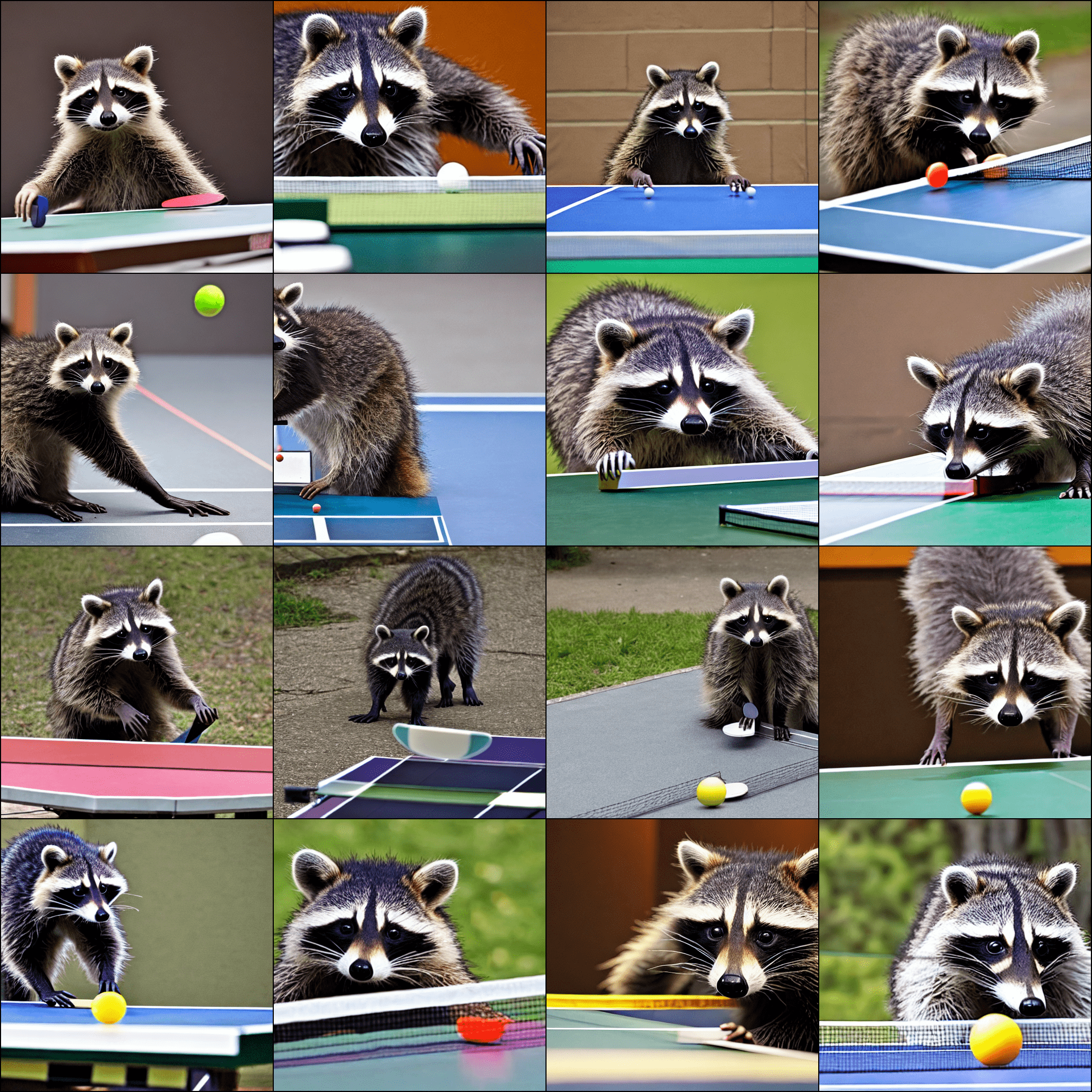}}
    \subfigure[DDPM (Steps=100)]{\includegraphics[width=0.48\textwidth]{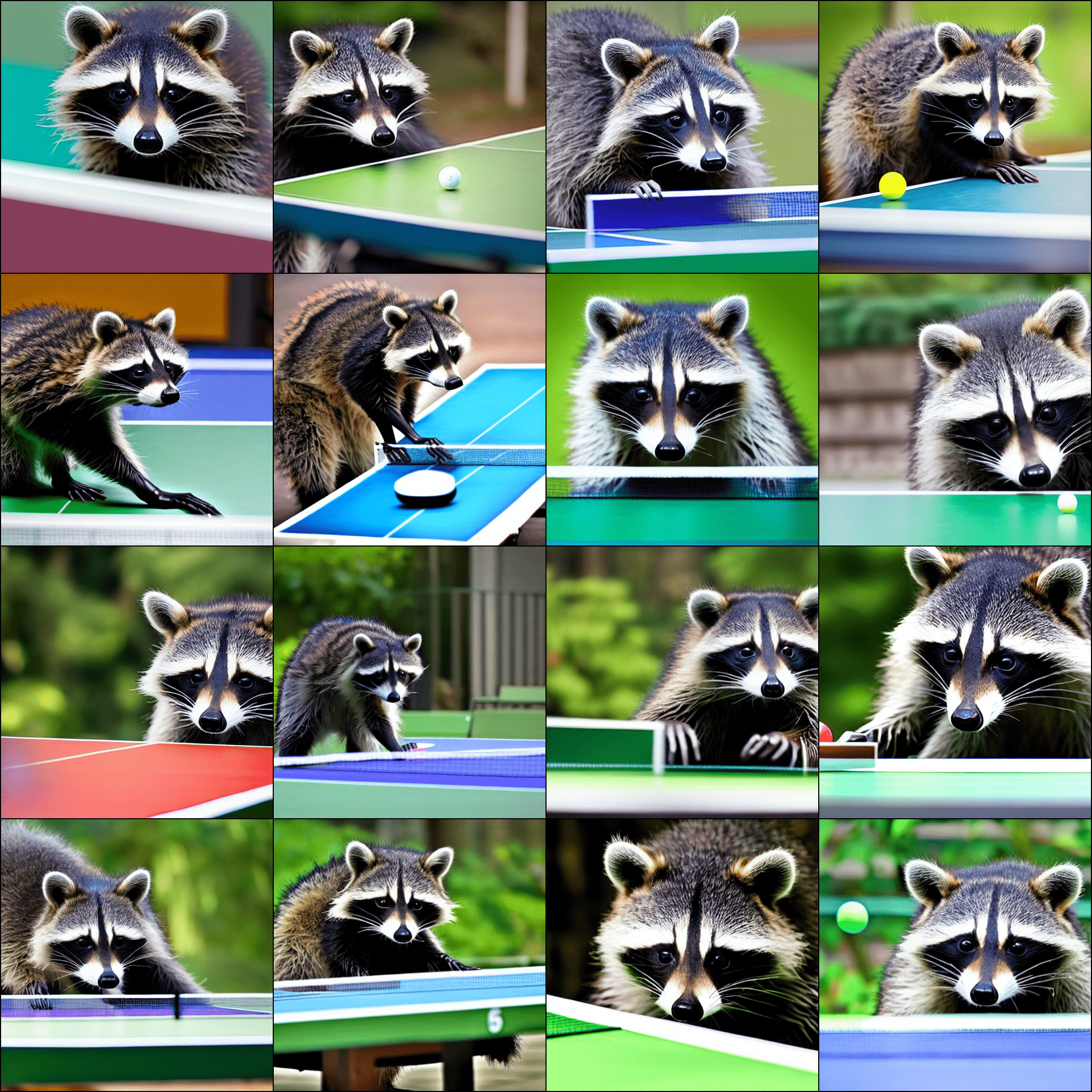}}
    \caption{Generated images with text prompt="A raccoon playing table tennis" and $w=8$.}
    \label{fig:app_raccoon}
\end{figure*}

\begin{figure*}[htbp]
    \centering
    \subfigure[Restart (Steps=66)]{\includegraphics[width=0.48\textwidth]{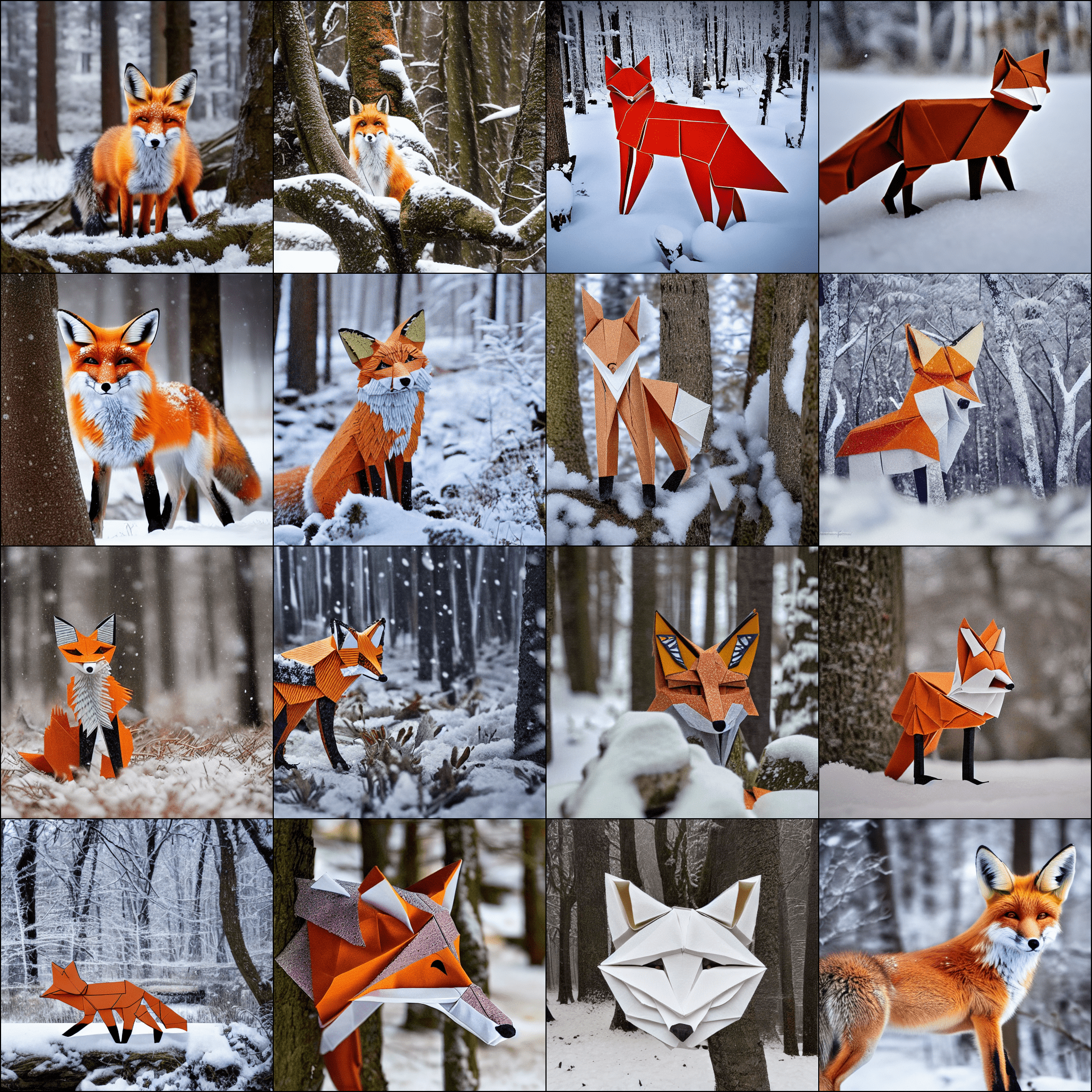}}\hfill
    \subfigure[DDIM (Steps=100)]{\includegraphics[width=0.48\textwidth]{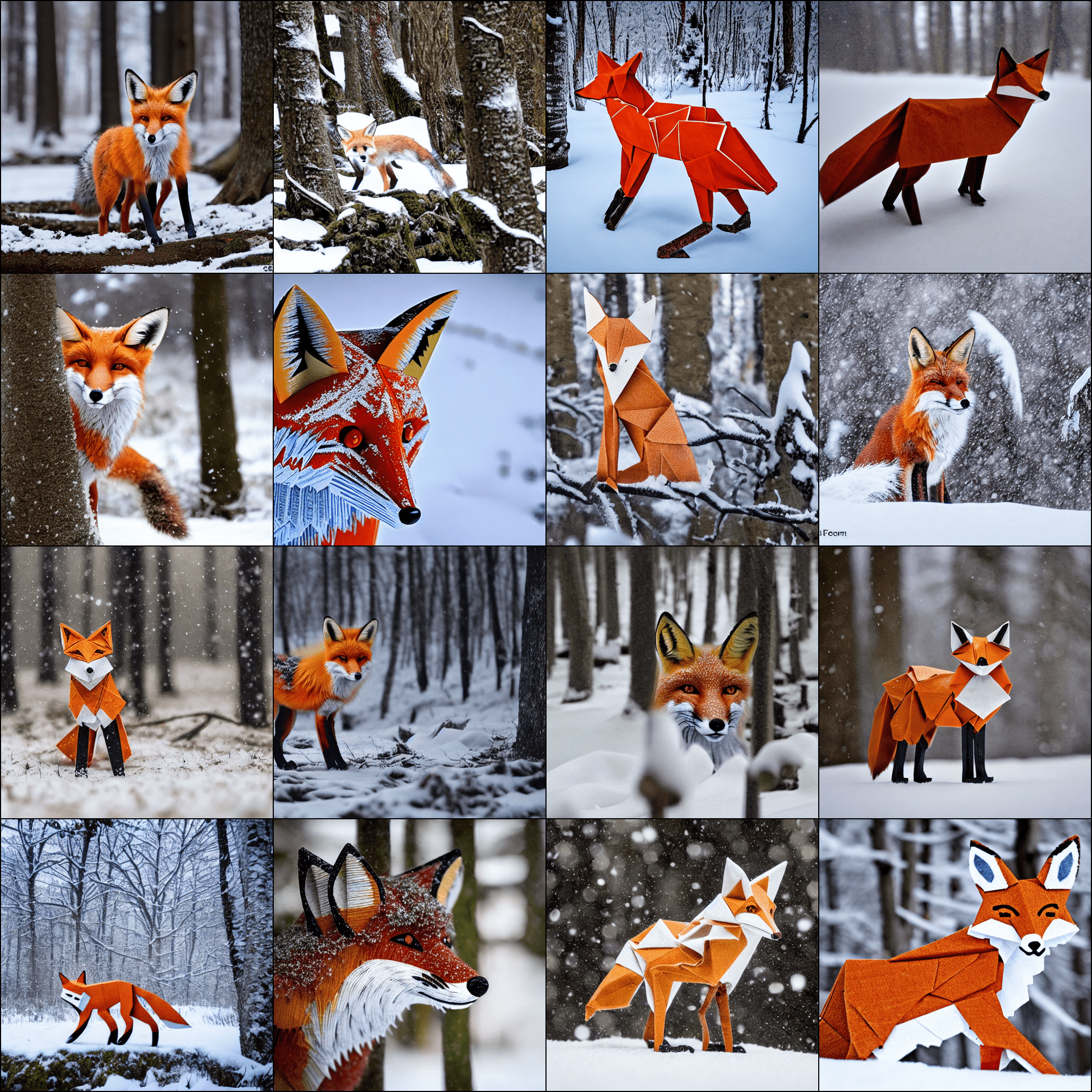}}
    \subfigure[Heun (Steps=101)]{\includegraphics[width=0.48\textwidth]{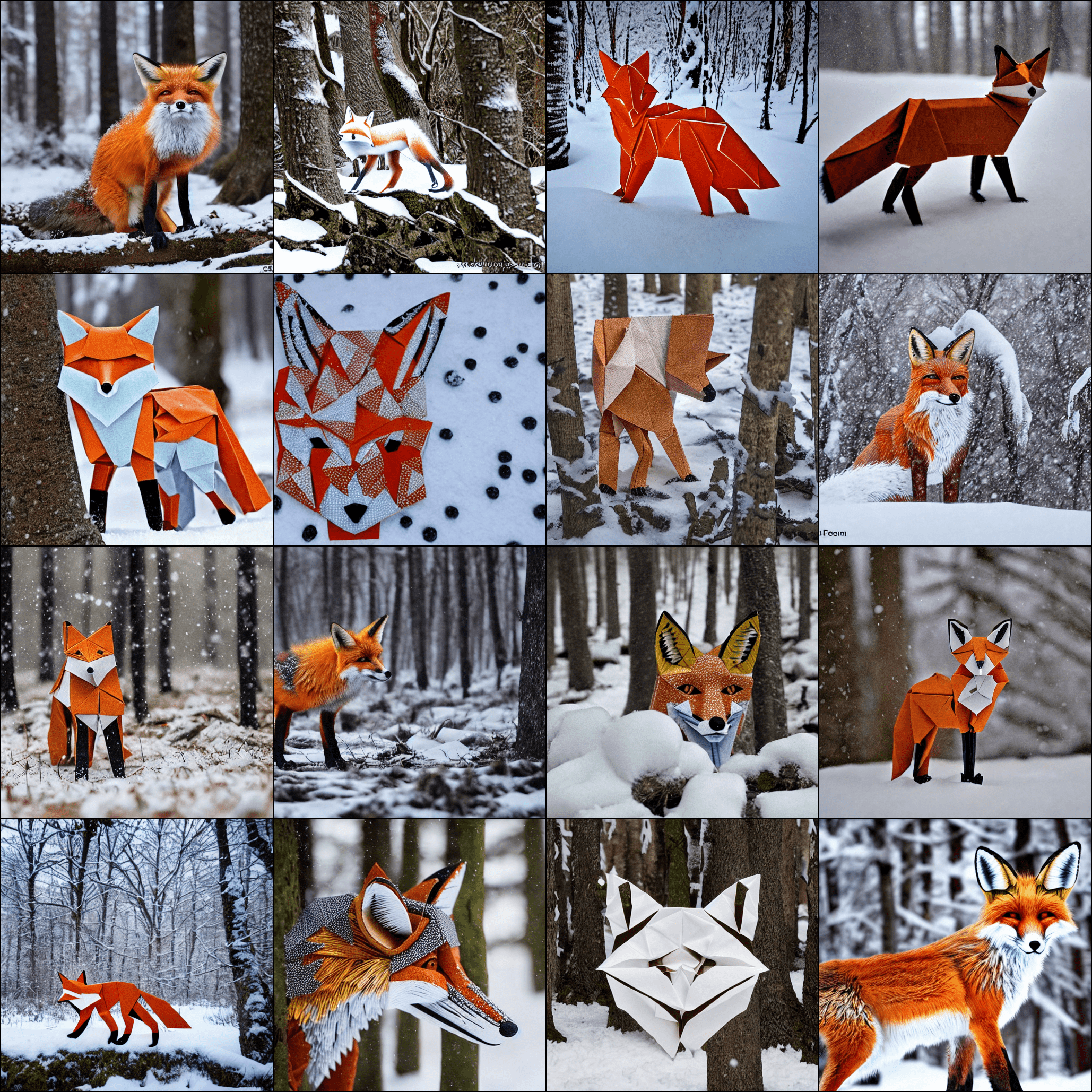}}
    \subfigure[DDPM (Steps=100)]{\includegraphics[width=0.48\textwidth]{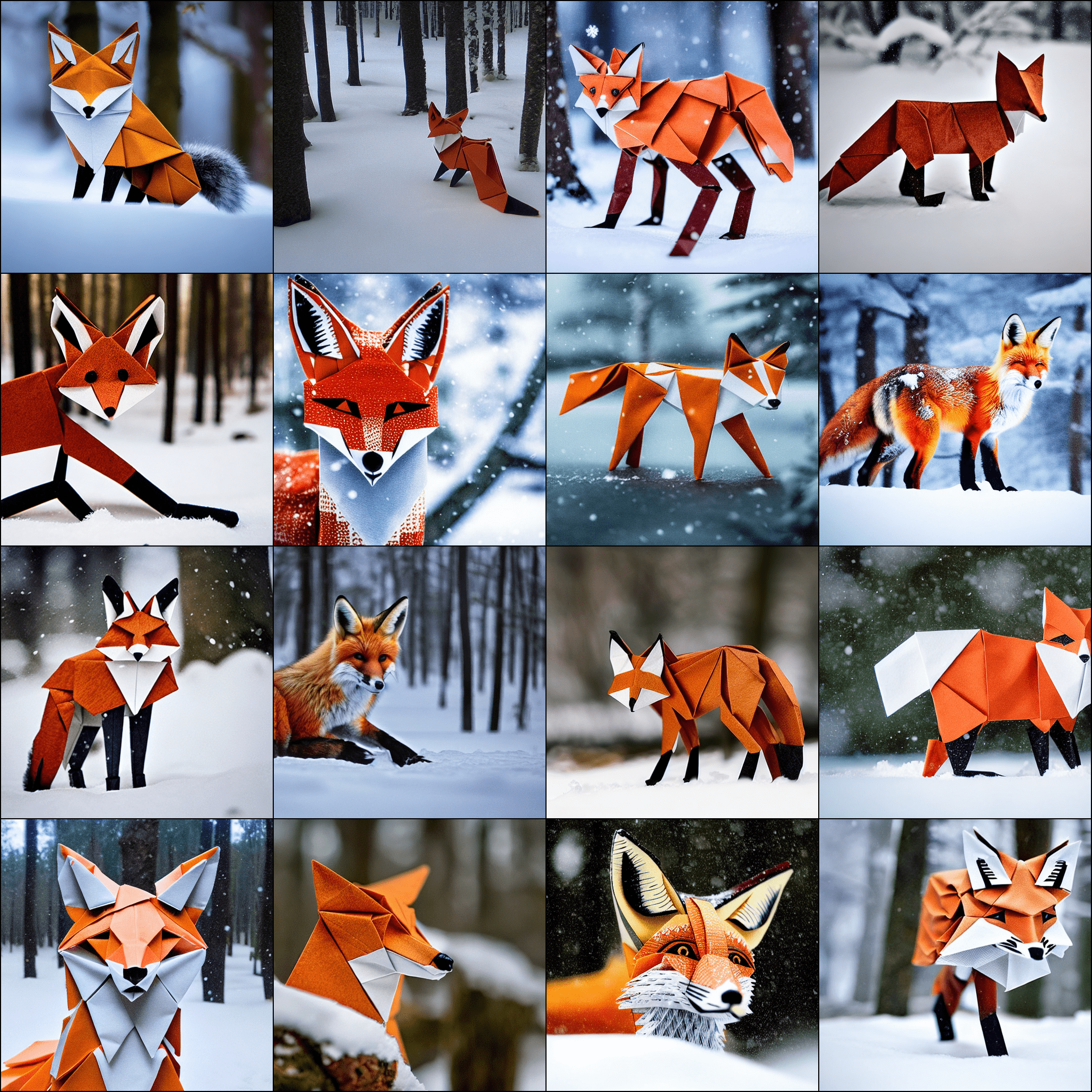}}
    \caption{Generated images with text prompt="Intricate origami of a fox in a snowy forest" and $w=8$.}
    \label{fig:app_fox}
\end{figure*}

\begin{figure*}[htbp]
    \centering
    \subfigure[Restart (Steps=66)]{\includegraphics[width=0.48\textwidth]{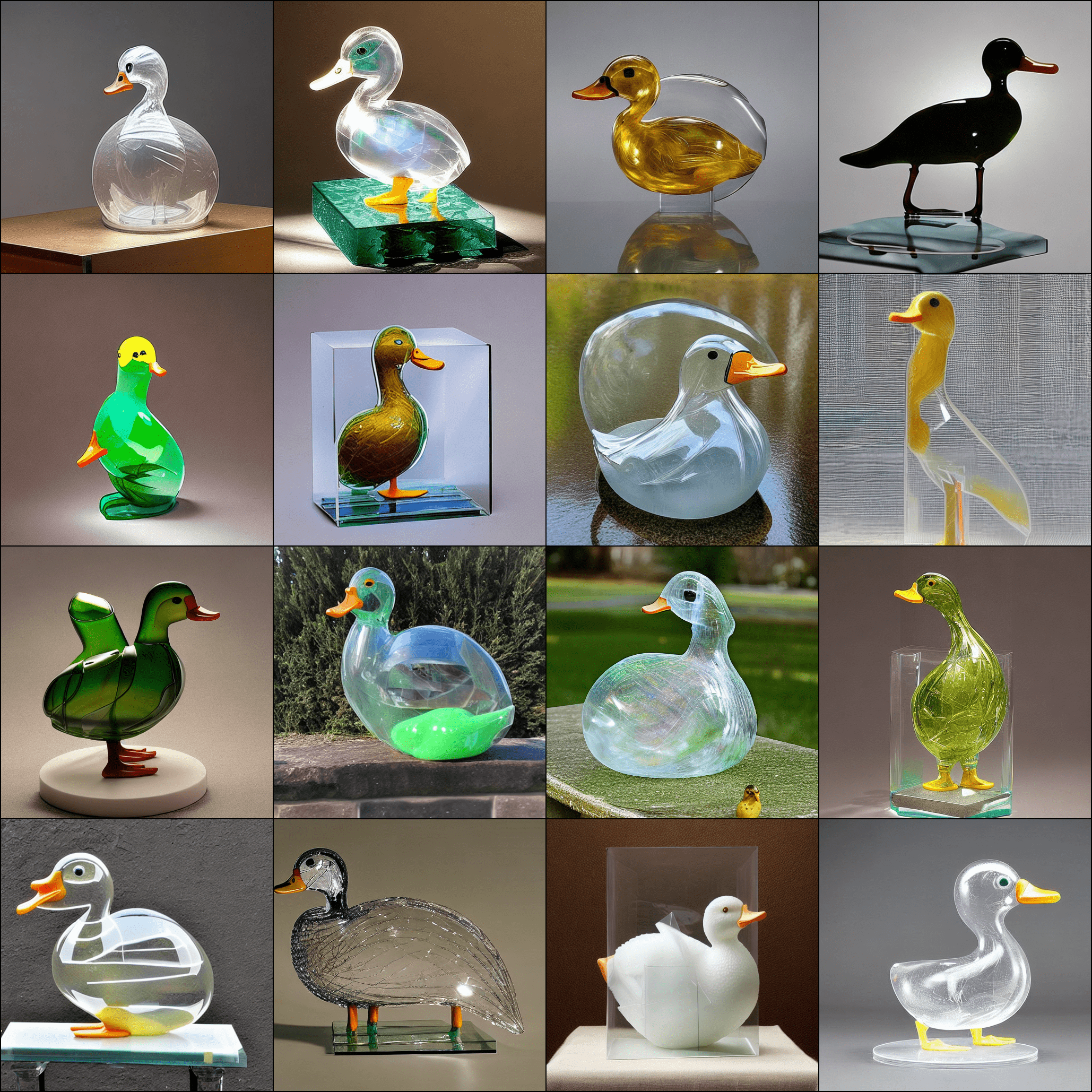}}\hfill
    \subfigure[DDIM (Steps=100)]{\includegraphics[width=0.48\textwidth]{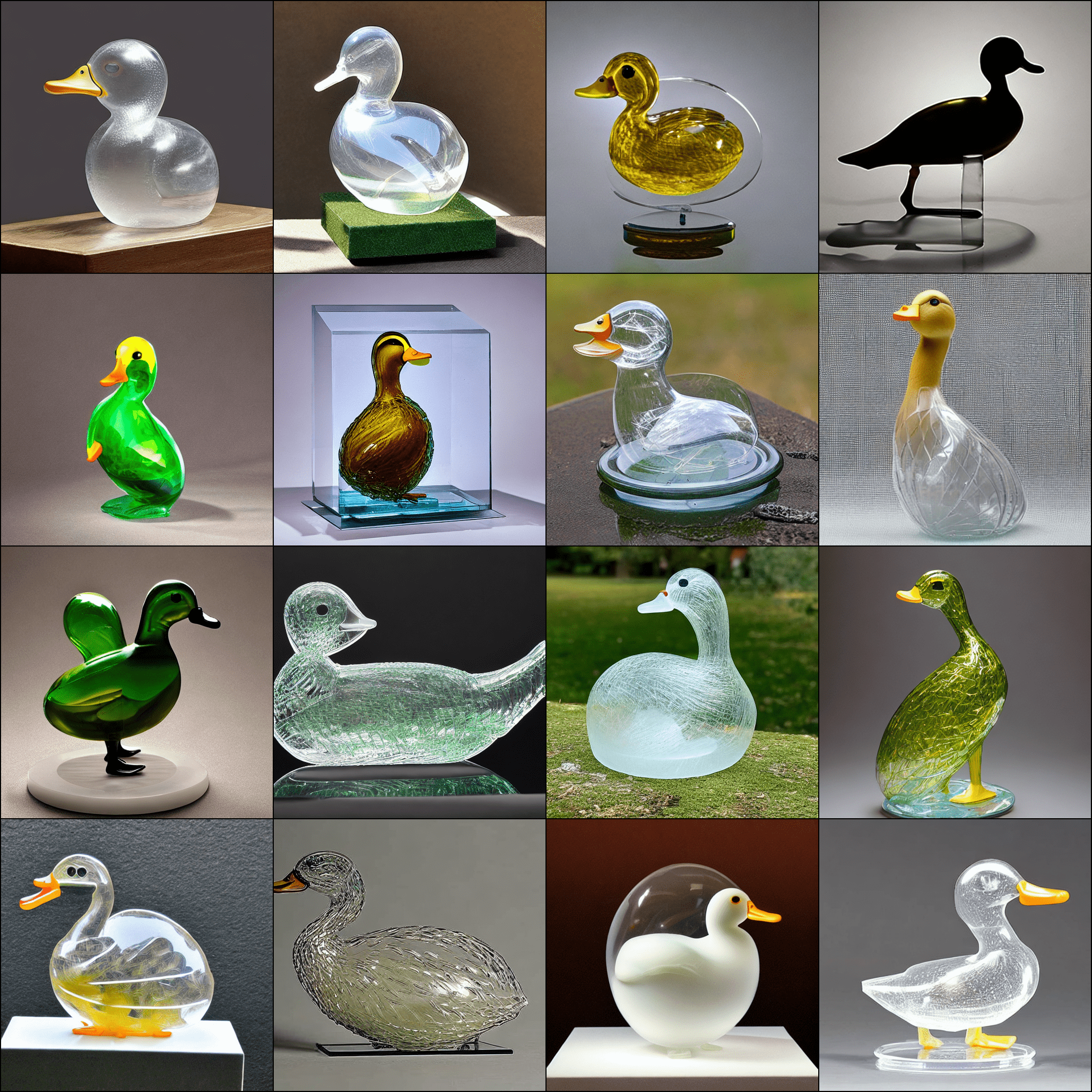}}
    \subfigure[Heun (Steps=101)]{\includegraphics[width=0.48\textwidth]{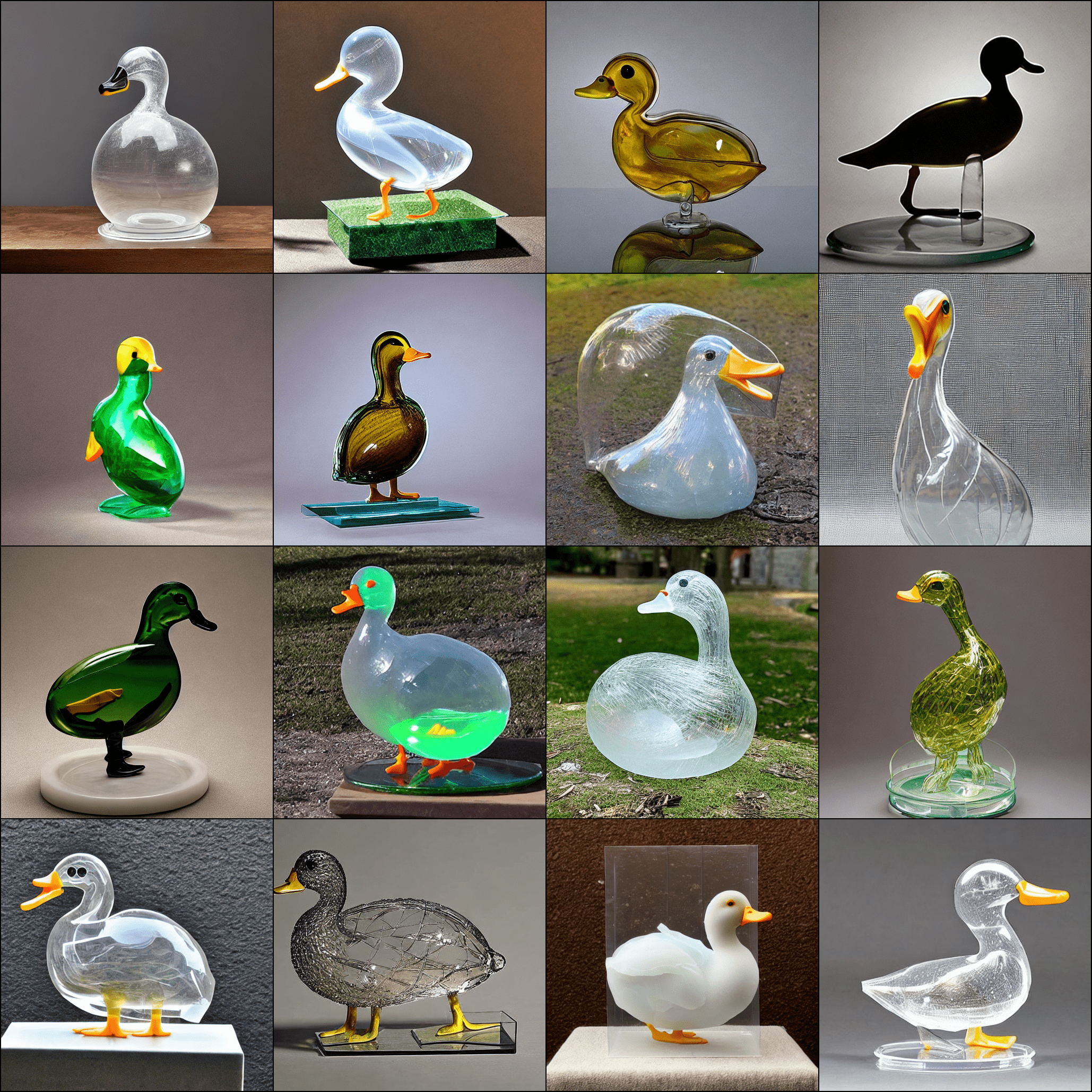}}
    \subfigure[DDPM (Steps=100)]{\includegraphics[width=0.48\textwidth]{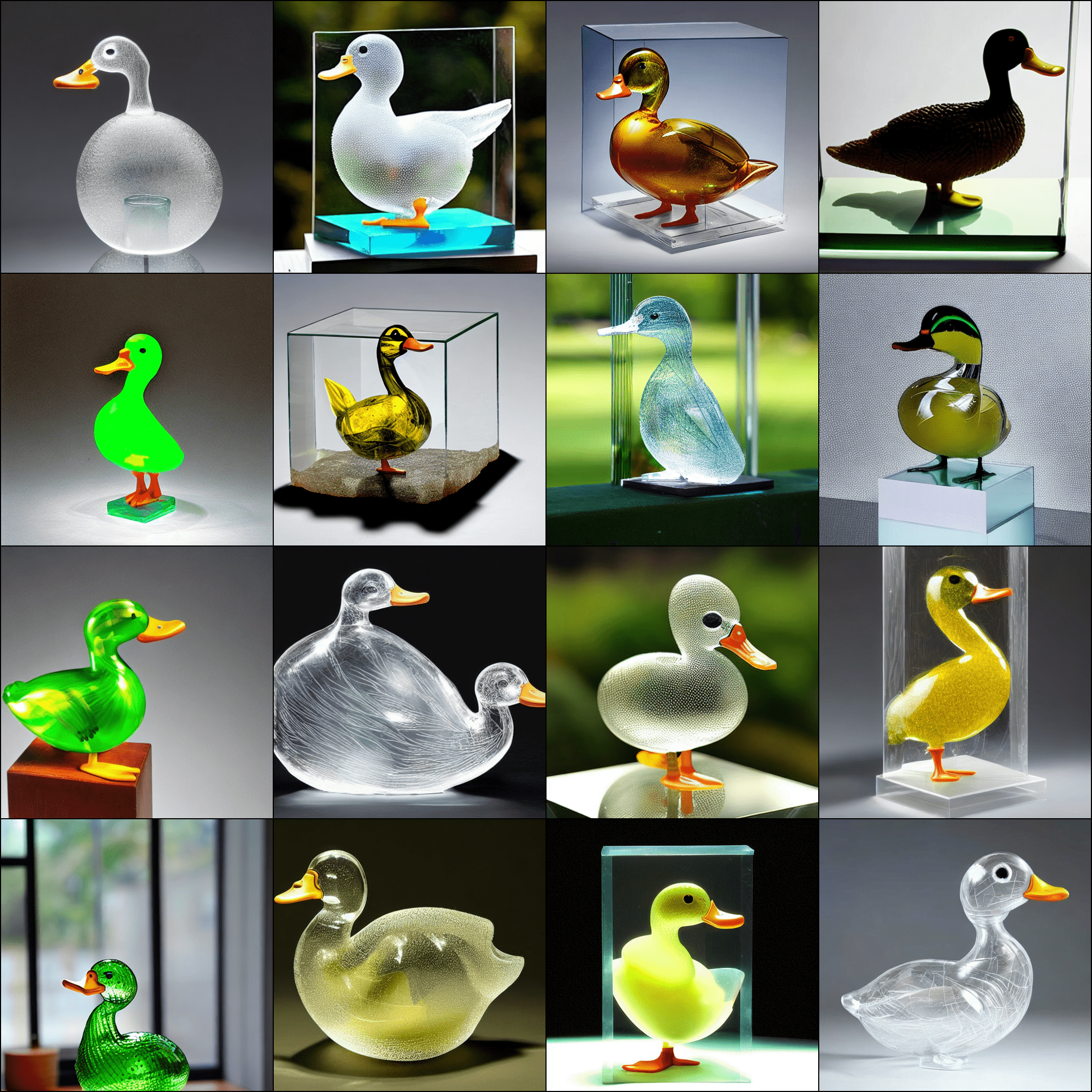}}
    \caption{Generated images with text prompt="A transparent sculpture of a duck made out of glass" and $w=8$.}
    \label{fig:app_duck}
\end{figure*}

\section{Heun's method is DPM-Solver-2~(with $r_2=1$)}

The first order ODE in DPM-Solver~\citep{lu2022dpm}~(DPM-Solver-1) is in the form of:
\begin{align*}
    \hat{x}_{t_{i-1}} = \frac{\alpha_{t_i}}{\alpha_{t_{i-1}}}\hat{x}_{t_{i-1}} - (\hat{\sigma}_{t_{i-1}}\frac{\alpha_{t_i}}{\alpha_{t_{i-1}}}-\hat{\sigma}_{t_i})\hat{\sigma}_{t_i}\nabla_x \log p_{\hat{\sigma}_{t_i}}(\hat{x}_{t_i}) \numberthis \label{eq:dpm}
\end{align*}

The first order ODE in EDM is in the form of
\begin{align*}
    x_{t_{i-1}} = x_{t_i} - (\sigma_{t_{i-1}} - \sigma_{t_i})\sigma_{t_i}\nabla_x \log p_{\sigma_{t_i}}(x_{t_i})\numberthis \label{eq:edm-ode}
\end{align*}

When $x_t=\frac{\hat{x}_t}{\alpha_t}$, $\hat{\sigma}_t = \sigma_t \alpha_t$, we can rewrite the DPM-Solver-1~(\Eqref{eq:dpm}) as:
\begin{align*}
    x_{t_{i-1}} &= x_{t_i} - (\sigma_{t_{i-1}} - \sigma_{t_i})\hat{\sigma}_{t_i}\nabla_x \log p_{\hat{\sigma}_{t_i}}(\hat{x}_{t_i})\\
     &= x_{t_i} - (\sigma_{t_{i-1}} - \sigma_{t_i})\hat{\sigma}_{t_i}\nabla_x \log p_{{\sigma}_{t_i}}({x}_{t_i})\frac{1}{\alpha_{t_i}} \qquad \text{(change-of-variable)}\\
     &=x_{t_i} - (\sigma_{t_{i-1}} - \sigma_{t_i}){\sigma}_{t_i}\nabla_x \log p_{{\sigma}_{t_i}}({x}_{t_i})
\end{align*}
where the expression is exact the same as the ODE in EDM~\cite{Karras2022ElucidatingTD}. It indicates that the sampling trajectory in DPM-Solver-1 is equivalent to the one in EDM, up to a time-dependent scaling~($\alpha_t$). As $\lim_{t\to 0}\alpha_t=1$, the two solvers will leads to the same final points when using the same time discretization. Note that the DPM-Solver-1 is also equivalent to DDIM~(\textit{c.f.} Section 4.1 in \cite{lu2022dpm}), as also used in this paper.

With that, we can further verify that the Heun's method used in this paper corresponds to the DPM-Solver-2 when setting $r_1=1$.

\section{Broader Impact}

The field of deep generative models incorporating differential equations is rapidly evolving and holds significant potential to shape our society. Nowadays, a multitude of photo-realistic images generated by text-to-image Stable Diffusion models populate the internet. Our work introduces Restart, a novel sampling algorithm that outperforms previous samplers for diffusion models and PFGM++. With applications extending across diverse areas, the Restart sampling algorithm is especially suitable for generation tasks demanding high quality and rapid speed. Yet, it is crucial to recognize that the utilization of such algorithms can yield both positive and negative repercussions, contingent on their specific applications. On the one hand, Restart sampling can facilitate the generation of highly realistic images and audio samples, potentially advancing sectors such as entertainment, advertising, and education. On the other hand, it could also be misused in \textit{deepfake} technology, potentially leading to social scams and misinformation. In light of these potential risks, further research is required to develop robustness guarantees for generative models, ensuring their use aligns with ethical guidelines and societal interests.

\end{document}